\def\sgdbias{\mathrm{SGDBias}}
\def\ridgebias{\mathrm{RidgeBias}}
\def\sgdvar{\mathrm{SGDVariance}}
\def\ridgevar{\mathrm{RidgeVariance}}
\def \Nr {N_{\mathrm{ridge}}}
\def \Ns {N_{\mathrm{sgd}}}
\titlespacing*{\section}{0pt}{*0.1}{*0.1}
\titlespacing*{\subsection}{0pt}{*0.1}{*0.1}
\newcommand{\qg}[1]{\textsf{\color{orange} QG: #1}}
\title{The Benefits of Implicit Regularization \\
from SGD in Least Squares Problems}
\author
{
Difan Zou\thanks{Equal Contribution} \\
University of California, Los Angeles\\
\texttt{ knowzou@cs.ucla.edu}
\And
	Jingfeng Wu$^*$\\
	Johns Hopkins University\\
	\texttt{uuujf@jhu.edu} 
\And
	Vladimir Braverman\\
	Johns Hopkins University\\
	\texttt{
vova@cs.jhu.edu}
\And
	Quanquan Gu\\ 
	 University of California, Los Angeles\\
	\texttt{ qgu@cs.ucla.edu}
\And
	Dean P. Foster\\
	Amazon\\
	\texttt{dean@foster.net}
\And
	Sham M. Kakade \\ University of Washington \& Microsoft Research\\
	\texttt{ sham@cs.washington.edu}
}
\begin{document}

\maketitle


\begin{abstract}

Stochastic gradient descent (SGD) exhibits strong algorithmic regularization effects in practice, which has been hypothesized to play an important role in the generalization of modern machine learning approaches. In this work, we seek to understand these issues in the simpler setting of linear regression (including both underparameterized and overparameterized regimes), where our goal is to make sharp instance-based comparisons of the implicit regularization afforded by (unregularized) average SGD with the explicit regularization of ridge regression.
For a broad class of least squares problem instances (that are natural in high-dimensional settings), we show:
(1) for every problem instance and for every ridge parameter, (unregularized) SGD, when provided with \emph{logarithmically} more samples than that provided to the ridge algorithm, generalizes no worse than the ridge solution (provided SGD uses a tuned constant stepsize);
(2) conversely, there exist instances (in this wide problem class) where optimally-tuned ridge regression requires \emph{quadratically} more samples than SGD in order to have the same generalization performance. Taken together, our results show that, up to the logarithmic factors, the generalization performance of SGD is always no worse than that of ridge regression in a wide range of overparameterized problems, and, in fact, could be much better for some problem instances. More generally, our results show how algorithmic regularization has important consequences even in simpler (overparameterized) convex settings.
\end{abstract}

\section{Introduction}

Deep neural networks often exhibit powerful generalization in numerous machine learning applications, despite being \emph{overparameterized}. It has been conjectured that the optimization algorithm itself, e.g., \emph{stochastic gradient descent} (SGD), implicitly regularizes such overparameterized models~\citep{zhang2016understanding}; here,  (unregularized) overparameterized models could admit numerous global and local minima (many of which generalize poorly~\citep{zhang2016understanding,liu2019bad}), yet SGD tends to find solutions that generalize well, even  in the absence of explicit regularizers~\citep{neyshabur2014search,zhang2016understanding,keskar2016large}.
This regularizing effect due to the choice of the optimization
algorithm is often referred to as \emph{implicit regularization}~\citep{neyshabur2014search}.

Before moving to the non-convex regime, we may hope to start by
understanding this effect in the (overparameterized) convex regime.  At least for linear
models, there is a
growing body of evidence suggesting that the implicit regularization
of SGD is closely related to an explicit, $\ell_2$-type of (ridge)
regularization~\citep{tihonov1963solution}.  For example, 
(multi-pass) SGD for linear regression converges to the \emph{minimum-norm interpolator},
which corresponds to the limit of the ridge solution with a vanishing
penalty~\citep{zhang2016understanding,gunasekar2018characterizing}.
Tangential evidence for this also comes from examining gradient
descent, where a continuous time (gradient flow) analysis shows how the
optimization path of gradient descent is (pointwise)
closely connected to an explicit, $\ell_2$-regularization~\citep{suggala2018connecting,ali2019continuous}. Similar results \citep{ali2020implicit} have been further extended to SGD, where a (early-stopped) continuous-time SGD is demonstrated to perform similarly to ridge regression with certain regularization parameters. 

However, as of yet, a precise comparison between the implicit regularization
afforded by SGD and the explicit regularization of ridge regression (in
terms of the \emph{generalization performance}) is still lacking, especially when the hyperparameters (e.g., stepsize for SGD and regularization parameter for ridge regression) are allowed to be tuned.
This motivates the central question in this work:
\begin{center}
    \emph{How does the generalization performance of SGD compare with that of ridge regression in least square problems?}
\end{center}
In particular, even in the arguably simplest
setting of linear regression, we seek to understand if/how SGD behaves differently from
using an explicit $\ell_2$-regularizer, with a particular focus
on the overparameterized regime.

\paragraph{Our Contributions.} Due to recent advances on sharp,
\emph{instance-dependent} excess risks bounds of both (single-pass) SGD
and ridge regression for overparameterized least square problems
\citep{tsigler2020benign,zou2021benign}, a nearly complete answer to
the above question is now possible using these tools.  In this work,
we deliver an \emph{instance-based} risk comparison between SGD and
ridge regression in several interesting settings, including one-hot
distributed data and Gaussian data.  In particular, for a broad class
of least squares problem instances that are natural in
high-dimensional settings, we show that
\begin{itemize}[leftmargin=*, nosep]
    \item For every problem instance and for every ridge parameter, (unregularized) SGD, when provided with \emph{logarithmically} more samples than that provided to ridge regularization, generalizes no worse than the ridge solution, provided SGD uses a tuned constant stepsize.
    \item Conversely, there exist instances in our problem class where optimally-tuned ridge regression requires \emph{quadratically} more samples than SGD  to achieve the same generalization performance.
\end{itemize}
 
Quite strikingly, the above results show that, up to some logarithmic
factors, the generalization performance of SGD is always no worse than
that of ridge regression in a wide range of overparameterized least
square problems, and, in fact, could be much better for some problem
instances. As a special case (for the above two claims), our problem class
includes a setting in which: (i) the
signal-to-noise is bounded and  (ii) the eigenspectrum decays at a
polynomial rate $1/i^\alpha$, for $0\leq\alpha\leq 1$ (which permits a
relatively fast decay).
This one-sided near-domination phenomenon (in these natural
overparameterized problem classes) could further support the preference for the implicit
regularization brought by SGD over  explicit
ridge regularization.

Several novel technical contributions are  made to make the above risk comparisons possible. For the one-hot data, we derive similar risk upper bound of SGD and risk lower bound of ridge regression.
For the Gaussian data, while a sharp risk bound of SGD is borrowed
from \citep{zou2021benign}, we prove a sharp lower bound of ridge
regression by adapting the proof techniques developed in
\citep{tsigler2020benign,bartlett2020benign}. 
By carefully comparing these upper and lower bound results (and
exhibiting particular instances to show that our sample size inflation
bounds are sharp), we are able to provide nearly complete conditions that characterize when SGD generalizes better than ridge regression.

\paragraph{Notation.}
For two functions $f(x) \ge 0$ and $g(x) \ge 0$ defined on $x > 0$,
we write $f(x) \lesssim g(x)$ if $f(x) \le c\cdot g(x)$ for some absolute constant $c > 0$; 
we write $f(x) \gtrsim g(x)$ if $g(x) \lesssim f(x)$;
we write  $f(x) \eqsim g(x)$ if $f(x) \lesssim g(x)\lesssim f(x)$.
For a vector $\wb \in \RR^d$ and a positive semidefinite matrix $\Hb \in \RR^{d\times d}$, we denote $\norm{\wb}_{\Hb} := \sqrt{ \wb^\top \Hb \wb }$.


\section{Related Work}

In terms of making sharp risk comparisons with ridge, the work of
\citep{dhillon2013risk} shows that OLS (after a PCA projection is applied to the data) is instance-wise competitive with ridge on fixed design problems. The insights in our analysis are draw from this work, though there are a number of technical challenges in dealing with the random design setting.  We start with a brief discussion of the technical advances in the analysis of ridge regression and SGD, and then briefly overview more related work comparing SGD to explicit norm-based regularization.
\paragraph{Excess Risk Bounds for Ridge Regression.}
In the underparameterized regime, the excess risk bounds for ridge regression has been well-understood~\citep{hsu2012random}. In the overparameterized regime, a large body of works \citep{dobriban2018high,hastie2019surprises,xu2019number,wu2020optimal} focused on characterizing the excess risk of ridge regression in the asymptotic regime where both the sample size $N$ and dimension $d$ go to infinite and $d/N\rightarrow \gamma$ for some finite $\gamma$.
More recently, \citet{bartlett2020benign} developed sharp non-asymptotic risk bounds for ordinary least square in the overparameterized setting, which are further extended to ridge regression by \citet{tsigler2020benign}. These bounds have additional interest because they are instance-dependent, in particular, depending on the data covariance spectrum. The risk bounds of ridge regression derived in \citet{tsigler2020benign} is highly nontrivial in the overparameterized setting as it holds when the ridge parameter equals to zero or even being negative. This line of results build one part of the theoretical tools for this paper. 

\paragraph{Excess Risk Bounds for SGD.}
Risk bounds for one-pass, constant-stepsize (average) SGD have been derived in the finite dimensional case \citep{bach2013non,defossez2015averaged,jain2017markov,jain2017parallelizing,dieuleveut2017harder,ali2019continuous}. 
Very recently, the work of \citep{zou2021benign} extends these analyses, providing
 sharp \emph{instance-dependent} risk bound applicable to the overparameterized regime; here, \citet{zou2021benign} provides nearly matching upper and lower excess risk bounds for constant-stepsize SGD, which are sharply characterized in terms of the full eigenspectrum of the population covariance matrix. 
This result plays a pivotal role in our paper.

\paragraph{Implicit Regularization of SGD vs. Explicit Norm-based Regularization.}

For least square problems, multi-pass SGD converges to the minimum-norm solution~\citep{neyshabur2014search,zhang2016understanding,gunasekar2018characterizing}, which is widely cited as (one of) the implicit bias of SGD.
However, in more general settings, e.g., convex but non-linear models, a (distribution-independent) norm-based regularizer is no longer sufficient to characterize the optimization behavior of SGD \citep{arora2019implicit,dauber2020can,razin2020implicit}. 
Those discussions, however, exclude the possibility of \emph{hyperparameter tuning}, e.g., stepsize for SGD and penalty strength for ridge regression, and are not instance-based, either.
Our aim in this paper is to provide instance-based excess risk comparison between the optimally tuned (one-pass) SGD and the optimally tuned ridge regression. 

\section{Problem Setup and Preliminaries}

We seek to compare the generalization ability of SGD and ridge algorithms for \emph{least square problems}.
We use $\xb\in\cH$ to denote a feature vector in a (separable) Hilbert space $\cH$.
We use $d$ to refer to the dimensionality of $\cH$, where $d = \infty$ if $\cH$ is infinite-dimensional.
We use $y\in\RR$ to denote a response that is generated by 
\begin{align*}
y = \la\xb,\wb^*\ra + \xi,
\end{align*}
where $\wb^*\in\cH$ is an unknown true model parameter and $\xi\in\RR$ is the model noise.
The following regularity assumption is made throughout the paper.
\begin{assumption}[Well-specified noise]\label{assump:model_noise}
The second moment of $\xb$, denoted by $\Hb := \EE[\xb \xb^\top]$, is strictly positive definite and has finite trace.
The noise $\xi$ is independent of $\xb$ and satisfies
\begin{align*}
\EE[\xi] = 0, \quad\mbox{and}\quad\EE[\xi^2]=\sigma^2.
\end{align*}
\end{assumption}

In order to characterize the interplay between $\wb^*$ and $\Hb$ in the excess risk bound, we introduce:
\begin{gather*}
  {\Hb}_{0:k} := \textstyle{\sum_{i=1}^k}\lambda_i\vb_i\vb_i^\top,\quad\mbox{and}\quad
  {\Hb}_{k:\infty} := \textstyle{\sum_{i> k}}\lambda_i\vb_i\vb_i^\top,
\end{gather*}
where $\{\lambda_i\}_{i=1}^\infty$ are the eigenvalues of $\Hb$ sorted in
non-increasing order and $\vb_i$'s are the corresponding eigenvectors. 
Then we define
\[
\|\wb\|^2_{\Hb_{0:k}^{-1}}=
\sum_{ i\le
  k}\frac{(\vb_i^\top\wb)^2}{\lambda_i}, \quad
\|\wb\|^2_{{\Hb}_{k:\infty}}=\sum_{i> k}\lambda_i (\vb_i^\top\wb)^2.
\]

The least squares problem is to estimate the true parameter $\wb^*$.
Assumption \ref{assump:model_noise} implies that $\wb^*$ is the unique solution that minimizes the \emph{population risk}:
\begin{align}\label{eq:population_risk}
L(\wb^*) = \min_{\wb\in\cH}L(\wb),\quad \text{where}\ L(\wb):= \frac{1}{2}\EE_{(\xb,y)\sim\cD}\big[(y - \la\wb,\xb\ra)^2\big].
\end{align}
Moreover we have that $L(\wb^*) = \sigma^2$.
For an estimation $\wb$ found by some algorithm, e.g., SGD or ridge regression, its performance is measured by the \emph{excess risk}, $L(\wb) - L(\wb^*)$.




\paragraph{Constant-Stepsize SGD with Tail-Averaging.}
We consider the constant-stepsize SGD with tail-averaging~\citep{bach2013non,jain2017markov,jain2017parallelizing,zou2021benign}: at the $t$-th iteration, a fresh example $(\xb_t,y_t)$ is sampled independently from the data distribution, and SGD makes the following update on the current estimator $\wb_{t-1}\in \cH$, 
\begin{align*}
\wb_t = \wb_{t-1} + \gamma\cdot\big(y_t - \la\wb_{t-1},\xb_t\ra\big)\xb_t,\  t=1,2,\ldots, \qquad \wb_0 = 0,
\end{align*}
where $\gamma>0$ is a constant stepsize.
After $N$ iterations (which is also the number of samples observed), SGD outputs the tail-averaged iterates as the final estimator:
\begin{align*}
    \wb_{\mathrm{sgd}}(N;\gamma) := \frac{2}{N}\sum_{t=N/2}^{N-1}\wb_t.
\end{align*}
In the underparameterized setting ($d < N$), constant-stepsize SGD with tail-averaging is known for achieving minimax optimal rate for least squares~\citep{jain2017markov,jain2017parallelizing}. 
More recently, \citet{zou2021benign} investigate the performance of constant-stepsize SGD with tail-averaging in the overparameterized regime ($d > N$), and establish \emph{instance-dependent}, nearly-optimal excess risk bounds under mild assumptions on the data distribution.
Notably, results from \citep{zou2021benign} cover underparameterized cases ($d<N$) as well.

 
\paragraph{Ridge Regression.}
Given $N$ i.i.d. samples $\{(\xb_i,y_i)\}_{i=1}^N$, let us denote $\Xb:= [\xb_1,\dots,\xb_N]^\top\in\RR^{N\times d}$ and $\yb := [y_1,\dots,y_N]^\top\in\RR^d$.
Then ridge regression outputs the following estimator for the true parameter~\citep{tihonov1963solution}:
\begin{align}\label{eq:ridge_solution1}
\wb_{\mathrm{ridge}}(N;\lambda) := \arg\min_{\wb\in\cH} \|\Xb\wb - \yb\|_2^2 + \lambda\|\wb\|_2^2,
\end{align}
where $\lambda$ (which could possibly be negative) is a regularization parameter.
We remark that the ridge regression estimator takes the following two equivalent form:
\begin{align}\label{eq:ridge_solution2}
\wb_{\mathrm{ridge}}(N;\lambda) 
=(\Xb^\top\Xb+\lambda\Ib_{d})^{-1}\Xb^\top\yb = \Xb^\top(\Xb\Xb^\top+\lambda\Ib_{N})^{-1}\yb.
\end{align}
The first expression is useful in the classical, underparameterized setting ($d < N$) \citep{hsu2012random};  
and the second expression is more useful in the overparameterized setting ($d > N$) where the empirical covariance $\Xb^\top \Xb$ is usually not invertible~\citep{kobak2020optimal,tsigler2020benign}. 
As a final remark, when $\lambda = 0$, ridge estimator reduces to the \emph{ordinary least square estimator} (OLS)~\citep{friedman2001elements}. 


\paragraph{Generalizable Regime.}
In the following sections we will make instance-based risk comparisons between SGD and ridge regression.
To make the comparison meaningful, we focus on regime where SGD and ridge regression are ``generalizable'', i.e, the SGD and the ridge regression estimators, with the optimally-tuned hypeparameters, can achieve excess risk that is smaller than the optimal population risk, i.e., $\sigma^2$. The formal mathematical definition is as follows.

\begin{definition}[Generalizability]\label{def:generalizable} 
Consider an algorithm $\mathtt{Alg}$ and a least squares problem instance $\mathtt{P}$.
Let $\mathtt{Alg}(n,\btheta)$ be the output of the algorithm when provided with  $n$ i.i.d. samples from the problem instance $\mathtt{P}$, and a set of hyperparameters $\btheta$ (that could be a function on $n$). 
Then we say that the algorithm $\mathtt{Alg}$ with sample size $n$ and hyperparameters configuration $\btheta$ is \emph{generalizable} on problem instance $\mathtt{P}$, if
\begin{align*}
\EE_{\mathtt{Alg}, \mathtt{P}} [L\big(\mathtt{Alg}(n,\btheta)\big)]-L(\wb^*) \leq \sigma^2,
\end{align*}
where the expectation is over the randomness of $\mathtt{Alg}$ and data drawn from the problem instance $\mathtt{P}$.
\end{definition}
Clearly, the generalizable regime is defined by conditions on both the sample size, hyperparameter configuration, the problem instance, and the algorithm. For example, in the $d$-dimensional setting with $\|\wb^*\|_2=O(1)$, the ordinary least squares (OLS) solution (ridge regression with $\lambda=0$), i.e.,  $\wb_{\mathrm{ridge}}(N;0)$ has $\cO(d\sigma^2/N)$ excess risk, then we can say that the ridge regression with regularization parameter $\lambda=0$ and sample size $N=\omega(d)$ is in the generalizable regime on all problem instances in $d$-dimension with $\|\wb^*\|_2=O(1)$.

\paragraph{Sample Inflation vs. Risk Inflation Comparisons.}
This work characterizes the \emph{sample inflation} of SGD, i.e., bounding the required sample size of SGD to achieve an instance-based comparable excess risk as ridge regression
(which is essentially the notion of Bahadur statistical efficiency~\cite{Bahadur1967RatesOC,doi:10.1137/1.9781611970630}).
Another natural comparison would be examining the \emph{risk inflation} of SGD, examining the instance-based increase in risk for any fixed sample size.
Our preference for the former is due to the relative instability of the risk with respect to the sample size (in some cases, given a slightly different sample size, the risk could rapidly change.).

\section{Warm-Up: One-Hot Least Squares Problems}
Let us begin with a simpler data distribution, the \emph{one-hot} data distribution. 
(inspired by settings where the input distribution is sparse). 
In detail, assume each input vector $\xb$ is sampled from the set of natural basis $\{\eb_1,\eb_2,\dots,\eb_d\}$ according to the data distribution given by $\PP \{ \xb = \eb_i\} = \lambda_i$, where $0 < \lambda_i \le 1$ and $\sum_i \lambda_i = 1$.
The class of one-hot least square instances is completely characterized by
the following problem set: \[
\big\{ (\wb^*; \lambda_1, \cdots, \lambda_d) :\ \wb^* \in \cH,\ \textstyle{\sum_{i}} \lambda_i = 1, \ 1 \ge  \lambda_1 \ge \lambda_2 \ge \dots > 0 \big\}.
\]
Clearly the population data covariance matrix is $\Hb = \diag(\lambda_1, \dots, \lambda_d)$.
The next two theorems give an instance-based sample inflation comparisons for this problem class.





\begin{theorem}[Instance-wise comparison, one-hot data]\label{thm:comparison_onehot}
Let $\wb_{\mathrm{sgd}}(N; \gamma)$ and $\wb_{\mathrm{ridge}}(N; \lambda)$ be the solutions found by SGD and ridge regression when using $N$ training examples. 
Then for any one-hot least square problem instance such that the ridge regression solution is generalizable and any $\lambda$, there exists a choice of stepsize $\gamma^*$ for SGD such that
\begin{align*}
L\big[\wb_{\mathrm{sgd}}(N_{\mathrm{sgd}}; \gamma^*)\big]-L(\wb^*)\lesssim L\big[\wb_{\mathrm{ridge}}(N_{\mathrm{ridge}};\lambda)\big] -L(\wb^*) < \sigma^2,
\end{align*}
provided the sample size of SGD satisfies
\begin{align*}
\Ns \ge \Nr.
\end{align*}
\end{theorem}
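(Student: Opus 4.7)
The plan is to exploit the coordinate-wise decoupling inherent in the one-hot data: since $\xb$ is always some basis vector $\eb_i$, both ridge regression and SGD reduce to $d$ independent one-dimensional estimation problems, where coordinate $i$ is estimated using only the (random) subset of samples with $\xb_t=\eb_i$. Let $N_i$ denote the number of such samples among $N$ i.i.d.\ draws, so that $N_i\sim \mathrm{Bin}(N,\lambda_i)$ with $\EE[N_i]=N\lambda_i$ and $\mathrm{Var}(N_i)=N\lambda_i(1-\lambda_i)$. I would use this decoupling to derive matching expressions for the two excess risks and then pick $\gamma^\ast$ so that the two match coordinate-by-coordinate.

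First, I would derive a tight two-sided expression for the ridge excess risk. Because $\Xb^\top \Xb=\mathrm{diag}(N_1,\ldots,N_d)$, the ridge estimator factorizes as $(\hat\wb^{\mathrm{ridge}})_i=(N_i+\lambda)^{-1}\sum_{t:\xb_t=\eb_i}(w_i^*+\xi_t)$, and a standard bias-variance decomposition gives
\begin{align*}
L(\hat\wb^{\mathrm{ridge}})-L(\wb^*)=\frac{1}{2}\sum_i\lambda_i\cdot\EE_{N_i}\!\left[\frac{\lambda^2(w_i^*)^2+N_i\sigma^2}{(N_i+\lambda)^2}\right].
\end{align*}
Using concentration of $N_i$ around $N\lambda_i$, plus a separate treatment of the event $\{N_i=0\}$, I would lower-bound each summand by its deterministic counterpart $\lambda_i\cdot\tfrac{\lambda^2(w_i^*)^2+N_{\mathrm{ridge}}\lambda_i\sigma^2}{(N_{\mathrm{ridge}}\lambda_i+\lambda)^2}$, up to constants. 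Partitioning the coordinates into the ``well-sampled'' set $\{i:N_{\mathrm{ridge}}\lambda_i\gtrsim\lambda\}$ and the ``poorly-sampled'' set isolates the bias and variance contributions cleanly.

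Next, I would derive a matching upper bound for SGD. On coordinate $i$, SGD updates only at the (random) $N_i$ time steps when $\xb_t=\eb_i$, and each update is the $1$D contraction $w\leftarrow(1-\gamma)w+\gamma y$; between such hits, $(\wb_t)_i$ is frozen. The tail average $\wb_{\mathrm{sgd}}$ thus picks up a \emph{time-weighted} average of these iterates, which, conditional on the update schedule, admits closed-form mean and variance. The key design choice is $\gamma^\ast\eqsim\min\{1/\lambda,\,1\}$ (the cap at $1$ is required for SGD stability since $\mathrm{tr}(\Hb)=1$): with this choice, the effective contraction $(1-\gamma^\ast)^{N_i}$ plays the role of the ridge shrinkage $\lambda/(N_i+\lambda)$ on each coordinate, and the steady-state SGD variance $\gamma^\ast\sigma^2$ matches the ridge variance contribution. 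Summing the per-coordinate bounds with $N_{\mathrm{sgd}}\ge N_{\mathrm{ridge}}$ gives an SGD excess risk that is at most a constant factor of the ridge bound from Step~1.

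The main obstacle is controlling the binomial randomness of $N_i$ and the tail-averaging operator simultaneously. On the ridge side, when $N_{\mathrm{ridge}}\lambda_i$ is small the event $\{N_i=0\}$ has constant probability and contributes the full bias $(w_i^*)^2$; any instance-wise lower bound must survive this fluctuation without losing absolute constants. On the SGD side, the tail average is taken over \emph{all} iterations, not only over the updates affecting coordinate $i$, so one must convert a time-weighted average into an update-count-weighted average and show that the weights concentrate sharply enough relative to $N_i$. Aligning these two sources of randomness so that the per-coordinate ridge lower bound and SGD upper bound differ only by an absolute constant is where the real work lies.
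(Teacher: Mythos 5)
Your overall architecture is the same as the paper's: exploit the diagonality of both $\Hb$ and $\Xb^\top\Xb$ to decouple the problem into $d$ scalar problems driven by $N_i\sim\mathrm{Bin}(N,\lambda_i)$, lower-bound the ridge risk per coordinate, upper-bound the SGD risk per coordinate, and take $\gamma^*\eqsim\min\{1,1/\lambda\}$ (the paper's two cases $\lambda<1$ and $\lambda\ge1$). However, one step fails as stated: your claim that each ridge \emph{variance} summand $\sigma^2\lambda_i\,\EE\big[N_i/(N_i+\lambda)^2\big]$ is lower-bounded, up to constants, by the deterministic surrogate $\sigma^2 N\lambda_i^2/(N\lambda_i+\lambda)^2$. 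This is false in the poorly-sampled regime $N\lambda_i\ll1$ with $\lambda\ll1$: take $\lambda=0$ and $N\lambda_i=\epsilon$, so the surrogate equals $\sigma^2/N$, whereas the true value is at most $\sigma^2\lambda_i\cdot\PP(N_i\ge1)\approx\sigma^2 N\lambda_i^2=\epsilon^2\sigma^2/N$ (since $N_i/N_i^2\le1$ on $\{N_i\ge1\}$), smaller by an arbitrarily large factor. Jensen does not apply here ($x\mapsto x/(x+\lambda)^2$ is not convex, unlike $1/(x+\lambda)^2$ used for the bias), and concentration of $N_i$ around $N\lambda_i$ is vacuous once $N\lambda_i<1$; your separate treatment of $\{N_i=0\}$ rescues only the bias (where the full $\lambda_i(w_i^*)^2$ is incurred), not the variance (which vanishes on that event). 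The correct per-coordinate bound in this regime, which the paper derives via $\PP(N_i=1)=N\lambda_i(1-\lambda_i)^{N-1}\gtrsim N\lambda_i$, is $\sigma^2 N\lambda_i^2/(1+\lambda)^2$. The gap is repairable: your SGD tail-variance term $\sigma^2 N(\gamma^*)^2\lambda_i^2=\sigma^2 N\lambda_i^2/\max\{1,\lambda\}^2$ is still dominated by the corrected bound, so the theorem survives, but the matching must be done against the right quantity. Relatedly, the partition for the variance argument should be at $N\lambda_i\gtrsim1$ (where Bernstein concentration of $N_i$ kicks in), not at $N\lambda_i\gtrsim\lambda$; the $\lambda$-threshold is the correct one only for the bias.

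A smaller remark on the SGD side: the ``time-weighted versus hit-weighted average'' difficulty you flag is real for your sample-path view, but the paper sidesteps it entirely by tracking only the diagonals of the bias and variance covariances, using that for one-hot data $\EE[\xb\xb^\top\Bb\xb\xb^\top]=\bar\Bb\Hb$ with $\bar\Bb=\diag(\Bb)$, which yields $\bar\Bb_t\preceq(\Ib-\gamma\Hb)\bar\Bb_{t-1}$ and $\bar\Cb_t\preceq(\Ib-\gamma\Hb)\bar\Cb_{t-1}+\gamma^2\sigma^2\Hb$ and feeds directly into the standard tail-averaged bias--variance decomposition without ever conditioning on the update schedule. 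Adopting that route avoids the concentration argument you anticipate needing.
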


Theorem \ref{thm:comparison_onehot} suggests that for \emph{every} one-hot problem instance, when provided with the same or more number of samples, the SGD solution with a properly tuned stepsize generalizes at most constant times worse than the optimally tuned ridge regression solution. 
In other words, with the same number of samples, SGD is \emph{always} competitive with ridge regression. 


\begin{theorem}[Best-case comparison, one-hot data]\label{thm:SGD>ridge_onehot}
There exists an one-hot least square problem instance satisfying $\|\wb^*\|_\Hb^2 = \sigma^2$, and a SGD solution with constant stepsize and sample size $N_{\mathrm{sgd}}$, such that for any ridge regression solution with sample size
\begin{align*}
\Nr \le \frac{\Ns^2}{\log^2(\Ns)},
\end{align*}
it holds that,
\begin{align*}
 L\big[\wb_{\mathrm{ridge}}(N_{\mathrm{ridge}};\lambda)\big]-L(\wb^*) \gtrsim L\big[\wb_{\mathrm{sgd}}(N_{\mathrm{sgd}}; \gamma^*)\big]-L(\wb^*).
\end{align*}
\end{theorem}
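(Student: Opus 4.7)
The plan is to exhibit a specific one-hot instance on which SGD, with a well-chosen constant stepsize, attains an (up-to-log) parametric rate $\tilde{\cO}(\sigma^2/N_{\mathrm{sgd}})$, while ridge regression at \emph{every} $\lambda$ is bounded below by $\Omega(\sigma^2/\sqrt{N_{\mathrm{ridge}}})$. Given the hypothesis $N_{\mathrm{ridge}}\le N_{\mathrm{sgd}}^2/\log^2 N_{\mathrm{sgd}}$, these two rates align up to an absolute constant, yielding the claimed inequality.

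For the construction, I would take a spectrum with a long, slowly decaying tail --- a natural candidate being $\lambda_i \propto 1/i$ supported on $d = \mathrm{poly}(N_{\mathrm{sgd}})$ coordinates (normalized so $\sum_i\lambda_i=1$), together with a signal $\wb^*$ whose coordinate-wise energies $(w_i^*)^2$ are chosen so that (a) $\|\wb^*\|_\Hb^2 = \sigma^2$ and (b) a substantial fraction of the signal sits on indices with $\lambda_i \ll 1/N_{\mathrm{ridge}}$, exactly the coordinates on which ridge's uniform shrinkage faces an unfavorable trade-off. The SGD upper bound then follows from the instance-dependent tail-averaged risk bound of \citet{zou2021benign}: because each coordinate evolves independently under one-hot data, tuning the constant stepsize $\gamma^*$ to balance the bias from the high-$\lambda_i$ head against the tail variance gives $L[\wb_{\mathrm{sgd}}] - L(\wb^*) \lesssim \sigma^2\log N_{\mathrm{sgd}}/N_{\mathrm{sgd}}$ on our instance.

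The heart of the proof is the uniform-in-$\lambda$ lower bound for ridge. On one-hot data the ridge estimator is coordinate-diagonal, $\hat w_i = \sum_{s:\xb_s=\eb_i} y_s/(k_i+\lambda)$, so the excess risk decomposes cleanly,
\begin{align*}
L(\hat\wb) - L(\wb^*) \;=\; \tfrac12\sum_i \lambda_i\!\left[\frac{\lambda^2 (w_i^*)^2}{(k_i+\lambda)^2} + \frac{k_i\,\sigma^2}{(k_i+\lambda)^2}\right].
\end{align*}
I would concentrate the multinomial counts $k_i$ around $N_{\mathrm{ridge}}\lambda_i$, partition coordinates into the three regimes $k_i \gg \lambda$, $k_i \ll \lambda$, and $k_i \asymp \lambda$, and show that for any choice of $\lambda$, either the bias (from the small-$k_i$ signal tail) or the variance (from the large-$k_i$ head) is $\Omega(\sigma^2/\sqrt{N_{\mathrm{ridge}}})$ on our construction. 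The negative-$\lambda$ regime, permitted as long as $\Xb^\top\Xb + \lambda I$ remains positive, is handled by an analogous case split. This step is adapted from the matrix-concentration and lower-bound techniques of \citet{bartlett2020benign,tsigler2020benign}.

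The main obstacle is precisely this uniformity in $\lambda$: the signal profile must be engineered so that no single penalty can simultaneously control bias on the low-$\lambda_i$ coordinates and variance on the high-$\lambda_i$ coordinates, and the transition regime $k_i \asymp \lambda$ --- where the multinomial fluctuations of $k_i$ are of the same order as its mean --- requires delicate treatment to avoid a loss in the bound. A secondary technical point is tracking logarithmic factors tightly enough in both Step~2 and Step~3 so that the inflation gap comes out as exactly the $\log^2 N_{\mathrm{sgd}}$ slack asserted in the theorem, rather than something weaker.
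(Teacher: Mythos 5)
There is a genuine gap, and it lies in the design of the hard instance rather than in the surrounding machinery. You place ``a substantial fraction of the signal'' on coordinates with $\lambda_i \ll 1/\Nr$ so that ridge's bias on the unobserved tail is large. But any signal energy $\sum_i \lambda_i(\wb^*[i])^2$ sitting on coordinates with $\lambda_i \Ns \ll 1$ is equally unlearnable by constant-stepsize SGD with $\Ns$ samples: the SGD bias bound contains the term $\big\|(\Ib-\gamma\Hb)^{\Ns/2}\wb^*\big\|^2_{\Hb_{k_1:\infty}}$, which for $\gamma\lambda_i \Ns \ll 1$ is essentially $\sum \lambda_i(\wb^*[i])^2$ with no contraction (and $\gamma\le 1$ here, so $\gamma\lambda_i\Ns\le\lambda_i\Ns$). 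Since the theorem must hold for $\Nr$ as large as $\Ns^2/\log^2(\Ns)$, the tail coordinates you need ($\lambda_i\ll 1/\Nr$) satisfy $\lambda_i\Ns \ll \log^2(\Ns)/\Ns$, so a constant fraction of $\|\wb^*\|_\Hb^2=\sigma^2$ there makes $\sgdbias=\Omega(\sigma^2)$ and destroys your Step~2 claim that $\mathrm{SGDRisk}\lesssim \sigma^2\log(\Ns)/\Ns$. Conversely, if you shrink that tail energy to $O(\sigma^2/\Ns)$ so as not to hurt SGD, it no longer yields the needed $\Omega(\sigma^2/\sqrt{\Nr})$ ridge bias for general $\Nr\le\Ns^2/\log^2(\Ns)$. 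Either way the construction cannot simultaneously deliver both sides of the separation.

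The mechanism that actually works --- and the one the paper uses --- is the opposite placement: \emph{all} of the signal sits on the single top coordinate (a spike $\lambda_1=\log(N)/\sqrt{N}$ with $\wb^*[1]=\sigma\sqrt{\sqrt{N}/\log N}$), and the tail carries \emph{no} signal but is spectrally heavy: $N-1$ flat eigenvalues of order $1/N$, so that $\sum_{i>1}\lambda_i^2=\Theta(1/N)$. SGD with $\gamma^*=N^{-1/2}$ then contracts the spike fully and pays only $O(\sigma^2/N)$ variance from the flat tail. For ridge, the lower bound is not a tail-signal bias at all: the head bias $\lambda^2\sigma^2/(\Nr\log(N)/\sqrt{N}+\lambda)^2$ forces $\lambda\lesssim \Nr\log(N)/\sqrt{N}$, while the noise variance from the many flat tail coordinates contributes $\gtrsim \sigma^2\Nr/(N\lambda^2)$, and AM--GM over these two competing terms gives $\Omega\big(\sigma^2/(\sqrt{\Nr}\log N)\big)$ uniformly in $\lambda$. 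Your per-coordinate ridge decomposition, the binomial concentration of the counts, and the final AM--GM step are all the right tools, but they need to be applied to this head-bias-versus-tail-variance tension; as written, your instance does not create that tension without also sinking SGD.
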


Theorem~\ref{thm:SGD>ridge_onehot} shows that for some one-hot least square instance, ridge regression, even with the optimally-tuned regularization, needs at least (nearly) quadratically more samples than that provided to SGD, in order to compete with the optimally-tuned SGD.
In other words, ridge regression could be much worse than SGD for one-hot least squares problems.

\begin{remark}
The above two results together indicate a \emph{superior} performance of the implicit regularization of SGD in comparison with the explicit regularization of ridge regression, for one-hot least squares problems. This is not the only case that SGD is always no worse than ridge estimator. In fact,
we will next turn to compare SGD with ridge regression for the class of Gaussian least square instances, where both SGD and ridge regression exhibit richer behaviors but SGD still exhibits superiority over the ridge estimator.
\end{remark}






\section{Gaussian Least Squares Problems}


In this section, we consider least squares problems with a Gaussian data distribution.
In particular, assume the population distribution of the input vector $\xb$ is Gaussian\footnote{We restrict ourselves to the Gaussian distribution for simplicity. Our results hold under more general assumptions, e.g., $\Hb^{-1/2}\xb$ has sub-Gaussian tail and independent components~\citep{bartlett2020benign} and is symmetrically distributed.}, i.e., $\xb \sim \cN(\boldsymbol{0},\Hb)$.
We further make the following regularity assumption for simplicity:
\begin{assumption}\label{assump:data_distribution}
$\Hb$ is strictly positive definite and has a finite trace.
\end{assumption}
Gaussian least squares problems are completely characterized by the following problem set $\big\{ (\wb^*; \Hb) :\ \wb^* \in \cH \big\}$.

The next theorem give an instance-based sample inflation comparison between SGD and ridge regression for Gaussian least squares instances.


\begin{theorem}[Instance-wise comparison, Gaussian data]\label{thm:SGD<ridge}
Let $\wb_{\mathrm{sgd}}(N; \gamma)$ and $\wb_{\mathrm{ridge}}(N; \lambda)$ be the solutions found by SGD and ridge regression respectively. Then under Assumption \ref{assump:data_distribution}, for any Gaussian least square problem instance such that the ridge regression solution is generalizable  and any $\lambda$, there exists a choice of stepsize $\gamma^*$ for SGD such that
\begin{align*}
L\big[\wb_{\mathrm{sgd}}(N_{\mathrm{sgd}}; \gamma^*)\big]-L(\wb^*)\lesssim L\big[\wb_{\mathrm{ridge}}(N_{\mathrm{ridge}};\lambda)\big] -L(\wb^*),
\end{align*}
provided the sample size of SGD satisfies
\begin{align*}
N_{\mathrm{sgd}} \ge (1+R^2) \cdot \kappa(\Nr) \cdot \log( a) \cdot \Nr,
\end{align*}
where
\[ 
\kappa(n)=\frac{\tr(\Hb)}{n\lambda_{\min\{n,d\}}},\quad 
R^2 = \frac{ \|\wb^*\|_\Hb^2} {\sigma^2},\quad a = 
\kappa(\Nr) R \sqrt{N}.
\] 
\end{theorem}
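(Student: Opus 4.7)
The plan is to leverage the matching bias/variance risk decompositions that are already sharp up to constants for both algorithms: the tail-averaged constant-stepsize SGD upper bound of \citet{zou2021benign}, and a matching ridge lower bound that one would obtain by adapting the arguments of \citet{tsigler2020benign,bartlett2020benign}. Both decompositions split the spectrum $\{\lambda_i\}$ at a critical index determined by an \emph{effective regularization} scale: $1/(\gamma \Ns)$ for SGD, and $\lambda^\star := \lambda + \sum_{i > k}\lambda_i$ for ridge (with $k$ the smallest index such that $\Nr \lambda_{k+1}$ is dominated by this quantity). The proof then reduces to a term-by-term comparison once the two effective regularization scales are aligned.

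First I would fix $\lambda$ and $\Nr$, extract the ridge critical dimension $k^\star$ and its effective regularization $\lambda^\star$, and then pick the SGD stepsize $\gamma^\star$ so that $1/(\gamma^\star \Ns) \eqsim \lambda^\star/\Nr$. This makes the SGD critical dimension coincide with $k^\star$, so both risk bounds see the same spectral split. The stability of SGD requires $\gamma^\star \lesssim 1/\tr(\Hb)$, so achieving this effective regularization forces $\Ns \gtrsim \tr(\Hb)\cdot \Nr/\lambda^\star$. In the generalizable regime one has $\lambda^\star \gtrsim \Nr \lambda_{\min\{\Nr,d\}}$ (otherwise the ridge variance of order $(\sigma^2/\Nr)(k^\star + \mathrm{tail})$ already exceeds $\sigma^2$), which yields $\Ns \gtrsim \kappa(\Nr)\Nr$, accounting for the $\kappa(\Nr)$ factor.

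With $(k^\star,\gamma^\star)$ fixed, both variances are of order $(\sigma^2/N)(k^\star + \mathrm{tail})$, and the alignment equates the tails $\gamma^{\star 2}\Ns^2 \sum_{i>k^\star}\lambda_i^2$ for SGD and $\Nr^2 \sum_{i>k^\star}\lambda_i^2/(\lambda^\star)^2$ for ridge up to constants. The biases, after the alignment, both take the form $(\lambda^\star/\Nr)^2\|\wb^*\|_{\Hb_{0:k^\star}^{-1}}^2 + \|\wb^*\|_{\Hb_{k^\star:\infty}}^2$ up to constants. So the variance-dominated case already needs only $\Ns \gtrsim \Nr$, while the bias-dominated case is where the $(1+R^2)$ prefactor enters: when bias dominates, the residual mismatch in constants between the SGD bias and the ridge bias is controlled by the signal-to-noise $R^2 = \|\wb^*\|_{\Hb}^2/\sigma^2$, so inflating by $(1+R^2)$ covers both regimes simultaneously.

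The main technical obstacle is the sharp ridge lower bound, since only the upper bound of \citet{tsigler2020benign} is readily available in the literature. I would adapt their Gaussian and random-matrix concentration arguments to lower bound the quadratic forms $\wb^{*\top}(\Xb^\top\Xb + \lambda\Ib)^{-1}\wb^*$ and $\tr(\Hb(\Xb^\top\Xb + \lambda\Ib)^{-2})$ on a high-probability event; the $\log(a)$ factor in the theorem is then the width of this concentration event, with the specific dependence on $a = \kappa(\Nr)R\sqrt{N}$ reflecting both the signal-to-noise and the conditioning of the empirical covariance. A secondary subtlety is checking that the chosen $\gamma^\star$ simultaneously satisfies the stability upper bound and the ``$\gamma^\star \Ns$ large enough'' lower bound needed for the SGD bias to decay down to the $(\lambda^\star/\Nr)^2$-level; the $\kappa(\Nr)$ factor is exactly the price of reconciling these two constraints.
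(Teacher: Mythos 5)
Your high-level skeleton --- compare the SGD upper bound of \citet{zou2021benign} against a new in-expectation ridge lower bound adapted from \citet{tsigler2020benign,bartlett2020benign}, split both risks at the ridge critical index $k^*$, and align the two effective regularizations by choosing $\gamma^*$ so that $1/(\gamma^*\Ns)\eqsim\tilde\lambda/\Nr$ with $\tilde\lambda:=\lambda+\sum_{i>k^*}\lambda_i$ --- is exactly the paper's ``Case I.'' But there is a genuine gap: the alignment is \emph{not always feasible}, and your proposal has no mechanism for the regime where it fails. When $\tilde\lambda\ll\tr(\Hb)$ (for instance near-OLS in the underparameterized regime, where $k^*$ can equal $d$ and $\tilde\lambda=\lambda$ can be arbitrarily small), the stability constraint $\gamma\lesssim1/\tr(\Hb)$ forces $\Ns\gtrsim\tr(\Hb)\Nr/\tilde\lambda$, which is \emph{not} controlled by $\kappa(\Nr)\Nr$; your claim that generalizability gives $\tilde\lambda\gtrsim\Nr\lambda_{\min\{\Nr,d\}}$ breaks down precisely there. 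The paper's proof handles this with a second case: cap $\gamma$ at $1/\tr(\Hb)$, accept that $1/(\gamma\Ns)>\tilde\lambda/\Nr$, and instead use the exponential contraction $(1-\gamma\lambda_{k^*})^{\Ns}$ in the head-bias term. The $\log a$ in the sample-size condition is exactly the number of multiples of $\kappa\Nr$ iterations needed to drive this factor below $1/a$, after which the residual head bias is at most $\tilde\lambda/\Nr\lor\lambda_{k^*}/(R\sqrt{\Nr})$, the second branch giving $\lambda_{k^*}^2\|\wb^*\|^2_{\Hb_{0:k^*}^{-1}}/(R^2\Nr)\le\|\wb^*\|^2_{\Hb}/(R^2\Nr)=\sigma^2/\Nr$, which is dominated by the ridge variance. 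Your attribution of $\log a$ to ``the width of the concentration event'' in the ridge lower bound is therefore wrong: the ridge lower bound the paper proves (in expectation, via constant-probability events and nonnegativity) carries no logarithmic factors, and with your accounting you would be unable to explain why the theorem's sample-size condition contains $\log a$ at all.

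A second, smaller misattribution: the $(1+R^2)$ factor does not arise from a ``residual mismatch in constants between the SGD bias and the ridge bias.'' It comes from the SGD \emph{variance} bound, whose prefactor is $(\sigma^2+\|\wb^*\|^2_{\Hb})/N=(1+R^2)\sigma^2/N$ rather than $\sigma^2/N$; the paper cancels it by the rescaling $\Ns\to(1+R^2)\Ns$, $\gamma\to\gamma/(1+R^2)$, which preserves $\gamma\Ns$ (hence the bias) while shrinking the variance by $(1+R^2)$. Once the alignment $\gamma\Ns=\Nr/\tilde\lambda$ holds, the SGD bias is already bounded by the ridge bias term-by-term (the exponential factors only help), with no $R^2$ needed on that side.
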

Note that the result in Theorem \ref{thm:SGD<ridge} holds for arbitrary $\lambda$. Then this theorem provides a sufficient condition for SGD such that it provably performs no worse than optimal ridge regression solution (i.e., ridge regression with optimal $\lambda$). Besides, we would also like to point out that the SGD stepsize $\gamma^*$ in Theorem \ref{thm:SGD<ridge} is only a function of the regularization parameter $\lambda$ and $\tr(\Hb)$, which can be easily estimated from training dataset without knowing the exact formula of $\Hb$.

Different from the one-hot case, here the required sample size for SGD depends on two important quantities: $R^2$ and $\kappa(\Nr)$. In particular,  $R^2=\|\wb^*\|_\Hb^2/\sigma^2$ can be understood as the \emph{signal-to-noise} ratio.
The quantity $\kappa(\Nr)$ characterizes the flatness of the eigenspectrum of $\Hb$ in the top $\Nr$-dimensional subspace, which clearly satisfies $\kappa(\Nr)\ge 1$.  Let us further explain why we have the dependencies on $R^2$ and $\kappa(\Nr)$ in the condition of the sample inflation for SGD.

A large $R^2$ emphasizes the problem hardness is more from the numerical optimization instead of from the statistic learning.
In particular, let us consider a special case where $\sigma=0$ and $R^2=\infty$, i.e., there is no noise in the least square problem, and thus solving it is purely a numerical optimization issue.
In this case, ridge regression with $\lambda=0$ achieves \emph{zero} population risk so long as the observed data can span the whole parameter space, but constant stepsize SGD in general suffers a non-zero risk in finite steps, thus cannot be competitive with the risk of ridge regression, which is as predicted by Theorem \ref{thm:SGD<ridge}.
From a learning perspective, a constant or even small $R^2$ is more interesting.


To explain why the dependency on $\kappa(\Nr)$ is unavoidable, we can consider a $2$-d dimensional example where 
\begin{align*}
\Hb =
\begin{pmatrix}
1 & 0  \\
0 & \frac{1}{\Nr\cdot \kappa(\Nr)}
\end{pmatrix}
,\quad
\wb^* =
\begin{pmatrix}
0   \\
\Nr \cdot\kappa(\Nr)
\end{pmatrix}.
\end{align*}
It is commonly known that for this problem, ridge regression with $\lambda=0$ can achieve $ \cO(\sigma^2/\Nr)$ excess risk bound \citep{friedman2001elements}. However, this problem is rather difficult for SGD since it is hard to learn the second coordinate of $\wb^*$ using gradient information (the gradient in the second coordinate is quite small). In fact, in order to accurately learn $\wb^*[2]$, SGD requires at least $\Omega(1/\lambda_2) = \Omega\big(\Nr\kappa(\Nr)\big)$ iterations/samples, which is consistent with our theory.

Then from Theorem \ref{thm:SGD<ridge} it can be observed that when the signal-to-noise ratio is nearly a constant, i.e., $R^2=\Theta(1)$, and the eigenspectrum of $\Hb$ does not decay too fast so that  $\kappa(\Nr)\le \mathrm{polylog}(\Nr)$, SGD provably generalizes no worse than ridge regression, provided with logarithmically more samples than that provided to ridge regression. More specifically, the following corollary gives a family of problem instances that are in this regime.
\begin{coro}\label{cor:SGD<ridge}
Under the same conditions as Theorem \ref{thm:SGD<ridge}, let $\Nr$ be the sample size of ridge regression. Consider the problem instance that satisfies $R^2=\Theta(1)$, $d=O(\Nr)$,   and $\lambda_i = 1/i^\alpha$ for some $\alpha\le 1$, then SGD, with a tuned stepsize $\gamma^*$,  provably generalizes no worse than any ridge regression solution in the generalizable regime if 
\begin{align*}
\Ns \ge \log^2(\Nr)\cdot\Nr.
\end{align*}
\end{coro}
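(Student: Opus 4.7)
The plan is to apply Theorem~\ref{thm:SGD<ridge} directly and verify that, under the hypotheses of the corollary, each of the three non-trivial factors in the sample-size requirement
\[
N_{\mathrm{sgd}} \ge (1+R^2)\cdot\kappa(\Nr)\cdot\log(a)\cdot\Nr
\]
is at most $O(\log \Nr)$, so their product is $O(\log^2(\Nr)\cdot \Nr)$, matching the bound asserted in the corollary. Since $R^2 = \Theta(1)$ by assumption, the factor $1 + R^2$ is $O(1)$ and can be absorbed; it then remains to bound $\kappa(\Nr)$ and $\log(a)$ separately.

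The key computation is to estimate $\kappa(\Nr) = \tr(\Hb)/(\Nr\,\lambda_{\min\{\Nr,d\}})$ under $\lambda_i = i^{-\alpha}$ with $\alpha \le 1$ and $d = O(\Nr)$. I would split on whether $\min\{\Nr,d\}$ equals $d$ or $\Nr$. When $d \le \Nr$, we have $\lambda_{\min\{\Nr,d\}} = d^{-\alpha}$ and, by a standard integral comparison, $\tr(\Hb) = \sum_{i=1}^d i^{-\alpha}$ is $\Theta(d^{1-\alpha}/(1-\alpha))$ when $\alpha < 1$ and $\Theta(\log d)$ when $\alpha = 1$. Substituting gives $\kappa(\Nr) = O(d/\Nr) = O(1)$ in the first subcase and $O(d\log(d)/\Nr) = O(\log \Nr)$ in the second. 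When $d > \Nr$ (still allowed because $d = O(\Nr)$ only bounds $d$ up to a constant factor), we have $\lambda_{\min\{\Nr,d\}} = \Nr^{-\alpha}$; the same integral estimate together with $d/\Nr = O(1)$ once again yields $\kappa(\Nr) = O((d/\Nr)^{1-\alpha}) = O(1)$ or $\kappa(\Nr) = O(\log d) = O(\log \Nr)$ in the two $\alpha$-regimes. In every subcase, $\kappa(\Nr) \lesssim \log \Nr$.

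With $R = \Theta(1)$ and $\kappa(\Nr) = O(\log \Nr)$ in hand, the quantity $a = \kappa(\Nr)\,R\,\sqrt{N}$ is at most polynomial in $\Nr$, so $\log(a) = O(\log \Nr)$. Multiplying the three estimates gives the claimed sufficient condition $\Ns \gtrsim \log^2(\Nr)\cdot \Nr$, and Theorem~\ref{thm:SGD<ridge} then delivers the excess-risk comparison, establishing the corollary. The only mildly technical portion of this argument is the case analysis for $\kappa(\Nr)$ across the two regimes of $\alpha$ and the relative sizes of $d$ and $\Nr$; everything else is direct substitution into the theorem, so I do not anticipate a genuine obstacle.
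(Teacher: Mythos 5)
Your proposal is correct and follows essentially the same route as the paper: apply Theorem~\ref{thm:SGD<ridge} and reduce the corollary to checking $\kappa(\Nr)\lesssim\log(\Nr)$ via the standard estimates $\tr(\Hb)\eqsim d^{1-\alpha}$ for $\alpha<1$ and $\tr(\Hb)\eqsim\log d$ for $\alpha=1$, with the remaining factors $1+R^2=O(1)$ and $\log(a)=O(\log\Nr)$ absorbed into the $\log^2(\Nr)$ budget. Your version is in fact slightly more explicit than the paper's (which collapses the $\min\{d,\Nr\}$ case split by simply using $\lambda_{\min\{d,\Nr\}}\gtrsim\Nr^{-\alpha}$ and leaves the $\log(a)$ bound implicit), but the substance is identical.
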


We would like to further point out that the comparison made in Corollary \ref{cor:SGD<ridge} concerns the worst-case result regarding $\wb^*$ (from the perspective of SGD), while SGD could perform much better if $\wb^*$ has a nice structure. For example, considering the same setting in Corollary \ref{cor:SGD<ridge} but assuming that the ground truth $\wb^*$ is drawn from a prior distribution that is rotation invariant, SGD can be no worse than ridge regression provided the same or larger sample size. We formally state this result in the following corollary. 

\begin{coro}\label{cor:SGD<ridge_random}
Under the same conditions as Corollary \ref{cor:SGD<ridge}, let $\Nr$ be the sample size of ridge regression. Consider the problem instance with random and rotation invariant $\wb^*$, then SGD with a tuned stepsize $\gamma^*$ provably generalizes no worse than any ridge regression solution in the generalizable regime if 
\begin{align*}
\Ns \ge \Nr.
\end{align*}
\end{coro}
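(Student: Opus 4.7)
The plan is to revisit the bias--variance decompositions that underlie Theorem~\ref{thm:SGD<ridge} and Corollary~\ref{cor:SGD<ridge}, but now take an additional expectation over the rotation-invariant prior on $\wb^*$. The key symmetry is that if $\wb^*$ is rotation-invariant, then in the eigenbasis $\{\vb_i\}$ of $\Hb$ we have $\EE_{\wb^*}[(\vb_i^\top \wb^*)^2]=\EE[\|\wb^*\|_2^2]/d$ for every $i$. Consequently, every $\wb^*$-dependent quadratic form that enters the SGD upper bound (from \citep{zou2021benign}) and the ridge lower bound (adapted from \citep{tsigler2020benign} as in the proof of Theorem~\ref{thm:SGD<ridge}) collapses to a spectrum-only quantity, e.g.\
\begin{align*}
\EE_{\wb^*}\|\wb^*\|_{\Hb_{0:k}^{-1}}^2 =\frac{\EE\|\wb^*\|_2^2}{d}\sum_{i\le k}\frac{1}{\lambda_i},\qquad \EE_{\wb^*}\|\wb^*\|_{\Hb_{k:\infty}}^2=\frac{\EE\|\wb^*\|_2^2}{d}\sum_{i> k}\lambda_i.
\end{align*}

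After this averaging step, I would substitute the polynomial decay $\lambda_i=1/i^\alpha$ ($\alpha\le 1$) and $d=O(\Nr)$ into both bias bounds, pick the SGD stepsize $\gamma^*\eqsim 1/\tr(\Hb)$, and align the SGD cutoff $k^*$ with the effective ridge cutoff at each admissible $\lambda$ (using the correspondence $\lambda\leftrightarrow 1/(\gamma^*\Ns)$ from the deterministic analysis). The sums $\sum_{i\le k}i^\alpha \lesssim k^{1+\alpha}$ and $\sum_{i>k}i^{-\alpha}\lesssim d^{1-\alpha}\vee \log d$ are now free of the worst-case alignment factor $\|\wb^*\|_{\Hb_{0:k}^{-1}}^2$ that produced the inflation $(1+R^2)\kappa(\Nr)\log(a)$ in Theorem~\ref{thm:SGD<ridge}. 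Combined with the fact that the variance contributions of SGD and ridge are independent of $\wb^*$ and already match up to constants once $\Ns\gtrsim \Nr$, this yields the required $\Ns\ge \Nr$ without any logarithmic overhead.

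The main obstacle is ensuring the comparison remains uniform in the ridge parameter $\lambda$: for every $\lambda$ producing a generalizable ridge solution one must exhibit a single stepsize $\gamma^*$ whose averaged SGD bias--variance decomposition dominates the averaged ridge decomposition by only a constant. I would handle this by matching term-by-term at the canonical cutoff $k^*(\lambda)$, and by separately treating the boundary regimes $\alpha<1$ and $\alpha=1$ since the tail sum $\sum_{i>k}i^{-\alpha}$ has distinct asymptotics in these cases. A smaller but non-trivial issue is handling possibly negative $\lambda$ in the ridge lower bound, which is already accommodated by the lemmas from \citep{tsigler2020benign} invoked inside the proof of Theorem~\ref{thm:SGD<ridge} and therefore transfers without change.
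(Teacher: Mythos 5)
Your proposal is correct and follows essentially the same route as the paper: the key step in both is that rotation invariance makes $\EE[(\wb^*[i])^2]$ identical across coordinates, so all $\wb^*$-dependent quadratic forms in the SGD upper bound and ridge lower bound collapse to spectrum-only sums, which under $\lambda_i=i^{-\alpha}$, $d=O(\Nr)$, $R^2=\Theta(1)$ and $\gamma^*\eqsim 1/\tr(\Hb)$ all reduce to the common scale $\EE[(\wb^*[i])^2]\cdot \Nr^{1-\alpha}\eqsim\sigma^2$, eliminating the $\kappa\log$ inflation. The paper simply sets $k_1=k_2=\Nr$ and bounds both risks by this single quantity rather than matching term-by-term at the ridge cutoff, but this is only a difference in bookkeeping.
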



The next theorem shows that, in fact, for some instances, SGD could perform much better than ridge regression, as for the one-hot least square problems.


\begin{theorem}[Best-case comparison, Gaussian data]\label{thm:bestcase_gaussian}
There exists a Gaussian least square problem instance satisfying 
$R^2=1$ 
and $\kappa(N_{\mathrm{sgd}})=\Theta(1)$,
and an SGD solution with a constant stepsize and sample size $N_{\mathrm{sgd}}$, such that for any ridge regression solution (i.e., any $\lambda$) with sample size
\begin{align*}
\Nr \le \frac{\Ns^2}{\log^2(\Ns)},
\end{align*}
it holds that,
\begin{align*}
 L\big[\wb_{\mathrm{ridge}}(N_{\mathrm{ridge}};\lambda)\big]-L(\wb^*) \gtrsim L\big[\wb_{\mathrm{sgd}}(N_{\mathrm{sgd}}; \gamma^*)\big]-L(\wb^*).
\end{align*}
\end{theorem}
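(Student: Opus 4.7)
The plan is to exhibit an explicit Gaussian instance $(\Hb,\wb^*)$ together with a tuned constant stepsize $\gamma^*$ and sample size $\Ns$ such that: (i) by the sharp instance-based SGD upper bound of \citet{zou2021benign}, SGD achieves excess risk of order $\sigma^2/\Ns$; and (ii) by a ridge lower bound that is uniform in the regularization parameter $\lambda$, adapted from the techniques of \citet{tsigler2020benign,bartlett2020benign}, every ridge estimator with $\Nr \le \Ns^2/\log^2(\Ns)$ samples has excess risk at least a constant multiple larger. The blueprint mirrors the one-hot separation in Theorem~\ref{thm:SGD>ridge_onehot}; the novelty is translating it to a random Gaussian design while keeping $R^2=1$ and $\kappa(\Ns)=\Theta(1)$.

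For the construction, I would take $d \eqsim \Ns$ with a nearly flat top block $\lambda_i = \Theta(1/\Ns)$ for $i \le \Ns$, so that $\tr(\Hb)=\Theta(1)$ and $\kappa(\Ns)=\Theta(1)$ hold automatically, and place the signal $\wb^*$ on a single eigendirection $\vb_j$ with $\lambda_j = \Theta(1/\Ns)$ and magnitude tuned so that $\|\wb^*\|_{\Hb}^2=\sigma^2$, giving $R^2=1$. The stepsize is set to $\gamma^* \eqsim 1$, which puts the signal coordinate at the critical learning rate $\gamma^* \lambda_j \Ns = \Theta(1)$. Substituting into the bias--variance decomposition of \citet{zou2021benign}, the SGD bias decays as $e^{-\Omega(\gamma^* \lambda_j \Ns)}$ and the SGD variance is of order $\gamma^* \sigma^2 \cdot d_{\mathrm{eff}}/\Ns$; both are $\Theta(\sigma^2/\Ns)$ on this instance.

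For the ridge side, I decompose the excess risk into a bias term $\mathrm{B}(\lambda)$ and a variance term $\mathrm{V}(\lambda)$ using the sharp identities in \citet{tsigler2020benign}. The bias on $\vb_j$ decays only polynomially in the ratio $\Nr\lambda_j/(\Nr\lambda_j+\lambda)$, so keeping it below $\sigma^2/\Ns$ forces $\lambda \lesssim \Nr\lambda_j$; but then the variance over the flat top block satisfies $\mathrm{V}(\lambda) \gtrsim \sigma^2 \Ns/\Nr$. Minimizing $\mathrm{B}(\lambda)+\mathrm{V}(\lambda)$ over $\lambda$ and plugging in $\Nr \le \Ns^2/\log^2(\Ns)$ gives a lower bound of order $\sigma^2/\Ns$, which strictly dominates the SGD rate up to constants.

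The main obstacle is making the ridge lower bound uniform in \emph{every} $\lambda\in\RR$, including negative values close to $-\lambda_{\min}(\Xb^\top\Xb)$, where the estimator is nearly singular and the naive bias/variance inequalities break down. I plan to split the analysis into two ranges: for $\lambda \ge -c\Nr\lambda_{\Ns}$ with a small absolute constant $c$, adapt the sharp inequalities of \citet{tsigler2020benign} together with random-matrix concentration of $\Xb^\top\Xb$ around $\Nr\Hb$ (as in \citealp{bartlett2020benign}) to bound both terms coordinate-wise; for the complementary range, lower-bound the variance directly via a deterministic estimate on the smallest singular value of $\Xb$, which is well-controlled by Gaussian concentration. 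Tracking absolute constants across both ranges and matching them with the SGD rate from step (i) delivers the required separation and closes the argument.
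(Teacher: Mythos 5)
There is a genuine gap, and it is in the construction itself: your instance makes SGD \emph{slow}, not fast. You place the entire signal on an eigendirection with $\lambda_j=\Theta(1/\Ns)$, normalize so that $\lambda_j(\wb^*[j])^2=\sigma^2$, and run SGD at the "critical" scale $\gamma^*\lambda_j\Ns=\Theta(1)$. Plugging this into the bias term of the SGD upper bound (Theorem~\ref{thm:generalization_error_tail}) gives, with $k_1\ge j$,
\begin{align*}
\frac{(1-\gamma^*\lambda_j)^{2\Ns}}{(\gamma^*\Ns)^2}\cdot\frac{(\wb^*[j])^2}{\lambda_j}
= e^{-\Theta(1)}\cdot\frac{\sigma^2}{(\gamma^*\lambda_j\Ns)^2}=\Theta(\sigma^2),
\end{align*}
and the $k_1<j$ branch gives $(1-\gamma^*\lambda_j)^{2\Ns}\lambda_j(\wb^*[j])^2=\Theta(\sigma^2)$ as well. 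Since the admissible stepsize is capped at $\gamma\lesssim 1/\tr(\Hb)=\Theta(1)$, the quantity $\gamma\lambda_j\Ns$ can never exceed $O(1)$ on your instance, and the function $e^{-2x}/x^2$ is bounded below by a constant on $(0,C]$; so no choice of stepsize yields bias $o(\sigma^2)$. (The variance is also $\Theta(\sigma^2)$ at $\gamma^*\eqsim 1$: with $d\eqsim\Ns$ flat eigenvalues, $\Ns\gamma^2\sum_i\lambda_i^2=\Theta(1)$ and $k_2/\Ns=\Theta(1)$ for $k_2=d$.) So the claimed $\Theta(\sigma^2/\Ns)$ rate for SGD is false on this instance, and without it the whole separation collapses --- indeed, with $d\eqsim\Ns$ the instance is information-theoretically hard for everyone at sample size $\Ns$.

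The paper's construction inverts your placement of the signal: it puts $\wb^*=\sigma\eb_1$ on the \emph{top} eigenvalue $\lambda_1=1$, so that with $\gamma=\log(\Ns)/(2\Ns)$ one gets $\gamma\lambda_1\Ns=\Theta(\log\Ns)$ and the exponential actually kills the bias, $(1-\gamma)^{2\Ns}\eqsim 1/\Ns$. The difficulty for ridge is then manufactured not in the top block but by a long flat \emph{tail}: $\lambda_i=1/(\Ns\log\Ns)$ for $1<i\le\Ns^2$, so that $\sum_{i>k}\lambda_i^2\gtrsim 1/\log^2\Ns$ for every $k\le\Ns$, and the AM--GM tradeoff $\tilde\lambda^2/\Nr^2+\Nr\sum_{i>k}\lambda_i^2/\tilde\lambda^2\gtrsim 1/(\Nr^{1/2}\log\Ns)$ defeats every $\lambda$ simultaneously; the case $\kr>\Ns$ is handled by the $\kr/\Nr$ term. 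Your tail is only $d\eqsim\Ns$ long, which is within reach of $\Nr\le\Ns^2/\log^2\Ns$ samples, so it cannot force the needed lower bound by itself. Your attention to uniformity in $\lambda$ (including near-singular negative values) is a reasonable concern, but it is secondary: the lower bound used in the paper is stated for $\lambda\ge 0$, and in any case the construction must be repaired first.
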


Besides the instance-wise comparison, it is also interesting to see under what condition SGD can provably outperform ridge regression, i.e., achieving comparable or smaller excess risk using the \emph{same} number of samples. The following theorem shows that this occurs when the signal-to-noise ratio $R^2$ is a constant and there is only a small fraction of $\wb^*$ living in the tail eigenspace of $\Hb$. 
\begin{theorem}[SGD outperforms ridge regression, Gaussian data]\label{thm:general_good_case}
Let $N_{\mathrm{ridge}}$ be sample size of ridge regression and $k^* = \min\big\{k:\lambda_k\le\frac{\tr(\Hb)}{\Nr\log(\Nr)}\big\}$, then if $R^2=\Theta(1)$, and
\begin{align*}
\sum_{i=k^*+1}^{\Nr}\lambda_i(\wb^*[i])^2 \lesssim\frac{k^*\|\wb^*\|_{\Hb}^2}{\Nr},
\end{align*}
for any ridge regression solution that is generalizable and any $\lambda$, there exists a choice of stepsize $\gamma^*$ for SGD such that
\begin{align*}
L\big[\wb_{\mathrm{sgd}}(N_{\mathrm{sgd}}; \gamma^*)\big]-L(\wb^*)\lesssim L\big[\wb_{\mathrm{ridge}}(N_{\mathrm{ridge}};\lambda)\big]-L(\wb^*)
\end{align*}
provided the sample size of SGD satisfies
\begin{align*}
\Ns\ge\Nr.
\end{align*}
\end{theorem}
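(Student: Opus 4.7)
The plan is to reuse the framework behind the proof of Theorem~\ref{thm:SGD<ridge}, namely the sharp instance-dependent upper bound on the risk of SGD from~\citep{zou2021benign} and the sharp instance-dependent lower bound on the risk of ridge regression developed earlier in the paper, but to exploit the additional structural hypothesis on $\wb^*$ to eliminate the $\kappa(\Nr)\log^2(a)$ sample-size inflation and replace it by the flat condition $\Ns\ge\Nr$.

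Concretely, I would first decompose $L[\wb_{\mathrm{ridge}}(\Nr;\lambda)]-L(\wb^*)$ via the ridge lower bound into three pieces: a head-bias term supported on the top-$k^*$ eigendirections, a tail-bias term of order $\sum_{i>k^*}\lambda_i(\wb^*[i])^2$, and a variance term of order $\sigma^2 k^*/\Nr$, where the relevant effective regularization is $\lambda_{\mathrm{eff}} := \lambda + \tr(\Hb_{k^*:\infty})/\Nr$, which is at least $\tr(\Hb)/(\Nr\log\Nr)$ by the definition of $k^*$. I would then choose $\gamma^*$ so that the SGD effective dimension at $\Ns=\Nr$ matches $k^*$ up to constants, i.e.\ via the correspondence $1/(\Ns\gamma^*)\asymp\lambda_{\mathrm{eff}}$ subject to the stability cap $\gamma^*\lesssim 1/\tr(\Hb)$ (both constraints are compatible precisely because of where the cutoff $\tr(\Hb)/(\Nr\log\Nr)$ sits).

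Invoking the SGD upper bound and splitting its bias according to $i\le k^*$, $k^*<i\le\Nr$, and $i>\Nr$, the head piece and the variance give contributions that match the corresponding ridge terms by the same comparison used in Theorem~\ref{thm:SGD<ridge}, and the tail piece $i>\Nr$ is absorbed by the ridge tail bias on the same range. What remains is the middle piece $k^*<i\le\Nr$, which is bounded by a constant multiple of $\sum_{k^*<i\le\Nr}\lambda_i(\wb^*[i])^2$; by the hypothesis of the theorem and $R^2=\Theta(1)$ this is at most $\sigma^2 k^*/\Nr$, i.e.\ of the same order as the ridge variance. Summing the three pieces yields the desired instance-wise inequality.

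The main obstacle is precisely this middle-range SGD bias: without any assumption on $\wb^*$, the SGD bias on directions $k^*<i\le\Nr$ is exactly what forces the $\kappa(\Nr)\log^2(a)$ inflation in Theorem~\ref{thm:SGD<ridge}, because SGD with only $\Nr$ samples cannot suppress these small-eigenvalue directions as sharply as ridge regression does through its explicit penalty. The structural hypothesis $\sum_{k^*<i\le\Nr}\lambda_i(\wb^*[i])^2\lesssim k^*\|\wb^*\|_{\Hb}^2/\Nr$ is tailored exactly to cancel this obstruction by pinning the middle-block signal energy at a variance scale. The residual bookkeeping---verifying stability of $\gamma^*$, alignment of the SGD and ridge effective dimensions at $\Ns=\Nr$, and uniformity of the comparison over all $\lambda$---then follows by the same arguments used in the proof of Theorem~\ref{thm:SGD<ridge}.
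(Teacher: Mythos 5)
Your proposal follows essentially the same route as the paper's proof: the paper likewise plays the SGD upper bound (bias split at $k^*$, variance index at $k^*_{\mathrm{ridge}}$) against the ridge lower bound, uses the hypothesis on $\sum_{k^*<i\le \Nr}\lambda_i(\wb^*[i])^2$ together with $R^2=\Theta(1)$ to absorb the middle-block SGD bias into the ridge variance scale $\sigma^2 k^*/\Nr$, and disposes of the large-$\lambda$ regime by reusing Case I of Theorem~\ref{thm:SGD<ridge}. The only point worth flagging is the stepsize: the paper simply fixes $\gamma^*=1/\tr(\Hb)$ and obtains the head-block suppression $(1-\gamma^*\lambda_i)^{\Ns}\lesssim 1/\Ns$ from the cutoff $\lambda_{k^*}\gtrsim \tr(\Hb)\log(\Nr)/\Nr$, whereas your balancing rule $1/(\Ns\gamma^*)\asymp\lambda_{\mathrm{eff}}$ with $\lambda_{\mathrm{eff}}$ as small as $\tr(\Hb)/(\Nr\log \Nr)$ would push $\gamma^*$ a $\log$ factor above the stability cap, so you should instead pin $\gamma^*$ at the cap and verify the exponential suppression of the head directions directly.
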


\paragraph{Experiments.}
We perform experiments on Gaussian least square problem. We consider $6$ problem instances, which are the combinations of $2$ different covariance matrices $\Hb$: $\lambda_i=i^{-1}$ and $\lambda_i=i^{-2}$; and $3$ different true model parameter vectors $\wb^*$: $\wb^*[i]=1$, $\wb^*[i]=i^{-1}$, and $\wb^*[i] = i^{-10}$. Figure \ref{fig1} compares the required sample sizes of ridge regression and SGD that lead to the same population risk  on these $6$ problem instances, where the hyperparameters (i.e., $\gamma$ and $\lambda$) are fine-tuned to achieve the best performance. We have two key observations: (1) in terms of the worst problem instance for SGD (i.e., $\wb^*[i]=1$), its sample size is only  worse than ridge regression up to nearly constant factors (the curve is nearly linear); and (2) SGD can significantly outperform ridge regression when the true model $\wb^*$ mainly lives in the head eigenspace of $\Hb$ (i.e., $\wb^*[i]=i^{-10}$). The empirical observations are pretty consistent with our theoretical findings and again demonstrate the benefit of the implicit regularization of SGD. 

\begin{figure}[!t]
\vskip -0.1in
     \centering
     \subfigure[$\lambda_i=i^{-1}$]{\includegraphics[width=0.45\textwidth]{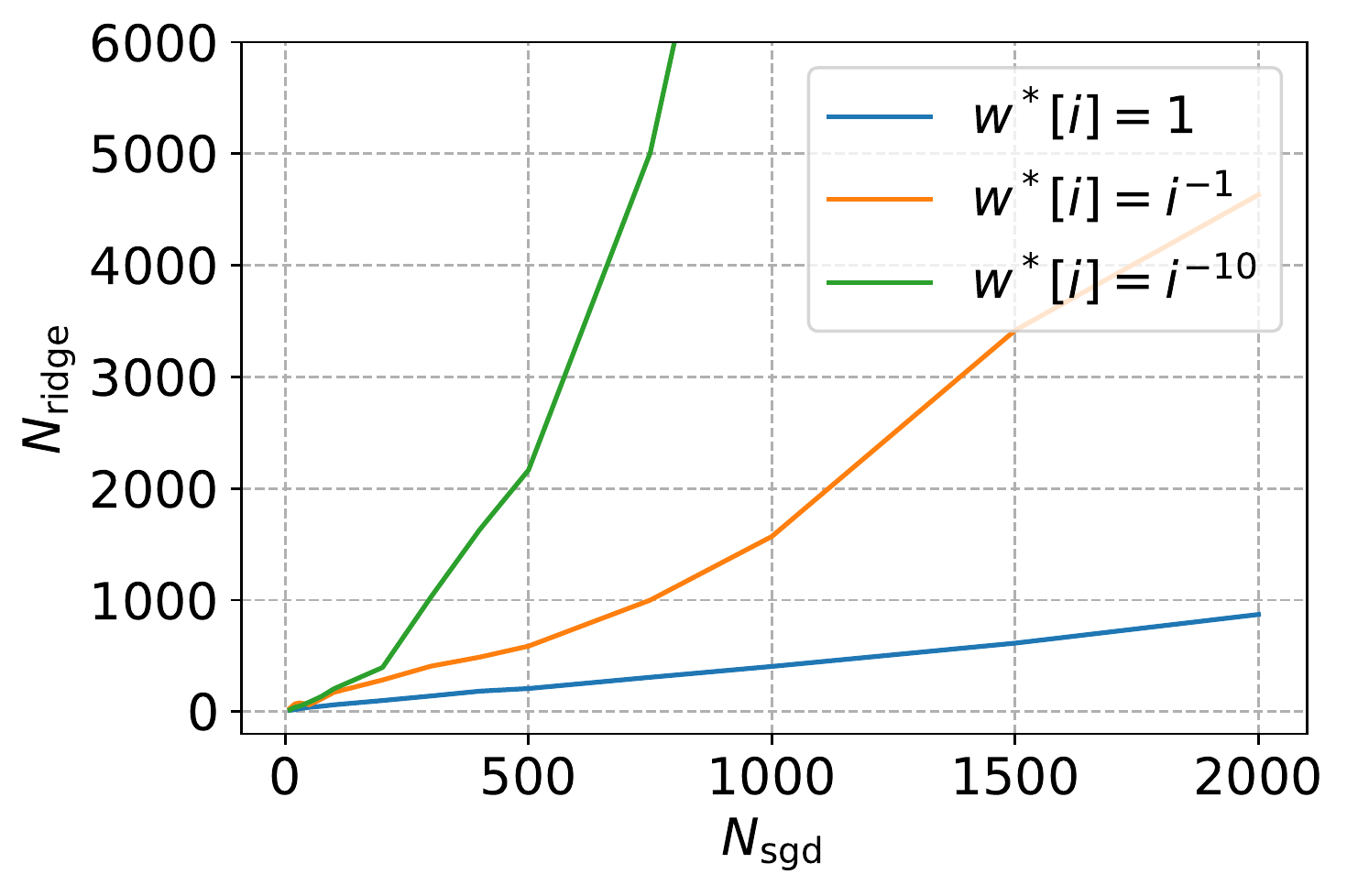}}
      \subfigure[$\lambda_i=i^{-2}$]{\includegraphics[width=0.45\textwidth]{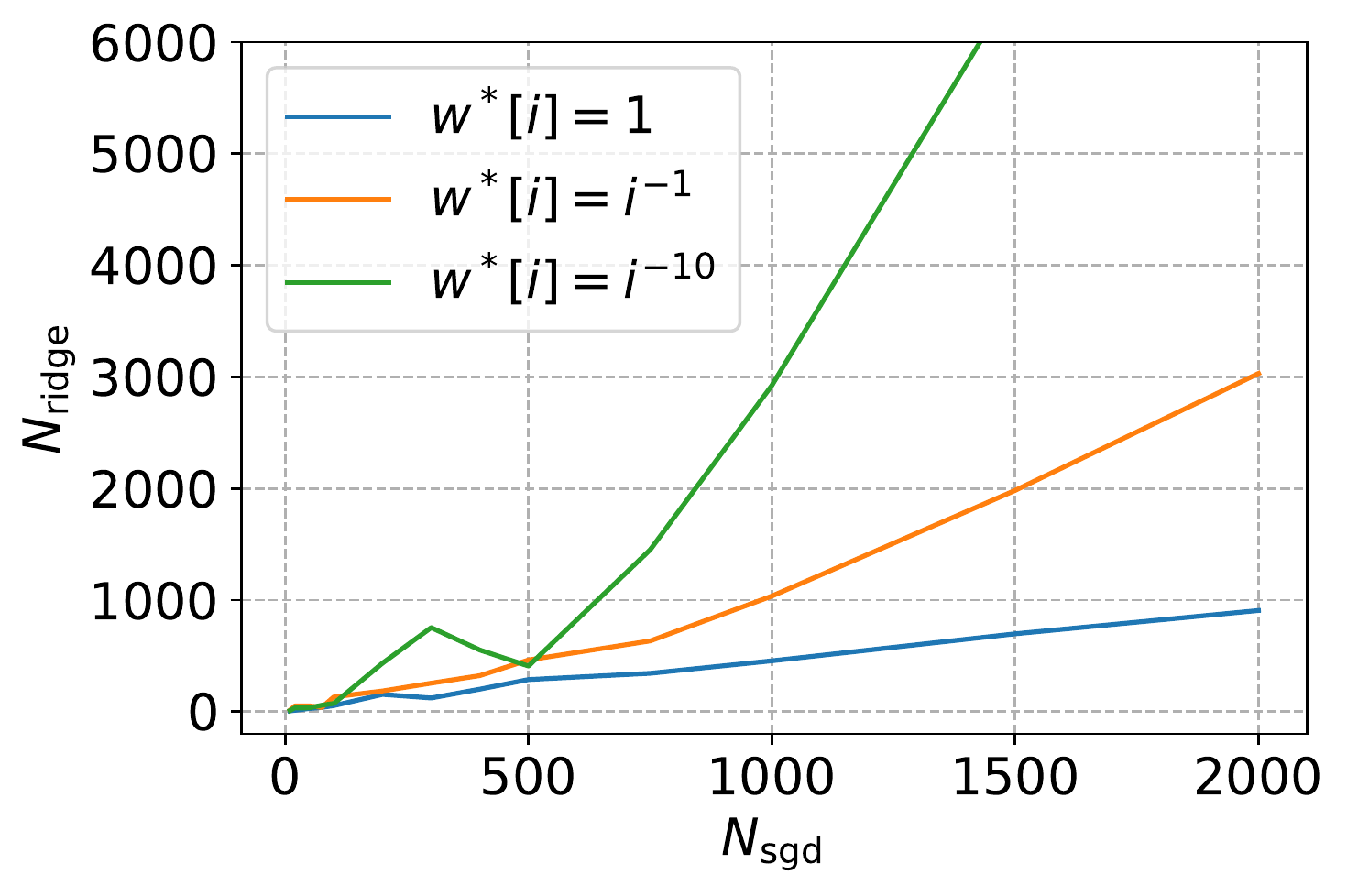}}
    
    \caption{Sample size comparison between SGD and ridge regression, where the stepsize $\gamma$ and regularization parameter $\lambda$ are fine-tuned to achieve the best performance. The problem dimension is $d=200$ and the variance of model noise is $\sigma^2=1$. We consider $6$ combinations of $2$ different covariance matrices and $3$ different ground truth model vectors. The plots are averaged over $20$ independent runs.}
    \label{fig1}
  
\end{figure}
\section{An Overview of the Proof}
In this section, we will sketch the proof of main Theorems for Gaussian least squares problems.  Recall that we aim to show that provided certain number of training samples, SGD is guaranteed to generalize better than ridge regression. Therefore, we will compare the risk \textit{upper bound} of SGD \citep{zou2021benign}  with the risk \textit{lower bound} of ridge regression \citep{tsigler2020benign}\footnote{The lower bound of ridge regression in our paper is a tighter variant of the lower bound in \citet{tsigler2020benign} since we consider Gaussian case and focus on the expected excess risk. \citet{tsigler2020benign} studied the sub-Gaussian case and established a high-probability risk bound.}. In particular, we first provide the following informal lemma summarizing the aforementioned risk bounds of SGD and ridge regression.

\begin{lemma}[Risk bounds of SGD and ridge regression, informal]\label{lemma:riskbound_informal}
Suppose Assumptions \ref{assump:model_noise} and \ref{assump:data_distribution} hold and $\gamma\le1/\tr(\Hb)$, then SGD has the following risk upper bound for arbitrary $k_1,k_2\in[d]$,
\begin{align}\label{eq:sgdupperbound}
\mathrm{SGDRisk}
&\lesssim \underbrace{\frac{1}{\gamma^2\Ns^2}\cdot\big\|\exp(-\Ns\gamma\Hb )\wb^*\big\|_{\Hb_{0:k_1}^{-1}}^2 + \|\wb^*\big\|_{\Hb_{k_1:\infty}}^2}_{\mathrm{SGDBiasBound}} \notag\\
&\qquad + \underbrace{(1+R^2)\sigma^2\cdot\bigg(\frac{k_2}{\Ns}+\Ns\gamma^2\sum_{i>k_2}\lambda_i^2\bigg)}_{\mathrm{SGDVarianceBound}}.
\end{align}
Additionally, ridge regression has the following risk lower bound for a constant $\tilde\lambda$, depending on $\lambda$, $\Nr$, and $\Hb$, and $k^*=\min\{k:\Nr\lambda_k\lesssim\tilde\lambda\}$
\begin{align}\label{eq:ridgelowerbound}
\mathrm{RidgeRisk}&\gtrsim \underbrace{\rbr{\frac{\tilde{\lambda}}{\Nr}}^2 \nbr{\wb^*}^2_{\Hb^{-1}_{0:k^*}} + \nbr{\wb^*}^2_{\Hb_{k^*:\infty}}}_{\mathrm{RidgeBiasBound}}
    + \underbrace{\sigma^2\cdot \rbr{\frac{k^*}{\Nr} + \frac{\Nr}{\tilde{\lambda}^2} \sum_{i>k^*}\lambda_i^2 }}_{\mathrm{RidgeVarianceBound}}.
\end{align}

\end{lemma}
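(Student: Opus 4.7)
\textbf{Proof plan for Lemma~\ref{lemma:riskbound_informal}.} The two bounds are proved independently via bias-variance decompositions tailored to each algorithm. For SGD, the upper bound \eqref{eq:sgdupperbound} will be obtained by directly invoking the instance-dependent analysis of \citet{zou2021benign}. For ridge regression, I would derive the expectation-valued lower bound \eqref{eq:ridgelowerbound} by adapting the sub-Gaussian, high-probability argument of \citet{tsigler2020benign} to the Gaussian regime, where exact second-moment computations tighten the constants.

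First, for SGD: tail-averaged constant-stepsize SGD admits a linear decomposition into a noise-free \emph{bias} component (driven by the mismatch between $\wb_0 = 0$ and $\wb^*$) and a zero-initialization \emph{variance} component (driven by $\xi_t$), because the centered update $\wb_t - \wb^* = (\Ib - \gamma \xb_t \xb_t^\top)(\wb_{t-1} - \wb^*) + \gamma \xi_t \xb_t$ is affine in both. Taking expectations along the tail average and splitting coordinates into a head of size $k_1$ and its complement produces the two bias terms: the factor $\exp(-\Ns \gamma \Hb)$ arises as the expectation of iterated contractions measured in the $\Hb_{0:k_1}^{-1}$-norm, while the prefactor $1/(\gamma^2 \Ns^2)$ comes from averaging over the final $\Ns/2$ iterates. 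The variance term is a quadratic form in the noise covariance, split again at an arbitrary index $k_2$: the head gives the dimension-scaling piece $\sigma^2 k_2/\Ns$ (directions where the iterates have equilibrated), and the tail gives $\sigma^2 \Ns \gamma^2 \sum_{i>k_2}\lambda_i^2$ (directions where the iterates still grow). The factor $1+R^2$ encodes the multiplicative-noise coupling between the bias energy and the variance. The constraint $\gamma \le 1/\tr(\Hb)$ is used precisely to ensure contractivity in expectation. All ingredients are already in place in \citet{zou2021benign}; the lemma is obtained by keeping $k_1, k_2$ as free parameters.

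Second, for ridge: write $\wb_{\mathrm{ridge}}(\Nr; \lambda) - \wb^* = -\lambda \Ab^{-1} \wb^* + \Ab^{-1} \Xb^\top \boldsymbol{\xi}$ where $\Ab := \Xb^\top \Xb + \lambda \Ib$, so that under $\boldsymbol{\xi} \sim \cN(\boldsymbol{0}, \sigma^2 \Ib_N)$ the excess risk splits into a bias term $\propto \lambda^2 \EE\|\Ab^{-1}\wb^*\|_\Hb^2$ and a variance term $\propto \sigma^2\,\EE[\tr(\Hb\Ab^{-1}\Xb^\top \Xb\Ab^{-1})]$. Define the effective regularization $\tilde\lambda$ implicitly so that $(\Nr \Hb + \tilde\lambda \Ib)^{-1}$ captures the head spectrum of $\Ab^{-1}$, in the spirit of \citet{tsigler2020benign}; the threshold $k^* = \min\{k : \Nr \lambda_k \lesssim \tilde\lambda\}$ partitions the spectrum into a regime where the signal is partially recoverable (head) and one where it is essentially unobserved (tail). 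Lower-bounding each piece: the bias in the head contributes $(\tilde\lambda/\Nr)^2 \|\wb^*\|^2_{\Hb_{0:k^*}^{-1}}$ via the coordinate-wise shrinkage $\approx \tilde\lambda/(\Nr \lambda_i + \tilde\lambda)$, the bias in the tail contributes $\|\wb^*\|^2_{\Hb_{k^*:\infty}}$ (these directions are barely moved from $\wb_0=0$), the variance in the head contributes $\sigma^2 k^*/\Nr$, and the variance in the tail contributes $\sigma^2 \Nr/\tilde\lambda^2 \cdot \sum_{i>k^*} \lambda_i^2$. The Gaussianity of $\xb$ lets us replace sub-Gaussian concentration by exact moment calculations, giving matching constants.

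The main obstacle is the matching anti-concentration for the sample covariance $\Xb^\top\Xb$ in the overparameterized regime, which is the technical core of the ridge lower bound. Concretely, one must show that $\Ab^{-1}$, when evaluated via quadratic forms against either $\wb^*$ or a Gaussian noise vector, does not exhibit cancellations that would let ridge bypass the effective regularization $\tilde\lambda$ on the head subspace. This requires a random-matrix lemma---analogous to those underlying \citet{bartlett2020benign,tsigler2020benign}---showing that the tail spectrum $\Hb_{k^*:\infty}$ injects an effective isotropic mass of order $\sum_{i>k^*}\lambda_i$ into $\Xb\Xb^\top$, and that this mass is bounded \emph{below} (not merely above) with constant probability under Gaussian design. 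Once such a two-sided Wishart anti-concentration is in hand, the remainder of the proof is algebraic bookkeeping around the implicit equation characterizing $\tilde\lambda$.
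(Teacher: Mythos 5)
Your plan follows essentially the same route as the paper: the SGD upper bound is obtained by re-running the analysis of \citet{zou2021benign} with the head/tail split indices $k_1,k_2$ left as free parameters, and the ridge lower bound is obtained by adapting the bias--variance decomposition of \citet{tsigler2020benign} to an in-expectation statement under Gaussian design, with the two-sided eigenvalue concentration for the Wishart-type matrices $\Xb\Xb^\top + \lambda\Ib$ (and its leave-one-out versions) as the technical engine. You correctly identify that the lower (anti-concentration) direction of those spectral bounds is the core new ingredient; the paper supplies exactly this in its variant of Lemma 10 of \citet{bartlett2020benign}, which bounds $\mu_n(\Ab_k)$ and $\mu_n(\Ab_{-i})$ from below with constant probability.

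There is one concrete step your plan glosses over and that would fail as written: the bias term is the full quadratic form $(\wb^*)^\top\Bb\wb^*=\sum_i(\Bb)_{ii}(\wb_i^*)^2+2\sum_{i>j}(\Bb)_{ij}\wb_i^*\wb_j^*$ with $\Bb=(\Ib-\Xb^\top\Ab^{-1}\Xb)\Hb(\Ib-\Xb^\top\Ab^{-1}\Xb)$, and lower-bounding only the ``coordinate-wise shrinkage'' (the diagonal of $\Bb$) does not lower-bound the quadratic form for an arbitrary $\wb^*$, since the off-diagonal contributions can be negative and of the same order. The paper resolves this by proving $\EE_{\Xb}[(\Bb)_{ij}]=0$ for $i\ne j$, using the sign symmetry of the design (the map $\zb_i\mapsto-\zb_i$ preserves the law of $\Xb$ but flips the sign of every term of $(\Bb)_{ij}$ that is odd in $\zb_i$); only after the cross terms are killed in expectation does the per-coordinate lower bound $\EE[(\Bb)_{ii}]\gtrsim\lambda_i\big(1+\tfrac{\lambda_i}{\lambda_{k^*+1}}\cdot\tfrac{n}{\rho_{k^*}}\big)^{-2}$ assemble into $(\tilde\lambda/\Nr)^2\|\wb^*\|^2_{\Hb_{0:k^*}^{-1}}+\|\wb^*\|^2_{\Hb_{k^*:\infty}}$. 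This is also precisely why the lemma is stated in expectation rather than with high probability, and why some distributional symmetry assumption (here Gaussianity) is needed beyond sub-Gaussianity. You should add this cross-term cancellation lemma to your plan; the rest is consistent with the paper's argument.
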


We first highlight some useful observations in Lemma \ref{lemma:riskbound_informal}.
\begin{enumerate}[leftmargin=*]
    \item SGD has a condition on the stepsize: $\gamma\le1/\tr(\Hb)$, while ridge regression has no condition on the regularization parameter $\lambda$. \label{obs1}
    \item Both the upper bound of SGD and the lower bound of ridge regression can be decomposed into two parts corresponding to the head and tail eigenspaces of $\Hb$. Furthermore, for the upper bound of SGD, the decomposition is arbitrary ($k_1$ and $k_2$ are arbitrary), while for the lower bound of ridge estimator, the decomposition is fixed (i.e., $k^*$ is fixed). \label{obs2}
    \item Regarding the $\mathrm{SGDBiasBound}$ and $\mathrm{SGDVarianceBound}$, performing the transformation $N\rightarrow \alpha N$ and $\gamma\rightarrow\alpha^{-1}\gamma$ will decrease $\mathrm{SGDVarianceBound}$ by a factor of $\alpha$ while the $\mathrm{SGDBiasBound}$ remains unchanged. \label{obs3}
\end{enumerate}
Based on the above useful observations, we can now interpret the proof sketch for Theorems \ref{thm:SGD<ridge}, \ref{thm:bestcase_gaussian}, and \ref{thm:general_good_case}. We will first give the sketch for Theorem \ref{thm:general_good_case} and then prove Theorem \ref{thm:bestcase_gaussian} for the ease of presentation.
We would like to emphasize that the calculation in the proof sketch may not be the sharpest since they are presented for the ease of exposition. A preciser and sharper calculation can be found in Appendix.   

\paragraph{Proof Sketch of Theorem \ref{thm:SGD<ridge}.}
In order to perform instance-wise comparison, we need to take care of all possible $\wb^*\in\cH$. Therefore, by Observation \ref{obs2}, we can simply pick $k_1=k_2=k^*$ in the upper bound \eqref{eq:sgdupperbound}. Then it is clear that if setting $\gamma = \tilde\lambda^{-1}$ and $\Ns = \Nr$, we have
\begin{align*}
\mathrm{SGDBiasBound} &\le \mathrm{RidgeBiasBound}\notag\\
\mathrm{SGDVarianceBound} &= (1+R^2)\cdot\mathrm{RidgeVarianceBound}.
\end{align*}
Then by Observation \ref{obs3}, enlarging $\Ns$ by $(1+R^2)$ times suffices to guarantee
\begin{align*}
\mathrm{SGDBiasBound}+\mathrm{SGDVarianceBound} &\le \mathrm{RidgeBiasBound}+\mathrm{RidgeVarianceBound}.
\end{align*}
On the other hand, according to Observation \ref{obs1}, there is an upper bound on the feasible stepsize of SGD:  $\gamma\le 1/\tr(\Hb)$. Therefore, the above claim only holds when $\tilde \lambda \ge \tr(\Hb)$. 

When $\tilde\lambda\le \tr(\Hb)$, the stepsize $\tilde\lambda^{-1}$ is no longer feasible and instead, we will use the largest possible stepsize: $\gamma = 1/\tr(\Hb)$. Besides, note that we assume ridge regression solution is in the generalizable regime, then it holds that $k^*\le \Nr$ since otherwise we have
\begin{align*}
\mathrm{RidgeRisk}\gtrsim \mathrm{RidgeVarianceBound}\ge \sigma^2.
\end{align*}
Then again we set $k_1=k_2=k^*$ in $\mathrm{SGDBiasBound}$ and $\mathrm{SGDVarianceBound}$. Applying the choice of stepsize $\gamma = 1/\tr(\Hb)$ and sample size 
\begin{align*}
\Ns = \frac{\log(R^2\Nr)}{\gamma\lambda_{k^*}} \le \Nr\cdot \kappa(\Nr)\cdot \log(R^2\Nr),
\end{align*}
we get 
\begin{align}\label{eq:upperbound_SGDbiascase2}
\mathrm{SGDBiasBound}  &\le \frac{(1-\Ns\gamma\lambda_{k^*})^{\Ns}}{\gamma^2\Ns^2\lambda_{k^*}^2}\cdot \|\wb^*\|_{\Hb_{0:k^*}}^2 + \|\wb^*\|_{\Hb_{k^*:\infty}}^2\notag\\
&\le \frac{\sigma^2}{\Nr} + \|\wb^*\|_{\Hb_{k^*:\infty}}^2\notag\\
&\le \mathrm{RidgeBiasBound}+\mathrm{RidgeVarianceBound}.
\end{align}
Moreover, we can also get the following bound on $\mathrm{SGDVarianceBound}$,
\begin{align*}
\mathrm{SGDVarianceBound}
&\le (1+R^2)\sigma^2\cdot\bigg(\frac{k^*}{\Nr} + \frac{\log(R^2\Nr)}{\lambda_{k^*}\tr(\Hb)}\sum_{i> k^*}\lambda_i^2\bigg)\notag\\
&\le (1+R^2)\log(R^2\Nr)\cdot\mathrm{RidgeVarianceBound},
\end{align*}
where in the second inequality we use the fact that \begin{align*}
\frac{\Nr}{\tilde\lambda^2}\ge \frac{1}{\lambda_{k^*}\tilde\lambda}\ge \frac{1}{\lambda_{k^*}\tr(\Hb)}.
\end{align*}
Therefore by Observation \ref{obs3} again we can enlarge $\Ns$ properly to ensure that $\mathrm{SGDVarianceBound}$ remains unchanged and $\mathrm{SGDVarianceBound}\le \mathrm{RidgeVarianceBound}$. Then combining this and \eqref{eq:upperbound_SGDbiascase2} we can get
\begin{align*}
\mathrm{SGDBiasBound} + \mathrm{SGDVarianceBound}\le 2\cdot\mathrm{RidgeBiasBound} + 2\cdot\mathrm{RidgeVarianceBound},
\end{align*}
which completes the proof.

\paragraph{Proof Sketch of Theorem \ref{thm:general_good_case}.} 
Now we will investigate in which regime SGD will generalizes no worse than ridge regression when provided with same training sample size. For simplicity in the proof we assume $R^2=1$. First note that we only need to deal with the case where $\tilde\lambda\le \tr(\Hb)$ by the proof sketch of Theorem \ref{thm:SGD<ridge}.

Unlike the instance-wise comparison that consider all possible $\wb^*\in\cH$, in this lemma we only consider the set of $\wb^*$ that SGD performs well. Specifically, as we have shown in the proof of Theorem \ref{thm:SGD<ridge}, in the worst-case comparison (in terms of $\wb^*$), we require SGD to be able to learn the first $k^*$ (where $k^*\le \Nr$) coordinates of $\wb^*$ in order to be competitive with ridge regression, while SGD with sample size $\Ns$ can only be guaranteed to learn the first $\ks$ coordinates of $\wb^*$, where $\ks = \min\{k:\Nr\lambda_k\le \tr(\Hb)\}$. Therefore, in the instance-wise comparison we need to enlarge $\Ns$ to $\Nr\cdot\kappa(\Nr)$ to guarantee the learning of the top $k^*$ coordinates of $\wb^*$.  

However, this is not required for some good $\wb^*$'s that have small components in the $\ks$-$k^*$ 
coordinates. In particular, as assumed in the theorem, we have $\sum_{i=\hat k+1}^{\Nr}\lambda_i(\wb^*[i])^2\le \hat k\|\wb^*\|_\Hb^2/\Nr$, where $\hat k:=\min\{k:\lambda_k\Ns\le \tr(\Hb)\cdot\log(\Ns)\}$ satisfies $\hat k \le \ks\le k^*$.
Then let $k_1=\hat k$ in $\mathrm{SGDBiasBound}$,  we have
\begin{align*}
\mathrm{SGDBiasBound} &= \frac{1}{\gamma^2\Nr^2}\cdot\big\|\exp(-\Nr\gamma\Hb)\wb^*\big\|_{\Hb_{0:\hat k}^{-1}}^2  + \|\wb^*\big\|_{\Hb_{\hat k:\infty}}^2\notag\\
&\le (1-\Nr\gamma\lambda_{\hat k})^{\Nr}\cdot \|\wb^*\|_{\Hb_{0:k^*}}^2 + \|\wb^*\|_{\Hb_{\hat k:\infty}}^2\notag\\
&\overset{(i)}\le \frac{R^2\sigma^2(\hat k+1)}{\Nr} + \|\wb^*\|_{\Hb_{k^*:\infty}}^2\notag\\
&\le 2\cdot\mathrm{RidgeVarBound} + \mathrm{RidgeBiasBound}.
\end{align*}
where $(i)$ is due to the condition that $\sum_{i=\hat k+1}^{\Nr}\lambda_i(\wb^*[i])^2\le \hat k\|\wb^*\|_\Hb^2/\Nr$.
Moreover, it is easy to see that given $\Ns = \Nr$ and $\gamma=1/\tr(\Hb)\le 1/\tilde\lambda$, we have $\mathrm{SGDVarianceBound}\le 2\cdot\mathrm{RidgeVarianceBound}$. As a consequence we can get 
\begin{align*}
\mathrm{SGDBiasBound} + \mathrm{SGDVarianceBound}\le 3\cdot\mathrm{RidgeBiasBound} + 3\cdot\mathrm{RidgeVarianceBound}.
\end{align*}

\paragraph{Proof Sketch of Theorem \ref{thm:bestcase_gaussian}.}
We will consider the best $\wb^*$ for SGD, which only has nonzero entry in the first coordinate. For example, consider a true model parameter vector with $\wb^*[1] = 1$ and $\wb^*[i]=0$ for $i\ge 2$ and a problem instance whose spectrum of $\Hb$ has a flat tail with $\sum_{i\ge \Nr}\lambda_i^2=\Theta(1)$ and $\sum_{i\ge 2}\lambda_i^2=\Theta(1)$. Then according to Lemma \ref{lemma:riskbound_informal}, we can set the stepsize as $\gamma = \Theta(\log(\Ns)/\Ns)$ and get
\begin{align*}
\mathrm{SGDRisk} &\lesssim \mathrm{SGDBiasBound} + \mathrm{SGDVarianceBound}\notag\\
&= O\bigg(\frac{1}{\Ns} + \frac{\log^2(\Ns)}{\Ns}\bigg) = O\bigg(\frac{\log^2(\Ns)}{\Ns}\bigg).
\end{align*}

For ridge regression, according to Lemma \ref{lemma:riskbound_informal}  we have
\begin{align*}
\mathrm{RidgeRisk} &\gtrsim  \mathrm{RidgeBiasBound} + \mathrm{RidgeVarianceBound}\notag\\
&=\Omega\bigg(\frac{\tilde \lambda^2}{\Nr^2} +  \frac{\Nr}{\tilde\lambda^2}\bigg)\qquad \text{since } \sum_{i\ge k^*}\lambda_i^2=\Theta(1) \notag\\
& = \Omega\bigg(\frac{1}{\Nr^{1/2}}\bigg). \qquad \text{by the fact that } a+b\ge\sqrt{ab}
\end{align*}
Therefore, it is evident that ridge regression is guaranteed to be worse than SGD if $\Nr\le \Ns^2/\log^2(\Ns)$. This completes the proof.

\section{Conclusions}
We conduct an instance-based risk comparison between SGD and ridge regression for a broad class of least square problems. We show that SGD is always no worse than ridge regression provided logarithmically more samples. On the other hand, there exist some instances where even optimally-tuned ridge regression needs quadratically more samples to compete with SGD. This separation in terms of sample inflation between SGD and ridge regression suggests a provable benefit of implicit regularization over explicit regularization for least squares problems. In the future, we will explore the benefits of implicit regularization for learning other linear models and potentially nonlinear models.

\section*{Acknowledgments and Disclose of Funding}

We would like to thank the anonymous reviewers and area chairs for their helpful comments. 
DZ is supported by the Bloomberg Data Science Ph.D. Fellowship. JW is supported in part
by NSF CAREER grant 1652257. VB is supported in part by NSF CAREER grant 1652257,
ONR Award N00014-18-1-2364 and the Lifelong Learning Machines program from
DARPA/MTO.
QG is supported in part by the National Science Foundation awards IIS-1855099 and IIS-2008981.
SK acknowledges funding from the National Science Foundation under Award CCF-1703574. The views and conclusions contained in this paper are those of the authors and should not be interpreted as representing any funding agencies.

\bibliographystyle{plainnat}
\bibliography{refs}

\newpage
\section*{Checklist}


\begin{enumerate}

\item For all authors...
\begin{enumerate}
  \item Do the main claims made in the abstract and introduction accurately reflect the paper's contributions and scope?
    \answerYes{}
  \item Did you describe the limitations of your work?
    \answerYes{}
  \item Did you discuss any potential negative societal impacts of your work?
    \answerNA{This paper focuses on theoretical explanations of the implicit regularization of SGD and its comparison to explicit regularization in ridge regression. It has no potential negative societal impact.}
  \item Have you read the ethics review guidelines and ensured that your paper conforms to them?
    \answerYes{}
\end{enumerate}

\item If you are including theoretical results...
\begin{enumerate}
  \item Did you state the full set of assumptions of all theoretical results?
    \answerYes{}
	\item Did you include complete proofs of all theoretical results?
    \answerYes{}
\end{enumerate}

\item If you ran experiments...
\begin{enumerate}
  \item Did you include the code, data, and instructions needed to reproduce the main experimental results (either in the supplemental material or as a URL)?
    \answerNA
  \item Did you specify all the training details (e.g., data splits, hyperparameters, how they were chosen)?
    \answerNA
	\item Did you report error bars (e.g., with respect to the random seed after running experiments multiple times)?
    \answerNA
	\item Did you include the total amount of compute and the type of resources used (e.g., type of GPUs, internal cluster, or cloud provider)?
    \answerNA
\end{enumerate}

\item If you are using existing assets (e.g., code, data, models) or curating/releasing new assets...
\begin{enumerate}
  \item If your work uses existing assets, did you cite the creators?
    \answerNA
  \item Did you mention the license of the assets?
    \answerNA
  \item Did you include any new assets either in the supplemental material or as a URL?
    \answerNA
  \item Did you discuss whether and how consent was obtained from people whose data you're using/curating?
   \answerNA
  \item Did you discuss whether the data you are using/curating contains personally identifiable information or offensive content?
    \answerNA
\end{enumerate}

\item If you used crowdsourcing or conducted research with human subjects...
\begin{enumerate}
  \item Did you include the full text of instructions given to participants and screenshots, if applicable?
    \answerNA
  \item Did you describe any potential participant risks, with links to Institutional Review Board (IRB) approvals, if applicable?
    \answerNA
  \item Did you include the estimated hourly wage paid to participants and the total amount spent on participant compensation?
    \answerNA
\end{enumerate}

\end{enumerate}

 \newpage
 \appendix


\section{Proof of One-hot Least Squares}
\subsection{Excess risk bound of SGD}
In this part we will mainly follow the proof technique in \citet{zou2021benign} that is developed to sharply characterize the excess risk bound for SGD (with tail-averaging) when the data distribution has a nice finite fourth-moment bound. However, such condition does not hold for the one-hot case so that their results cannot be directly applied here. 

Before presenting the detailed proofs, we first introduce some notations and definitions that will be repeatedly used in the subsequent analysis. 
Let $\Hb=\EE[\xb\xb^\top]$ be the covariance of data distribution. It is easy to verify that $\Hb$ is a diagonal matrix with eigenvalues $\lambda_1,\dots,\lambda_d$.
Let $\wb_t$ be the $t$-th iterate of the SGD, we define $\betab_t:=\wb_t-\wb^*$ as the centered SGD iterate. Then we define $\betab_t^{\bias}$ and $\betab_t^{\var}$ as the bias error and variance error respectively, which are described by the following update rule:
\begin{align}\label{eq:update_rule_eta_t}
\betab_t^{\bias} &= \big(\Ib-\gamma\xb_t\xb_t^\top\big)\betab_{t-1}^{\bias},\qquad \betab_0^{\bias}=\betab_0,\notag\\
\betab_t^{\var} &= \big(\Ib-\gamma\xb_t\xb_t^\top\big)\betab_{t-1}^{\bias} + \gamma\xi_t\xb_t,\qquad\betab_0^{\var}=\boldsymbol{0}.
\end{align}
Accordingly, we can further define the bias covariance $\Bb_t$ and variance covariance $\Cb_t$ as follows
\begin{align*}
\Bb_t = \EE[\betab_t^{\bias}\otimes\betab_t^{\bias}],\qquad \Cb_t = \EE[\betab_t^{\var}\otimes\betab_t^{\var}].
\end{align*}
Regarding these two covariance matrices, the following lemma mathematically characterizes the upper bounds of the diagonal entries of $\Bb_t$ and $\Cb_t$.
\begin{lemma}\label{lemma:bound_diagonal_covariance}
Under Assumptions \ref{assump:model_noise}, let $\bar\Bb_t = \diag(\Bb_t)$ and $\bar\Cb_t = \diag(\Cb_t)$, then if the stepsize satisfies $\gamma\le 1$, we have
\begin{align*}
\bar\Bb_t \preceq (\Ib-\gamma\Hb)\bar\Bb_{t-1},\qquad
\bar\Cb_t \preceq (\Ib-\gamma\Hb)\bar\Cb_{t-1}+\gamma^2\sigma^2\Hb.
\end{align*}
\end{lemma}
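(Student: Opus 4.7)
}
The plan is to derive matrix recursions for $\Bb_t$ and $\Cb_t$ from the updates~\eqref{eq:update_rule_eta_t}, and then exploit the very special structure of the one-hot distribution---that $\xb_t\xb_t^\top = \eb_{I_t}\eb_{I_t}^\top$ with $\PP(I_t=i)=\lambda_i$---to argue that the diagonal parts $\bar\Bb_t,\bar\Cb_t$ evolve autonomously, independently of the (generally nonzero) off-diagonal mass of $\Bb_t,\Cb_t$. Concretely, taking outer products in~\eqref{eq:update_rule_eta_t}, using independence of $\xb_t$ from $\betab_{t-1}^{\bias}$, and using independence of $\xi_t$ from $(\xb_t,\betab_{t-1}^{\var})$ together with $\EE[\xi_t]=0$ and $\EE[\xi_t^2]=\sigma^2$ (so that the linear cross terms vanish), I would obtain
\begin{align*}
\Bb_t &= \EE_{\xb_t}\bigl[(\Ib-\gamma\xb_t\xb_t^\top)\Bb_{t-1}(\Ib-\gamma\xb_t\xb_t^\top)\bigr],\\
\Cb_t &= \EE_{\xb_t}\bigl[(\Ib-\gamma\xb_t\xb_t^\top)\Cb_{t-1}(\Ib-\gamma\xb_t\xb_t^\top)\bigr] + \gamma^2\sigma^2\Hb.
\end{align*}

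\paragraph{Diagonal decoupling.} For any fixed matrix $\Mb$, a direct calculation shows that $(\Ib-\gamma\eb_i\eb_i^\top)\Mb(\Ib-\gamma\eb_i\eb_i^\top)$ has $(j,j)$ diagonal entry equal to $M_{jj}$ when $j\ne i$ and $(1-\gamma)^2 M_{ii}$ when $j=i$. Averaging over $I_t$, the $(j,j)$ entry of $\EE_{\xb_t}[(\Ib-\gamma\xb_t\xb_t^\top)\Mb(\Ib-\gamma\xb_t\xb_t^\top)]$ therefore equals
\begin{align*}
\bigl[(1-\lambda_j) + \lambda_j(1-\gamma)^2\bigr]M_{jj} = \bigl[1-\lambda_j(2\gamma-\gamma^2)\bigr]M_{jj},
\end{align*}
which depends \emph{only} on $M_{jj}$, not on any off-diagonal entries of $\Mb$. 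This is the crucial step in the argument: it tells us that the diagonal parts evolve autonomously even though $\Bb_t$ and $\Cb_t$ themselves are not diagonal in general (e.g.\ $\Bb_0 = \wb^*\wb^{*\top}$).

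\paragraph{Closing the argument.} Plugging $\Mb=\Bb_{t-1}$ and $\Mb=\Cb_{t-1}$ into the above display, adding the diagonal noise contribution $\gamma^2\sigma^2\lambda_j$ in the variance case, and using that $\gamma\le 1$ implies $2\gamma-\gamma^2\ge \gamma$, I obtain entrywise
\begin{align*}
(\bar\Bb_t)_{jj}\le (1-\gamma\lambda_j)(\bar\Bb_{t-1})_{jj},\qquad (\bar\Cb_t)_{jj}\le (1-\gamma\lambda_j)(\bar\Cb_{t-1})_{jj} + \gamma^2\sigma^2\lambda_j.
\end{align*}
Since $\bar\Bb_t,\bar\Cb_t,\Ib-\gamma\Hb$, and $\Hb$ are all diagonal, these entrywise inequalities coincide with the stated PSD inequalities. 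The only nontrivial point is the decoupling in the previous paragraph: without the one-hot structure, off-diagonal mass in $\Bb_t$ would feed back into the diagonal and one would instead need a fourth-moment bound on $\xb$ (as in the Gaussian analysis of~\citep{zou2021benign}), but no such bound holds here. The coordinate-wise decoupling is precisely what makes the one-hot case tractable without any additional moment assumption.
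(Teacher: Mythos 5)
Your proposal is correct and follows essentially the same route as the paper: both derive the one-step matrix recursion for $\Bb_t$ and $\Cb_t$, use the one-hot structure to show the fourth-moment term $\EE[\xb_t\xb_t^\top\Mb\,\xb_t\xb_t^\top]$ depends only on $\diag(\Mb)$ (so the diagonal evolves autonomously), arrive at the same entrywise contraction factor $1-\lambda_j(2\gamma-\gamma^2)$, and bound it by $1-\gamma\lambda_j$ using $\gamma\le 1$ and nonnegativity of the diagonal entries. The only cosmetic difference is that you compute the $(j,j)$ entry directly by conditioning on $\xb_t=\eb_i$, whereas the paper first establishes the matrix identity $\EE[\xb_t\xb_t^\top\Bb_{t-1}\xb_t\xb_t^\top]=\bar\Bb_{t-1}\Hb$ and then takes diagonals.
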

\begin{proof}
According to \eqref{eq:update_rule_eta_t}, we have
\begin{align}\label{eq:update_Bb_t}
\Bb_t = \EE[\betab_t^{\bias}\otimes\betab_t^{\bias}] &= \EE\big[(\Ib-\gamma\xb_t\xb_t^\top)\betab_{t-1}^{\bias}\otimes (\Ib-\gamma\xb_t\xb_t^\top)\betab_{t-1}^{\bias}\big]\notag\\
&=\Bb_{t-1} - \gamma\Hb\Bb_{t-1}-\gamma\Bb_{t-1}\Hb + \gamma^2\EE[\xb_t\xb_t^\top\Bb_{t-1}\xb_t\xb_t^\top].
\end{align}
Note that $\xb_t=\eb_i$ with probability $\lambda_i$, then we have
\begin{align*}
\EE[\xb_t\xb_t^\top\Bb_{t-1}\xb_t\xb_t^\top] &= \sum_{i}\lambda_i \cdot\eb_i\eb_i^\top\Bb_{t-1}\eb_i\eb_i^\top\notag\\
& = \sum_{i}\lambda_i\cdot \eb_i^\top\Bb_{t-1}\eb_i\cdot\eb_i\eb_i^\top\notag\\
& = \bar\Bb_{t-1}\Hb.
\end{align*}
Plugging the above equation into \eqref{eq:update_Bb_t} gives
\begin{align*}
\Bb_t = \Bb_{t-1} - \gamma\Hb\Bb_{t-1}-\gamma\Bb_{t-1}\Hb + \gamma^2\bar\Bb_{t-1}\Hb.
\end{align*}
Then if only look at the diagonal entries of both sides, we have
\begin{align*}
\bar\Bb_t = \bar\Bb_{t-1} - 2\gamma\Hb\bar\Bb_{t-1} + \gamma^2\Hb\bar\Bb_{t-1} \preceq (\Ib-\gamma\Hb)\bar\Bb_{t-1},
\end{align*}
where in the first equation we use the fact that $\diag(\Hb\Bb) = \diag(\Bb\Hb)=\Hb\bar\Bb$ and the inequality follows from the fact that both $\bar\Bb_t$ and $\Hb$ are diagonal and $\gamma\le1$.

Similarly, regarding $\Cb_t$ the following holds according to \eqref{eq:update_rule_eta_t},
\begin{align*}
\Cb_t = \EE\big[(\Ib-\gamma\xb_t\xb_t^\top)\betab_{t-1}^{\var}\otimes(\Ib-\gamma\xb_t\xb_t^\top)\betab_{t-1}^{\var}\big] +\gamma^2\EE[\xi_t^2\xb_t\xb_t^\top],
\end{align*}
where we use the fact that $\EE[\xi_t|\xb_t]=0$. Similar to deriving the bound for $\bar\Bb_t$, we have
\begin{align*}
\diag\big(\EE\big[(\Ib-\gamma\xb_t\xb_t^\top)\betab_t^{\var}\otimes(\Ib-\gamma\xb_t\xb_t^\top)\betab_t^{\var}\big]\big)\preceq (\Ib-\gamma\Hb)\bar\Cb_{t-1}.
\end{align*}
Besides, under Assumption \ref{assump:model_noise} we also have $\EE[\xi_t^2\xb_t\xb_t^\top]=\sigma^2\Hb$, which is a diagonal matrix. Based on these two results, we can get the following upper bound for $\bar\Cb_t$,
\begin{align*}
\bar\Cb_t &= \diag\big(\EE\big[(\Ib-\gamma\xb_t\xb_t^\top)\betab_{t-1}^{\var}\otimes(\Ib-\gamma\xb_t\xb_t^\top)\betab_{t-1}^{\var}\big] +\gamma^2\EE[\xi_t^2\xb_t\xb_t^\top]\big)\notag\\
&\preceq (\Ib-\gamma\Hb)\bar\Cb_{t-1}+\gamma^2\sigma^2\Hb.
\end{align*}
This completes the proof.
\end{proof}

\begin{lemma}[Lemmas D.1 \& D.2 in \citet{zou2021benign}]\label{lemma:bias_var_decomposition_sgd_onehot}
Let $\bar\wb_{N:2N}$ be the output of tail-averaged SGD, then if the stepsize satisfied $\gamma\le 1/\lambda_1$, it holds that
\begin{align*}
\EE[L(\bar\wb_{N:2N})]-L(\wb^*) \lesssim \sgdbias + \sgdvar,
\end{align*}
where 
\begin{align*}
\sgdbias&\le \frac{1}{N^2}\sum_{t=0}^{N-1}\sum_{k=t}^{N-1}\big\la(\Ib-\gamma\Hb)^{k-t}\Hb,\Bb_{N+t}\big\ra\notag\\
\sgdvar&\le \frac{1}{N^2}\sum_{t=0}^{N-1}\sum_{k=t}^{N-1}\big\la(\Ib-\gamma\Hb)^{k-t}\Hb,\Cb_{N+t}\big\ra
\end{align*}
\end{lemma}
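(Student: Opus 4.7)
The plan is to establish the bias-variance decomposition by first writing the excess risk as the $\Hb$-weighted squared error of the averaged iterate, then decomposing the centered averaged iterate into its bias and variance components using the recursion \eqref{eq:update_rule_eta_t}. Since $L(\wb) - L(\wb^*) = \tfrac{1}{2}\|\wb - \wb^*\|_\Hb^2$ under Assumption \ref{assump:model_noise}, and since $\bar\wb_{N:2N} - \wb^* = \tfrac{1}{N}\sum_{t=N}^{2N-1}\betab_t$, I would start by showing that $\betab_t = \betab_t^{\bias} + \betab_t^{\var}$ (this is straightforward from linearity of the update), and then invoking $\|a+b\|_\Hb^2 \le 2\|a\|_\Hb^2 + 2\|b\|_\Hb^2$ to split the excess risk into a bias term and a variance term. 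The cross-term between bias and variance vanishes in expectation because $\betab^{\var}$ is built from the zero-mean noise $\xi_s$ that is independent of the data path defining $\betab^{\bias}$.

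Next I would expand each squared-norm of an average into a double sum. For the bias, this gives
\[
\EE\|\bar\betab^{\bias}_{N:2N}\|_\Hb^2 = \frac{1}{N^2}\sum_{t=0}^{N-1}\sum_{k=0}^{N-1} \big\la \Hb,\, \EE[\betab_{N+t}^{\bias}\otimes\betab_{N+k}^{\bias}]\big\ra,
\]
and symmetrizing reduces this to the ordered sum $t \le k$ (up to a factor of $2$ and a correction along the diagonal that can be absorbed). The key identity is that for $k \ge t$, conditioning on $\mathcal{F}_{N+t}$ and using the martingale-style relation $\EE[\betab_{N+k}^{\bias} \mid \mathcal{F}_{N+t}] = (\Ib - \gamma\Hb)^{k-t}\betab_{N+t}^{\bias}$ (which follows by iterating the noiseless update and taking expectation over the fresh samples $\xb_{N+t+1},\dots,\xb_{N+k}$) yields
\[
\EE[\betab_{N+t}^{\bias}\otimes\betab_{N+k}^{\bias}] = \Bb_{N+t}(\Ib - \gamma\Hb)^{k-t}.
\]
Combined with the cyclicity of the trace and the fact that $\Hb$ commutes with $(\Ib - \gamma\Hb)$, this gives exactly $\la (\Ib-\gamma\Hb)^{k-t}\Hb, \Bb_{N+t}\ra$, matching the claimed $\sgdbias$ bound.

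The argument for the variance term follows the same template: I would expand $\EE\|\bar\betab^{\var}_{N:2N}\|_\Hb^2$ into a double sum, and for $k \ge t$ decompose $\betab_{N+k}^{\var} = (\text{path-operator})\cdot\betab_{N+t}^{\var} + (\text{new-noise increment})$. The new-noise increment is independent of $\betab_{N+t}^{\var}$ and contributes only to diagonal terms (which are positive and fit into the same bound), while the path-operator piece, once we take expectation over $\xb_{N+t+1},\dots,\xb_{N+k}$, again contracts by $(\Ib - \gamma\Hb)^{k-t}$ in expectation, producing $\la (\Ib-\gamma\Hb)^{k-t}\Hb, \Cb_{N+t}\ra$.

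The main obstacle I anticipate is the bookkeeping for the off-diagonal terms: one must be careful that the conditional-expectation step $\EE[\,\cdot\mid\mathcal{F}_{N+t}]$ is legitimate (which uses the i.i.d.\ sampling assumption and the fact that $\betab_{N+t}^{\bias}$ and $\betab_{N+t}^{\var}$ are $\mathcal{F}_{N+t}$-measurable), and that the stepsize condition $\gamma \le 1/\lambda_1$ ensures $\Ib - \gamma\Hb \succeq 0$ so the $(k-t)$-th power inner products remain nonnegative and the symmetrization step is a genuine upper bound rather than an equality with sign-indefinite cross terms. Once these pieces are in place, the stated double-sum bounds for $\sgdbias$ and $\sgdvar$ follow by collecting terms, and the $\lesssim$ swallows the universal constant $2$ from the bias-variance split.
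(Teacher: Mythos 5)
Your proposal is correct. Note that the paper itself does not prove this lemma --- it is imported verbatim as Lemmas D.1 and D.2 of \citet{zou2021benign} --- and your argument is essentially the standard derivation underlying those cited lemmas: split $\betab_t=\betab_t^{\bias}+\betab_t^{\var}$ by linearity, bound the averaged excess risk via Young's inequality (or the vanishing cross term), expand the squared average into a double sum, and use the conditional contraction $\EE[\betab_{N+k}^{\bias}\mid\cF_{N+t}]=(\Ib-\gamma\Hb)^{k-t}\betab_{N+t}^{\bias}$ (and its variance analogue, where the fresh noise increments drop out by zero conditional mean) to reduce each off-diagonal term to $\la(\Ib-\gamma\Hb)^{k-t}\Hb,\Bb_{N+t}\ra$ or $\la(\Ib-\gamma\Hb)^{k-t}\Hb,\Cb_{N+t}\ra$. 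The only imprecision is cosmetic: the new-noise increments do not ``contribute only to diagonal terms'' --- they vanish from all cross terms $t<k$ because they have zero mean conditional on $\cF_{N+t}$, so no extra bookkeeping is needed there.
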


\begin{lemma}\label{lemma:excessrisk_sgd_onehot}
Under Assumptions \ref{assump:model_noise}, if the stepsize satisfies $\gamma\le 1$ and set $\wb_0=\boldsymbol{0}$, then 
\begin{align*}
\EE[L(\bar\wb_{N:2N})]-L(\wb^*) \le 2\cdot\bias + 2\cdot\var,
\end{align*}
where 
\begin{align*}
\bias &\lesssim \frac{1}{N^2\gamma^2}\cdot \big\|(\Ib-\gamma\Hb)^{N/2}\wb^*\big\|_{\Hb_{0:k_1}^{-1}} + \big\|(\Ib-\gamma\Hb)^{N/2}\wb^*\big\|_{\Hb_{k_1:\infty}}^2\notag\\
\var &\lesssim\sigma^2\cdot\bigg(\frac{k_2}{N} + N\gamma^2\sum_{i>k_2}\lambda_i^2\bigg)
\end{align*}
for arbitrary $k_1,k_2\in[d]$.
\end{lemma}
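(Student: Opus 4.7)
The plan is to apply the bias-variance decomposition from Lemma~\ref{lemma:bias_var_decomposition_sgd_onehot} and then exploit the fact that in the one-hot setting $\Hb = \diag(\lambda_1,\dots,\lambda_d)$, so every relevant matrix is simultaneously diagonal. In particular, $(\Ib-\gamma\Hb)^{k-t}\Hb$ is diagonal, hence $\la(\Ib-\gamma\Hb)^{k-t}\Hb,\Bb_{N+t}\ra = \la(\Ib-\gamma\Hb)^{k-t}\Hb,\bar\Bb_{N+t}\ra$ and similarly for $\Cb_{N+t}$. Combined with Lemma~\ref{lemma:bound_diagonal_covariance}, this reduces both $\sgdbias$ and $\sgdvar$ to coordinate-wise scalar sums governed by the scalar recursions $\bar\Bb_t[i,i]\le(1-\gamma\lambda_i)\bar\Bb_{t-1}[i,i]$ and $\bar\Cb_t[i,i]\le(1-\gamma\lambda_i)\bar\Cb_{t-1}[i,i]+\gamma^2\sigma^2\lambda_i$.

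For the bias, unrolling the recursion with $\bar\Bb_0[i,i]=(\wb^*[i])^2$ (because $\wb_0=\boldsymbol{0}$) gives $\bar\Bb_{N+t}[i,i]\le(1-\gamma\lambda_i)^{N+t}(\wb^*[i])^2$. Substituting into Lemma~\ref{lemma:bias_var_decomposition_sgd_onehot} and re-indexing reduces $\sgdbias$ to $\sum_i(\wb^*[i])^2 \lambda_i S_i$, where $S_i := \frac{1}{N^2}\sum_{j=0}^{N-1}(j+1)(1-\gamma\lambda_i)^{N+j}$. I will derive two separate upper bounds on $\lambda_i S_i$: first, using the factorization $(1-\gamma\lambda_i)^{N+j}=(1-\gamma\lambda_i)^N(1-\gamma\lambda_i)^j$ together with the identity $\sum_{j\ge0}(j+1)x^j = 1/(1-x)^2$, I obtain the ``head'' bound $\lambda_i S_i \le (1-\gamma\lambda_i)^N/(N^2\gamma^2\lambda_i)$; second, bounding $(1-\gamma\lambda_i)^{N+j}\le(1-\gamma\lambda_i)^N$ and $\sum_{j=0}^{N-1}(j+1)\le N^2$ yields the ``tail'' bound $\lambda_i S_i \le \lambda_i(1-\gamma\lambda_i)^N$. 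For any split $k_1$, applying the head bound to $i\le k_1$ and the tail bound to $i>k_1$, together with the identity $(1-\gamma\lambda_i)^N(\wb^*[i])^2 = \bigl((1-\gamma\lambda_i)^{N/2}\wb^*[i]\bigr)^2$, produces exactly the claimed bias expression (with squared norms on both terms, the first square apparently being missing in the statement).

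For the variance, unrolling the recursion for $\bar\Cb_t[i,i]$ gives $\bar\Cb_t[i,i]\le\gamma\sigma^2\bigl(1-(1-\gamma\lambda_i)^t\bigr)$, which is at most $\gamma\sigma^2\min(1,\gamma\lambda_i t)$ via the elementary inequality $1-(1-x)^t \le tx$. Substituting this into Lemma~\ref{lemma:bias_var_decomposition_sgd_onehot} and using the geometric bound $\sum_{k=t}^{N-1}(1-\gamma\lambda_i)^{k-t}\le\min(N,1/(\gamma\lambda_i))$ gives $\sgdvar \lesssim \frac{\gamma\sigma^2}{N}\sum_i \lambda_i\min(1,N\gamma\lambda_i)\min(N,1/(\gamma\lambda_i))$. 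A case split on whether $N\gamma\lambda_i$ is above or below one collapses the per-coordinate factor to $\sigma^2\min(1/N,\,N\gamma^2\lambda_i^2)$ up to absolute constants, and splitting the resulting sum at any $k_2$ yields the claimed variance bound.

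The main technical point is the bias analysis: neither of the two bounds on $\lambda_i S_i$ dominates the other across all eigenvalues, so the freedom in choosing $k_1$ is precisely captured by applying the head bound to the first $k_1$ coordinates and the tail bound to the rest. The variance analysis is analogous but slightly simpler since the threshold $N\gamma\lambda_i \sim 1$ is already built into the two inner estimates. Everything else is routine geometric-series arithmetic, made particularly clean by the simultaneous diagonalizability that is special to the one-hot distribution.
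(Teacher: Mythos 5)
Your proposal is correct and follows essentially the same route as the paper: reduce to the diagonal via the one-hot structure, invoke the recursions of Lemma~\ref{lemma:bound_diagonal_covariance}, sum the geometric series, and obtain per-coordinate bounds of the form $\min\{1/\lambda_i,\,N^2\gamma^2\lambda_i\}$ (bias) and $\min\{1/N,\,N\gamma^2\lambda_i^2\}$ (variance) that are then split at arbitrary $k_1,k_2$; the paper just packages the geometric sums as closed-form matrix identities whereas you estimate the scalar sums $S_i$ directly, which is an immaterial difference. You are also right that the first bias term should carry a square on the norm — that is a typo in the statement (and in the paper's own derivation the squared form is what actually comes out).
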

\begin{proof}
The first conclusion of this theorem can be directly proved via Young's inequality.

Note that $\Hb$ is a diagonal matrix, and thus $(\Ib-\gamma\Hb)^{k-t}$ is also a diagonal matrix for all $k$ and $t$. Therefore, by Lemma \ref{lemma:bias_var_decomposition_sgd_onehot}, it is clear that in order to calculate the upper bound of the bias and variance error, it suffices to consider the diagonal entries of $\Bb_{N+t}$ and $\Cb_{N+t}$, denoted by $\bar\Bb_{N+t}$ and $\bar\Cb_{N+t}$ (which are obtained by setting all non-diagonal entries of $\Bb_{N+t}$ and $\Cb_{N+t}$ as zero). Then by Young's inequality, Lemma \ref{lemma:bias_var_decomposition_sgd_onehot} implies that
\begin{align}\label{eq:bound_bias_var_step0}
\bias&\le \frac{1}{N^2}\sum_{t=0}^{N-1}\sum_{k=t}^{N-1}\big\la(\Ib-\gamma\Hb)^{k-t}\Hb,\bar\Bb_{N+t}\big\ra\notag\\
\var&\le \frac{1}{N^2}\sum_{t=0}^{N-1}\sum_{k=t}^{N-1}\big\la(\Ib-\gamma\Hb)^{k-t}\Hb,\bar\Cb_{N+t}\big\ra.
\end{align}
Now we are ready to precisely calculate the above two bounds. In particular, by Lemma \ref{lemma:bound_diagonal_covariance} we have 
\begin{align}
\bar\Bb_t &\preceq (\Ib-\gamma\Hb)\bar\Bb_{t-1}\preceq (\Ib-\gamma\Hb)^t\Bb_0,\label{eq:bound_Bt}\\
\bar\Cb_t &\preceq (\Ib-\gamma\Hb)\bar\Cb_{t-1}\preceq \sum_{s=0}^{t-1} \sigma^2\gamma^2(\Ib-\gamma\Hb)^s\Hb=\sigma^2\gamma\big(\Ib-(\Ib-\gamma\Hb)^t\big) \label{eq:bound_Ct},
\end{align}
where in the second inequality we use the fact that $\Cb_0 = \betab_{t}^{\var}\otimes\betab_t^{\var}=\boldsymbol{0}$. Then plugging \eqref{eq:bound_Bt} into \eqref{eq:bound_bias_var_step0} gives
\begin{align}\label{eq:bound_bias_onehot_step2}
\bias &\le \frac{1}{N^2}\sum_{t=0}^{N-1}\sum_{k=t}^{N-1}\la(\Ib-\gamma\Hb)^{k-t}\Hb, (\Ib-\gamma\Hb)^{N+t}\Bb_0\ra \notag\\
& = \frac{1}{N^2}\bigg\la\sum_{k=0}^{N-1-t}(\Ib-\gamma\Hb)^{k}\Hb, \sum_{t=0}^{N-1}(\Ib-\gamma\Hb)^{N+t}\Bb_0\bigg\ra\notag\\
& \le \frac{1}{N^2}\bigg\la\sum_{k=0}^{N-1}(\Ib-\gamma\Hb)^{k}\Hb, \sum_{t=0}^{N-1}(\Ib-\gamma\Hb)^{N+t}\Bb_0\bigg\ra\notag\\
& = \frac{1}{ N^2\gamma^2}\Big\la \Ib-(\Ib-\gamma\Hb)^N, \Hb^{-1}(\Ib-\gamma\Hb)^N\big(\Ib-(\Ib-\gamma\Hb)^N\big)\Bb_0\Big\ra\notag\\
& = \frac{1}{ N^2\gamma^2}\Big\la (\Ib-\gamma\Hb)^N\big[\Ib-(\Ib-\gamma\Hb)^N\big]^2\Hb^{-1}, \Bb_0\Big\ra
\end{align}
Note that $(1-x)^N\ge \min\{0, 1-Nx\}$ for all $x\in[0,1]$. Then for all $i$ we have
\begin{align*}
\big[1-(1-\gamma\lambda_i)^N]^2\lambda^{-1} \le \min\bigg\{\frac{1}{\lambda_i}, N^2\gamma^2\lambda_i\bigg\}
\end{align*}
where we use the fact that $\gamma\le 1\le 1/\lambda_i$ for all $i$. This further implies that 
\begin{align*}
\big[\Ib-(\Ib-\gamma\Hb)^N\big]^2\Hb^{-1}\preceq \Hb_{0:k}^{-1} + N^2\gamma^2\Hb_{k:\infty}
\end{align*}
for all $k\in[d]$. Plugging the above results into \eqref{eq:bound_bias_onehot_step2} leads to
\begin{align}\label{eq:bound_bias_onehot_step3}
\bias &\le \frac{1}{N^2\gamma^2}\cdot \big\la\Hb_{0:k}^{-1},(\Ib-\gamma\Hb)^N\Bb_0\big\ra + \big\la\Hb_{k:\infty},(\Ib-\gamma\Hb)^N\Bb_0\big\ra
\end{align}
for all $k\in[d]$. Further note that $\Bb_0=(\wb_0-\wb^*)\otimes(\wb_0-\wb^*) = \wb^*\otimes\wb^*$ as we pick $\wb_0=\boldsymbol{0}$. Thus \eqref{eq:bound_bias_onehot_step3} implies that
\begin{align*}
\bias &\le \frac{1}{N^2\gamma^2}\cdot \big\|(\Ib-\gamma\Hb)^{N/2}\wb^*\big\|_{\Hb_{0:k}^{-1}} + \big\|(\Ib-\gamma\Hb)^{N/2}\wb^*\big\|_{\Hb_{k:\infty}}^2.
\end{align*}

Then we will deal with the variance error. Plugging \eqref{eq:bound_Ct} into \eqref{eq:bound_bias_var_step0} gives
\begin{align*}
\var&\le \frac{\sigma^2\gamma}{N^2}\sum_{t=0}^{N-1}\sum_{k=t}^{N-1}\la(\Ib-\gamma\Hb)^{k-t}\Hb, \Ib-(\Ib-\gamma\Hb)^{N+t}\ra \notag\\
&\le \frac{\sigma^2\gamma}{N^2}\sum_{t=0}^{N-1}\bigg\la\sum_{k=0}^{N-1}(\Ib-\gamma\Hb)^{k}\Hb, \Ib-(\Ib-\gamma\Hb)^{N+t}\bigg\ra\notag\\
& = \frac{\sigma^2}{N^2}\sum_{t=0}^{N-1}\bigg\la\Ib-(\Ib-\gamma\Hb)^{N}, \Ib-(\Ib-\gamma\Hb)^{N+t}\bigg\ra\notag\\
&\le \frac{\sigma^2}{N}\Big\la\Ib-(\Ib-\gamma\Hb)^{2N}, \Ib-(\Ib-\gamma\Hb)^{2N}\Big\ra.
\end{align*}
We then use the inequality $(1-x)^N\ge\min\{0,1-xN\}$ again and thus the above inequality further leads to
\begin{align*}
\var&\le \frac{\sigma^2}{N}\cdot\sum_{i}\min\{1, 4N^2\gamma^2\lambda_i^2\}\notag\\
&\le \frac{4\sigma^2}{N}\cdot\bigg(k + N^2\gamma^2\sum_{i>k}\lambda_i^2\bigg)
\end{align*}
for any $k\in[d]$. 
\end{proof}

\subsection{Excess risk bound of ridge regression}

\begin{lemma}\label{lemma:bias_var_decomposition_ridge}
Let $\Xb\in\RR^{N\times d}$ be the training data matrix and $\wb_{\mathrm{ridge}}(N;\lambda)$ be the solution of ridge regression with parameter $\lambda$ and sample size $N$, then for any $\lambda>0$
\begin{align*}
\EE[L(\wb_{\mathrm{ridge}}(N;\lambda))]-L(\wb^*) = \bias + \var,
\end{align*}
where
\begin{align*}
\bias  &= \lambda^2\cdot\EE\big[\wb^{*\top}(\Xb^\top\Xb+\lambda\Ib)^{-1}\Hb(\Xb^\top\Xb+\lambda\Ib)^{-1}\wb^*\big]\notag\\
\var & = \sigma^2\cdot\EE\big[\tr\big((\Xb^\top\Xb+\lambda\Ib)^{-1}\Xb^\top\Xb(\Xb^\top\Xb+\lambda\Ib)^{-1}\Hb\big)\big],
\end{align*}
where the expectations are taken over the randomness of the training data matrix $\Xb$. 
\end{lemma}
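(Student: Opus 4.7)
The plan is to perform a standard bias--variance decomposition by expanding the closed-form ridge estimator at the quadratic-form level and then taking expectation over the noise, which is independent of the design.

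First I would substitute $\yb = \Xb \wb^* + \boldsymbol{\xi}$ (where $\boldsymbol{\xi}\in\RR^N$ has $\EE[\boldsymbol{\xi}]=0$, $\EE[\boldsymbol{\xi}\boldsymbol{\xi}^\top]=\sigma^2\Ib_N$, and is independent of $\Xb$ by Assumption \ref{assump:model_noise}) into $\wb_{\mathrm{ridge}}(N;\lambda)=(\Xb^\top\Xb+\lambda\Ib)^{-1}\Xb^\top\yb$. The key algebraic identity
\[
(\Xb^\top\Xb+\lambda\Ib)^{-1}\Xb^\top\Xb \;=\; \Ib \;-\; \lambda(\Xb^\top\Xb+\lambda\Ib)^{-1}
\]
then gives the clean decomposition
\[
\wb_{\mathrm{ridge}}(N;\lambda) - \wb^* \;=\; -\lambda\, \Mb\, \wb^* \;+\; \Mb\, \Xb^\top \boldsymbol{\xi}, \qquad \Mb := (\Xb^\top\Xb+\lambda\Ib)^{-1}.
\]
This cleanly separates the shrinkage-induced bias (first term, a function of $\Xb$ only) from the noise-driven fluctuation (second term, linear in $\boldsymbol{\xi}$ conditional on $\Xb$).

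Second, I would use Assumption \ref{assump:model_noise} to reduce the excess population risk on a fresh example $(\xb,y)$ to the quadratic form $\frac{1}{2}(\wb_{\mathrm{ridge}}-\wb^*)^\top \Hb (\wb_{\mathrm{ridge}}-\wb^*)$ -- the cross term with the test-point noise vanishes by independence and zero mean. Squaring the displayed identity and taking expectation over $(\Xb,\boldsymbol{\xi})$, the cross term in $\boldsymbol{\xi}$ drops because $\EE[\boldsymbol{\xi}\mid \Xb]=0$; the pure shrinkage piece yields exactly $\lambda^2\,\EE[\wb^{*\top}\Mb\Hb\Mb\wb^*]$, the claimed bias. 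For the noise piece, conditioning on $\Xb$ and applying $\EE[\boldsymbol{\xi}^\top \Ab \boldsymbol{\xi}\mid\Xb]=\sigma^2\tr(\Ab)$ together with the cyclic property of trace converts $\sigma^2\EE[\tr(\Xb\Mb\Hb\Mb\Xb^\top)]$ into $\sigma^2\EE[\tr(\Mb\Xb^\top\Xb\Mb\Hb)]$, the claimed variance.

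There is no genuine obstacle here: the proof is purely algebraic and uses only that $\Xb^\top\Xb+\lambda\Ib$ is invertible for $\lambda>0$, so no rank condition on $\Xb$ is required and the decomposition applies in both the under- and overparameterized regimes. The only bit of care is handling the $\tfrac{1}{2}$ from the definition of $L$ consistently with the stated $\mathrm{Bias}$/$\mathrm{Var}$ expressions.
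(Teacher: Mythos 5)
Your proposal is correct and follows essentially the same route as the paper: substitute $\yb=\Xb\wb^*+\boldsymbol{\xi}$, drop the cross terms using $\EE[\boldsymbol{\xi}\mid\Xb]=0$, apply the identity $(\Xb^\top\Xb+\lambda\Ib)^{-1}\Xb^\top\Xb-\Ib=-\lambda(\Xb^\top\Xb+\lambda\Ib)^{-1}$ for the bias, and use $\EE[\boldsymbol{\xi}\boldsymbol{\xi}^\top\mid\Xb]=\sigma^2\Ib$ with the cyclic trace property for the variance. Your parenthetical about tracking the factor $\tfrac12$ in $L$ is apt, since the paper itself silently drops it between the definition of $L$ and the stated expressions, but this affects nothing downstream.
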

\begin{proof}
Recall that the solution of ridge regression takes form
\begin{align*}
\wb_{\mathrm{ridge}}(N;\lambda) = (\Xb^\top\Xb+\lambda\Ib)^{-1}\Xb^\top\yb,
\end{align*}
where $\Xb$ is the data matrix and $\yb$ is the response vector. Then according to the definition of the loss function $L(\wb)$, we have
\begin{align*}
\EE[L(\wb_{\mathrm{ridge}}(N;\lambda))] &= \EE\Big[\big(y - \la\wb_{\mathrm{ridge}}(N;\lambda), \xb\ra\big)^2\Big] \notag\\
&= \EE\Big[\big(\la\wb^*, \xb\ra - \la\wb_{\mathrm{ridge}}(N;\lambda), \xb\ra\big)^2\Big] +  \EE\Big[\big(y - \la\wb^*, \xb\ra\big)^2\Big] \notag\\
&\qquad+ 2\EE\big[\big(\la\wb^*, \xb\ra - \la\wb_{\mathrm{ridge}}(N;\lambda), \xb\ra\big)\cdot\big(y - \la\wb^*, \xb\ra\big)\big]\notag\\
& = \EE[\|\wb_{\mathrm{ridge}}(N;\lambda)-\wb^*\|_\Hb^2] + L(\wb^*),
\end{align*}
where the last equation is by Assumption \ref{assump:model_noise}. Then regarding $\EE[\|\wb_{\mathrm{ridge}}(N;\lambda)-\wb^*\|_\Hb^2]$, let $\bxi = \yb - \Xb\wb^*$ be the model noise vector, we have
\begin{align*}
\EE[\|\wb_{\mathrm{ridge}}(N;\lambda)-\wb^*\|_\Hb^2]& = \EE\big[\big\|(\Xb^\top\Xb+\lambda\Ib)^{-1}\Xb^\top\yb-\wb^*\big\|_\Hb^2\big]\notag\\
&=\EE\big[\big\|(\Xb^\top\Xb+\lambda\Ib)^{-1}\Xb^\top(\Xb\wb^* +\bxi)-\wb^*\big\|_\Hb^2\big]\notag\\
& = \underbrace{\EE\big[\big\|(\Xb^\top\Xb+\lambda\Ib)^{-1}\Xb^\top\Xb\wb^* -\wb^*\big\|_\Hb^2\big]}_{\bias}+\underbrace{\EE\big[\big\|(\Xb^\top\Xb+\lambda\Ib)^{-1}\Xb^\top\bxi\big\|_\Hb^2\big]}_{\var}.
\end{align*}
where in the last inequality we again apply Assumption \ref{assump:model_noise} that $\EE[\bxi|\Xb] = \boldsymbol{0}$. More specifically, the bias error can be reformulated as
\begin{align*}
\bias &=\EE\big[\big\|\big((\Xb^\top\Xb+\lambda\Ib)^{-1}\Xb^\top\Xb-\Ib\big)\wb^*\big\|_\Hb^2 \big]\notag\\
&= \lambda^2\EE\big[\big\|(\Xb^\top\Xb+\lambda\Ib)^{-1}\wb\big\|_\Hb^2\big]\notag\\
& = \lambda^2\EE\big[\wb^{*\top}(\Xb^\top\Xb+\lambda\Ib)^{-1}\Hb(\Xb^\top\Xb+\lambda\Ib)^{-1}\wb^*\big].
\end{align*}
In terms of the variance error, note that by Assumption \ref{assump:model_noise} we have $\EE[\bxi\bxi^\top|\Xb]=\sigma^2\Ib$, then 
\begin{align*}
\var &= \EE\big[\big\|(\Xb^\top\Xb+\lambda\Ib)^{-1}\Xb^\top\bepsilon\big\|_\Hb^2\big]\notag\\
&= \EE\big[\tr\big((\Xb^\top\Xb+\lambda\Ib)^{-1}\Xb^\top\bxi\bxi^\top\Xb(\Xb^\top\Xb+\lambda\Ib)^{-1}\Hb\big)\big]\notag\\
& = \sigma^2\cdot\EE\big[\tr\big((\Xb^\top\Xb+\lambda\Ib)^{-1}\Xb^\top\Xb(\Xb^\top\Xb+\lambda\Ib)^{-1}\Hb\big)\big].
\end{align*}
\end{proof}

\begin{lemma}\label{lemma:excessrisk_ridge_onehot}
The solution of ridge regression with sample size $N$ and regularization parameter $\lambda$ satisfies
\begin{align*}
\EE[L(\wb_{\mathrm{ridge}}(N;\lambda))]-L(\wb^*) = \ridgebias + \ridgevar,
\end{align*}
where
\begin{align*}
\ridgebias &\gtrsim \max\bigg\{\sum_i (1-\lambda_i)^N\cdot\lambda_i\wb^*[i]^2,\sum_{i=1}^{k^*} \frac{\lambda^2\lambda_i\wb^*[i]^2}{(N\lambda_i+\lambda)^2} +\sum_{i>k^*} \lambda_i\wb^*[i]^2\bigg\} \notag\\  
\ridgevar 
&\gtrsim \sigma^2\cdot\Bigg(\sum_{i=1}^{k^*}\frac{N\lambda_i^2}{(N\lambda_i+\lambda)^2} + \sum_{i>k^*}\frac{N\lambda_i^2}{(1+\lambda)^2}\Bigg),
\end{align*}
where $k^* = \min\{k:N\lambda_k\le 1\}$.
\end{lemma}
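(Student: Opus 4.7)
\textbf{Proof proposal for Lemma \ref{lemma:excessrisk_ridge_onehot}.}
The plan is to reduce everything to scalar estimates about the coordinatewise sample counts $n_i := |\{t : \xb_t = \eb_i\}| \sim \text{Binomial}(N,\lambda_i)$. Starting from Lemma \ref{lemma:bias_var_decomposition_ridge}, I observe that under the one-hot data distribution
\[
\Xb^\top\Xb = \sum_{t=1}^N \xb_t\xb_t^\top = \diag(n_1,\ldots,n_d),
\]
which commutes with $\Hb = \diag(\lambda_1,\ldots,\lambda_d)$. The bias/variance expressions therefore diagonalize to
\[
\bias = \lambda^2 \sum_i \lambda_i (\wb^*[i])^2 \cdot \EE\!\left[\frac{1}{(n_i+\lambda)^2}\right],
\qquad
\var = \sigma^2 \sum_i \lambda_i \cdot \EE\!\left[\frac{n_i}{(n_i+\lambda)^2}\right],
\]
so the entire proof becomes a coordinatewise lower bound on these two scalar expectations.

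For the bias, the two advertised lower bounds come from two disjoint pointwise bounds on the random variable $\lambda^2/(n_i+\lambda)^2$. The first bound uses only the event $\{n_i=0\}$, which has probability exactly $(1-\lambda_i)^N$ and yields $\EE[\lambda^2/(n_i+\lambda)^2] \ge (1-\lambda_i)^N$; summing recovers $\sum_i (1-\lambda_i)^N \lambda_i(\wb^*[i])^2$. For the second bound I split at $k^*$. For $i\le k^*$ (the head, where $N\lambda_i\ge 1$) I bound $\EE[(n_i+\lambda)^2] = N\lambda_i(1-\lambda_i)+(N\lambda_i+\lambda)^2 \lesssim (N\lambda_i+\lambda)^2$ and invoke Jensen's inequality on the convex function $1/x^2$ to obtain $\EE[1/(n_i+\lambda)^2] \gtrsim 1/(N\lambda_i+\lambda)^2$. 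For $i>k^*$ (the tail, where $N\lambda_i<1$) the point mass at zero again dominates: $(1-\lambda_i)^N \ge (1-1/N)^N \gtrsim 1$, so $\EE[\lambda^2/(n_i+\lambda)^2] \gtrsim 1$ and the contribution is $\gtrsim \lambda_i(\wb^*[i])^2$.

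For the variance I need a matching lower bound on $\EE[n_i/(n_i+\lambda)^2]$. In the head, when $N\lambda_i \ge 10$ (say), standard Binomial concentration gives $n_i \in [N\lambda_i/2,\, 2N\lambda_i]$ with constant probability, and on that event $n_i/(n_i+\lambda)^2 \gtrsim N\lambda_i/(N\lambda_i+\lambda)^2$; when $1\le N\lambda_i \le 10$, I instead use $P(n_i=1) = N\lambda_i(1-\lambda_i)^{N-1} \gtrsim N\lambda_i$, a constant, yielding $\EE[n_i/(n_i+\lambda)^2] \gtrsim 1/(1+\lambda)^2 \gtrsim N\lambda_i/(N\lambda_i+\lambda)^2$. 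In the tail ($i>k^*$, $N\lambda_i<1$) the typical value of $n_i$ is zero, but a single-index atom still suffices: $P(n_i=1) = N\lambda_i(1-\lambda_i)^{N-1} \gtrsim N\lambda_i$ since $\lambda_i\le 1/N$, and on that event $n_i/(n_i+\lambda)^2 = 1/(1+\lambda)^2$, producing the claimed $N\lambda_i^2/(1+\lambda)^2$ per-coordinate contribution.

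The main technical obstacle is the tail portion of the variance: for $i>k^*$ the expectation of $n_i$ is sub-constant, and a naive lower bound using the mean would lose a factor of $N$. The key trick is that the atom at $n_i=1$ has probability of order $N\lambda_i$ — not smaller — and contributes $\Theta(1)$ to $n_i/(n_i+\lambda)^2$, giving the correct linear-in-$N\lambda_i$ scaling; verifying $(1-\lambda_i)^{N-1}\gtrsim 1$ (which holds since $\lambda_i \le 1/N$ in this regime) is what makes this clean. Everything else is bookkeeping: split the sum at $k^*$, apply these per-coordinate estimates, and combine.
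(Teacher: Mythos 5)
Your proposal is correct and follows essentially the same route as the paper's proof: diagonalize via the binomial counts $\mu_i=n_i\sim\mathrm{Binom}(N,\lambda_i)$, lower-bound the bias with the atom at $n_i=0$ and a Jensen-type argument on the head coordinates, and lower-bound the variance with Bernstein concentration for $N\lambda_i$ large and the atom at $n_i=1$ (with $(1-\lambda_i)^{N-1}\gtrsim 1$) for the tail. The only cosmetic difference is that the paper applies Jensen directly to the convex map $x\mapsto 1/(x+\lambda)^2$ to get $\EE[1/(n_i+\lambda)^2]\ge 1/(N\lambda_i+\lambda)^2$, whereas you route through $\EE[(n_i+\lambda)^2]\lesssim(N\lambda_i+\lambda)^2$; both are valid.
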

\begin{proof}
In the one-hot case, it is easy to verify that $\Xb^\top\Xb = \sum_{i=1}^n\xb_i\xb_i^\top$ is a diagonal matrix. Let $\mu_1,\mu_2,\dots,\mu_d$ be the eigenvalues of $\Xb^\top\Xb$ corresponding to the eigenvectors $\eb_1, \eb_2,\dots,\eb_d$ respectively. Then by  Lemma \ref{lemma:bias_var_decomposition_ridge}, we have the following results for the bias and variance errors of ridge regression.
\begin{align}\label{eq:sum_expectation_bias_onehot_ridge}
\ridgebias &= \lambda^2\cdot\EE\big[\wb^{*\top}(\Xb^\top\Xb+\lambda\Ib)^{-1}\Hb(\Xb^\top\Xb+\lambda\Ib)^{-1}\wb^*\big]\notag\\
&= \lambda^2\sum_i \EE_{\mu_i}\bigg[\frac{\lambda_i\wb^*[i]^2}{(\mu_i+\lambda)^2}\bigg],
\end{align}
where the expectation in the first equation is taken over the training data $\Xb$ and in the second inequality the expectation is equivalently taken over the eigenvalues $\mu_1,\dots,\mu_d$. Since $\xb_i$ can only take on natural basis, the eigenvalue $\mu_i$ can be understood as the number of training data that equals $\eb_i$. Note that the probability of sampling $\eb_i$ is $\lambda_i$, then we can get that $\mu_i$ has a marginal distribution $\mathrm{Binom}(N,\lambda_i)$, where $N$ is the sample size. Then in terms of each expectation in \eqref{eq:sum_expectation_bias_onehot_ridge}, we first have
\begin{align*}
\EE_{\mu_i}\bigg[\frac{\lambda_i\wb^*[i]^2}{(\mu_i+\lambda)^2}\bigg]\ge \frac{\lambda_i\wb^*[i]^2}{(\EE[\mu_i]+\lambda)^2} = \frac{\lambda_i\wb^*[i]^2}{(N\lambda_i+\lambda)^2},
\end{align*}
where the first inequality is by applying Jensen's inequality to the convex function $f(x)=1/(x+\lambda)^2$. On the other hand, we also have
\begin{align*}
\EE_{\mu_i}\bigg[\frac{\lambda_i\wb^*[i]^2}{(\mu_i+\lambda)^2}\bigg]\ge \frac{\lambda_i\wb^*[i]^2}{\lambda^2}\cdot \PP(\mu_i=0) = \frac{\lambda_i\wb^*[i]^2}{\lambda^2}\cdot (1-\lambda_i)^N.
\end{align*}
Therefore, combining the above two lower bounds, we can get the following lower bound on the bias error by \eqref{eq:sum_expectation_bias_onehot_ridge}
\begin{align}\label{eq:sum_expectation_bias_onehot_ridge2}
\ridgebias = \lambda^2\sum_i \EE_{\mu_i}\bigg[\frac{\lambda_i\wb^*[i]^2}{(\mu_i+\lambda)^2}\bigg]\ge \sum_{i}\max\bigg\{\frac{\lambda^2\lambda_i\wb^*[i]^2}{(N\lambda_i+\lambda)^2}, \lambda_i\wb^*[i]^2\cdot(1-\lambda_i)^N\bigg\}.
\end{align}
Therefore, a trivial lower bound on the bias error of ridge regression is
\begin{align*}
\ridgebias\ge \sum_i (1-\lambda_i)^N\cdot\lambda_i\wb^*[i]^2.
\end{align*}
Additionally, note that $(1-\lambda_i)^N\ge 0.25$ if $\lambda_i\le 1/N$ and $N\ge 2$. Then let $k^* = \min\{k: N\lambda_k\le 1\}$, \eqref{eq:sum_expectation_bias_onehot_ridge2} further leads to
\begin{align*}
\ridgebias \ge \sum_{i=1}^{k^*} \frac{\lambda^2\lambda_i\wb^*[i]^2}{(N\lambda_i+\lambda)^2} + 0.25\cdot\sum_{i>k^*} \lambda_i\wb^*[i]^2.    
\end{align*}

This completes the proof of the lower bound of the bias error.

By Lemma \ref{lemma:bias_var_decomposition_ridge}, we have
\begin{align}\label{eq:sum_expectation_var_onehot_ridge}
\ridgevar &= \sigma^2\cdot\EE\big[\tr\big((\Xb^\top\Xb+\lambda\Ib)^{-1}\Xb^\top\Xb(\Xb^\top\Xb+\lambda\Ib)^{-1}\Hb\big)\big]\notag\\
&= \sigma^2\cdot\sum_i \EE_{\mu_i}\bigg[\frac{\lambda_i\mu_i}{(\mu_i+\lambda)^2}\bigg],
\end{align}
Regarding the variance error, we cannot use the similar approach since the function $g(x)=x/(x+\lambda)^2$ is no longer convex. Instead, we will directly make use of property of the binomial distribution of $\mu_i$ to prove the desired bound. In particular, note that $\mu_i\sim\mathrm{binom}(N,\lambda_i)$, by Bernstein inequality, we have
\begin{align*}
\PP(|\mu_i-N\lambda_i|\le t)\ge 1 - 2\exp\bigg(-\frac{t^2}{2(N\lambda_i+t/3)}\bigg).
\end{align*}
If $N\lambda_i\ge 6$, by set $t = \sqrt{3N\lambda_i}$, we have 
\begin{align*}
\PP\big(\mu_i\in\big[N\lambda_i-\sqrt{3N\lambda_i},N\lambda_i+\sqrt{3N\lambda_i}\big]\big)\ge 1 -2 e^{-1} \ge 0.2,
\end{align*}
which further implies that 
\begin{align*}
\PP\big(\mu_i\in\big[0.25N\lambda_i,2N\lambda_i\big]\big)\ge 0.2,
\end{align*}
where we use the fact that $\sqrt{3N\lambda_i}\le 0.75N\lambda_i$ if $N\lambda_i>6$. Therefore, in this case, we can get
\begin{align}\label{eq:bound_case1}
\EE_{\mu_i}\bigg[\frac{\lambda_i\mu_i}{(\mu_i+\lambda)^2}\bigg]\ge 0.2\min\bigg\{\frac{0.25N\lambda_i^2}{(0.25N\lambda_i+\lambda)^2},\frac{2N\lambda_i^2}{(2N\lambda_i+\lambda)^2}\bigg\}\ge \frac{0.05N\lambda_i^2}{(N\lambda_i+\lambda)^2}.
\end{align}
Then we consider the case that $N\lambda_i<6$. In particular, we have
\begin{align}\label{eq:lowerbound_expect_var_case2}
\EE_{\mu_i}\bigg[\frac{\lambda_i\mu_i}{(\mu_i+\lambda)^2}\bigg]\ge \frac{\lambda_i}{(1+\lambda)^2}\cdot \PP(\mu_i=1).
\end{align}
Note that $\mu_i$ follows $\text{Binom}(N,\lambda_i)$ distribution, which implies that 
\begin{align*}
\PP(\mu_i=1) = N\lambda_i (1-\lambda_i)^{N-1}\ge N\lambda_i \big(1-\frac{6}{N}\big)^{N-1}\ge e^{-6}N\lambda_i.
\end{align*}
Plugging this into \eqref{eq:lowerbound_expect_var_case2} gives
\begin{align}\label{eq:bound_case2}
\EE_{\mu_i}\bigg[\frac{\lambda_i\mu_i}{(\mu_i+\lambda)^2}\bigg]\ge\frac{e^{-6}N\lambda_i^2}{(1+\lambda)^2}.
\end{align}
Therefore, let $k^*=\min\{k:N\lambda_k\le 1\}$, then for all $i\le k^*$, combining \eqref{eq:bound_case1} and \eqref{eq:bound_case2} gives
\begin{align*}
\EE_{\mu_i}\bigg[\frac{\lambda_i\mu_i}{(\mu_i+\lambda)^2}\bigg]\ge \frac{e^{-6}N\lambda_i^2}{(N\lambda_i+\lambda)^2}.
\end{align*}
For all $i> k^*$, we can directly apply \eqref{eq:bound_case2} to get the lower bound. Therefore, according to \eqref{eq:sum_expectation_var_onehot_ridge}, the variance error can be lower bounded as follows,
\begin{align*}
\ridgevar 
&= \sigma^2\cdot\sum_i \EE_{\mu_i}\bigg[\frac{\lambda_i\mu_i}{(\mu_i+\lambda)^2}\bigg]\notag\\
&\ge e^{-6}\sigma^2\cdot\bigg(\sum_{i=1}^{k^*}\frac{N\lambda_i^2}{(N\lambda_i+\lambda)^2} + \sum_{i>k^*}\frac{N\lambda_i^2}{(1+\lambda)^2}\bigg).
\end{align*}
This completes the proof of the lower bound of the variance error.
\end{proof}

\subsection{Proof of Theorem \ref{thm:comparison_onehot}}
\begin{proof}
In the beginning, we first recall the excess risk upper bound of SGD (see Lemma \ref{lemma:excessrisk_sgd_onehot}) and excess risk lower bound of ridge (see Lemma \ref{lemma:excessrisk_sgd_onehot}) as follows,
\begin{align*}
\EE[L(\wb_{\mathrm{sgd}}(\Ns;\gamma))]-L(\wb^*) \le 2\cdot\sgdbias + 2\cdot\sgdvar,
\end{align*}
where 
\begin{align}\label{eq:bias_var_sgd_onehot}
\sgdbias &\lesssim \frac{1}{N^2\gamma^2}\cdot \big\|(\Ib-\gamma\Hb)^{N/2}\wb^*\big\|_{\Hb_{0:k_1}^{-1}} + \big\|(\Ib-\gamma\Hb)^{N/2}\wb^*\big\|_{\Hb_{k_1:\infty}}^2\notag\\
\sgdvar &\lesssim\sigma^2\cdot\bigg(\frac{k_2}{N} + N\gamma^2\sum_{i>k_2}\lambda_i^2\bigg)
\end{align}
for arbitrary $k_1,k_2\in[d]$.

\begin{align*}
\EE[L(\wb_{\mathrm{ridge}}(N;\lambda))]-L(\wb^*) = \ridgebias + \ridgevar,
\end{align*}
where
\begin{align}\label{eq:bias_var_ridge_onehot}
\ridgebias&\gtrsim \max\bigg\{\sum_i (1-\lambda_i)^N\cdot\lambda_i\wb^*[i]^2,\sum_{i=1}^{k^*} \frac{\lambda^2\lambda_i\wb^*[i]^2}{(N\lambda_i+\lambda)^2} +\sum_{i>k^*} \lambda_i\wb^*[i]^2\bigg\} \notag\\  
\ridgevar
&\gtrsim \sigma^2\cdot\Bigg(\sum_{i=1}^{k^*}\frac{N\lambda_i^2}{(N\lambda_i+\lambda)^2} + \sum_{i>k^*}\frac{N\lambda_i^2}{(1+\lambda)^2}\Bigg),
\end{align}
where $k^* = \min\{k:N\lambda_k\le 1\}$.

Next, we will show that the excess risk of SGD can be provably upper bounded (up to constant factors) by the excess risk of ridge regression respectively, given the sample size of ridge regression $\Nr$ (which we will use $N$ in the remaining proof for simplicity). In particular, we consider two cases regarding different $\lambda$: \textbf{Case I} $\lambda<1$ and \textbf{Case II} $\lambda\ge 1$.

For \textbf{Case I}, \eqref{eq:bias_var_ridge_onehot} gives the following bias lower bound for ridge regression,
\begin{align*}
\ridgebias &\gtrsim \sum_i (1-\lambda_i)^N\cdot\lambda_i\wb^*[i]^2\notag\\
&\gtrsim \sum_{i>k^*} \lambda_i\wb^*[i]^2 \notag\\  
\ridgevar&\gtrsim \sigma^2\cdot\Bigg(\sum_{i=1}^{k^*}\frac{N\lambda_i^2}{(N\lambda_i+\lambda)^2} + \sum_{i>k^*}\frac{N\lambda_i^2}{(1+\lambda)^2}\Bigg)\notag\\
&\overset{(i)}\eqsim\sigma^2\cdot\Bigg(\frac{k^*}{N} + N\sum_{i>k^*}\lambda_i^2\Bigg),
\end{align*}
where in $(i)$ we use the fact that $N\lambda_i+\lambda \eqsim N\lambda_i$ for all $i\le k^*$.

Then let $R^2=\|\wb^*\|_2^2/\sigma^2$ denotes the signal-to-noise ratio, let's consider the following configuration for SGD:
\begin{align*}
\Ns = N,\quad \gamma = 1. 
\end{align*}
Then by \eqref{eq:bias_var_sgd_onehot} and setting $k_1=0$ and $k_2=k^*$, we get
\begin{align*}
\sgdbias &\lesssim  \sum_{i}(1-\lambda_i)^N\cdot\lambda_i\wb^*[i]^2\notag\\
\sgdvar &\lesssim\sigma^2\cdot\bigg(\frac{k^*}{\Ns} + \Ns\gamma^2\sum_{i>k^*}\lambda_i^2\bigg)\notag\\
&\overset{(i)}\lesssim \sigma^2\cdot\bigg(\frac{k^*}{N} + N\sum_{i>k^*}\lambda_i^2\bigg).
\end{align*}
Therefore, given such choice of $\Ns$ and $\gamma$, we have
\begin{align*}
\EE[L(\wb_{\mathrm{sgd}}(\Ns;\gamma))]-L(\wb^*) &\lesssim \sgdbias + \sgdvar\notag\\
&\lesssim \sum_{i}(1-\lambda_i)^N\cdot\lambda_i\wb^*[i]^2 + \sigma^2\cdot\bigg(\frac{k^*}{N} + N\sum_{i>k^*}\lambda_i^2\bigg)\notag\\
&\lesssim \ridgebias + \ridgevar\notag\\
&=\EE[L(\wb_{\mathrm{ridge}(N;\lambda)})]-L(\wb^*).
\end{align*}

For \textbf{Case II}, we can define $\tilde k^*=\min\{k:N\lambda_k\le\lambda\}$, then \eqref{eq:bias_var_ridge_onehot} implies 
\begin{align*}
\ridgebias &\gtrsim \sum_{i=1}^{k^*} \frac{\lambda^2\lambda_i\wb^*[i]^2}{(N\lambda_i+\lambda)^2} +\sum_{i>k^*} \lambda_i\wb^*[i]^2 \notag\\  
&\overset{(i)}\eqsim \sum_{i=1}^{\tilde k^*} \frac{\lambda^2\wb^*[i]^2}{N^2\lambda_i} +\sum_{i>\tilde k^*} \lambda_i\wb^*[i]^2\notag\\
\ridgevar&\gtrsim \sigma^2\cdot\Bigg(\sum_{i=1}^{k^*}\frac{N\lambda_i^2}{(N\lambda_i+\lambda)^2} + \sum_{i>k^*}\frac{N\lambda_i^2}{(1+\lambda)^2}\Bigg)\notag\\
&\overset{(ii)}\eqsim\sigma^2\cdot\Bigg(\frac{\tilde k^*}{N} + \frac{N}{\lambda^2}\sum_{i>\tilde k^*}\lambda_i^2\Bigg),
\end{align*}
where $(i)$ and $(ii)$ are due to the fact that for every $i\le k^*$, we have
\begin{align*}
\frac{1}{(N\lambda_i+\lambda)^2}\eqsim \left\{
\begin{array}{ll}
  \frac{1}{N^2\lambda_i}   &  i\le \tilde k^*\\
   \frac{1}{\lambda^2}  &  \tilde k^*<i\le k^*.
\end{array}
\right.
\end{align*}
Therefore, we can apply the following configuration for SGD:
\begin{align*}
\Ns = N, \quad \gamma = 1/\lambda.
\end{align*}
Then by \eqref{eq:bias_var_sgd_onehot} and set $k_1=k_2=\tilde k^*$, we have
\begin{align*}
&\EE[L(\wb_{\mathrm{sgd}}(\Ns;\gamma))]-L(\wb^*) \notag\\
&\lesssim \sgdbias + \sgdvar\notag\\
& \lesssim \sum_{i=1}^{\tilde k^*}\frac{(1-\gamma\lambda_i)^{\Ns}\wb^*[i]^2}{\lambda_i\Ns^2\gamma^2} + \sum_{i>\tilde k^*}\lambda_i\wb^*[i]^2 + \sigma^2\cdot\bigg(\frac{\tilde k^*}{\Ns} + \Ns\gamma^2\sum_{i>\tilde k^*}\lambda_i^2\bigg)\notag\\
&\eqsim \sum_{i=1}^{\tilde k^*}\frac{\lambda^2\wb^*[i]^2}{\lambda_iN^2} + \sum_{i>\tilde k^*}\lambda_i\wb^*[i]^2 + \sigma^2\cdot\bigg(\frac{\tilde k^*}{N} + \frac{N}{\lambda^2}\sum_{i>\tilde k^*}\lambda_i^2\bigg)\notag\\
&\lesssim \ridgebias + \ridgevar\notag\\
&=\EE[L(\wb_{\mathrm{ridge}}(N;\lambda))]-L(\wb^*).
\end{align*}

Combining the results for these two cases completes the proof.
\end{proof}

\subsection{Proof of Theorem \ref{thm:SGD>ridge_onehot}}

\begin{proof}
For simplicity we define $N:=\Ns$ in the proof.
\begin{itemize}
    \item The data covariance matrix $\Hb$ has the following spectrum
\begin{align*}
\lambda_i = 
\begin{cases}
    \frac{\log(N)}{N^{1/2}} &  i=1,\\
    \frac{1-\log(N)/N^{1/2}}{N} & 1< i \le N, \\
    0 & N < i \le d
\end{cases}
\end{align*}
\item The true parameter $\wb^*$ is given by 
\begin{align*}
\wb^*[i] = 
\begin{cases}
    \sigma\cdot\sqrt{\frac{N^{1/2}}{\log(N)}}  &  i=1,\\
    0 & 1<i\le d.
\end{cases}
\end{align*}
\end{itemize}

 Then it is easy to verify that $\tr(\Hb)=1$. For SGD, we consider setting the stepsize as $\gamma^*=N^{-1/2}$. Then by Lemma \ref{lemma:excessrisk_sgd_onehot} and choosing $k_1=1$, we have the following on the bias error of SGD,
\begin{align*}
\sgdbias &\lesssim \sum_{i=1}^{k^*}\frac{(1-\gamma\lambda_i)^{\Ns}\wb^*[i]^2}{\lambda_i\Ns^2\gamma^2} + \sum_{i>k^*}\lambda_i\wb^*[i]^2\lesssim \frac{(1-\log(N)/N)^{N}\sigma^2}{\log^2(N)} \lesssim \frac{\sigma^2}{N}.
\end{align*}
For variance error, we can pick $k_2=1$ and get
\begin{align*}
\sgdvar &\lesssim\sigma^2\cdot\bigg(\frac{1}{N} + N\gamma^2\sum_{i>1}\lambda_i^2\bigg) \lesssim \sigma^2\bigg(\frac{1}{N} + \sum_{i>1}\lambda_i^2\bigg)\eqsim \frac{\sigma^2}{N}.
\end{align*}

Now let us characterize the excess risk of ridge regression. In terms of the bias error, by Lemma \ref{lemma:excessrisk_ridge_onehot} we have
\begin{align}\label{eq:bound_ridgebias_example_onehot}
\ridgebias \gtrsim \sum_{i=1}^{k^*}\frac{\lambda^2\lambda_i\wb^*[i]^2}{(\Nr\lambda_i+\lambda)^2} + \sum_{i>k^*}\lambda_i\wb^*[i]^2 \eqsim \frac{\lambda^2\sigma^2}{(\Nr\log(N)/N^{1/2}+\lambda)^2},
\end{align}
where $k^*=\min\{k:\Nr\lambda_k\le 1\}$. Then it is clear for ridge regression we must have $\lambda\lesssim \Nr\log(N)/N^{1/2}$ since otherwise $\ridgebias\gtrsim \sigma^2\gtrsim \mathrm{SGDRisk}$. Regarding the variance, we have
\begin{align*}
\ridgevar 
&\gtrsim \sigma^2\cdot\Bigg(\sum_{i=1}^{k^*}\frac{\Nr\lambda_i^2}{(\Nr\lambda_i+\lambda)^2} + \sum_{i>k^*}\frac{\Nr\lambda_i^2}{(1+\lambda)^2}\Bigg).
\end{align*}
Then we will consider two cases: (1) $\Nr\lesssim N$ and (2)
$\Nr\gtrsim N$. In the first case we can get $k^*=1$ and then 
\begin{align*}
\ridgevar 
&\gtrsim \sigma^2\cdot\Bigg(\frac{\Nr\log^2(N)/N^2}{(\Nr\log(N)/N+\lambda)^2} + \frac{\Nr}{N^2(1+\lambda)^2}\Bigg)\ge \frac{\Nr\sigma^2}{N^2(1+\lambda^2)}.
\end{align*}
In this case, we can get $k^*=1$ and thus
\begin{align*}
\ridgevar 
&\gtrsim \sigma^2\cdot\frac{\Nr\log^2(N)/N^2}{(\Nr\log(N)/N+\lambda)^2}\overset{(i)}\eqsim \frac{\sigma^2}{\Nr}\overset{(ii)}\gtrsim\frac{\sigma^2}{N},
\end{align*}
where $(i)$ is due to we require $\lambda\lesssim \Nr\log(N)/N^{1/2}$ to guarantee vanishing bias error and $(ii)$ is due to in this case we have $\Nr\lesssim N$. As a result, ridge regression cannot achieve smaller excess risk than SGD in this case.

In the second case we can get $k^*=N$ and then
\begin{align}\label{eq:bound_ridgevar_example_onehot}
\ridgevar 
&\gtrsim \sigma^2\cdot\bigg(\frac{\Nr\log^2(N)/N^2}{(\Nr\log(N)/N+\lambda)^2} + \frac{(k^*-1)\cdot\Nr/N^2}{(\Nr/N+\lambda^2)}\bigg)\notag\\
&\gtrsim \sigma^2\cdot\frac{N\Nr}{\Nr^2 + N^2\lambda^2},
\end{align}
where the second inequality is due to $k^*=N$. We will again consider two cases: (a) $\Nr\gtrsim N\lambda$ and (b) $\Nr\lesssim N\lambda$. Regarding Case (a) we have
\begin{align*}
\ridgevar\ge \frac{N\sigma^2}{\Nr},
\end{align*}
and it is clear that for all $\Nr\lesssim N^2$ we have $\ridgevar\gtrsim \sigma^2/N\gtrsim \mathrm{SGDRisk}$. Regarding Case (b), combining the lower bounds of bias \eqref{eq:bound_ridgebias_example_onehot} and variance \eqref{eq:bound_ridgevar_example_onehot} of ridge regression, we get
\begin{align*}
\mathrm{RidgeRisk} \gtrsim \sigma^2\cdot\bigg(\frac{\lambda^2N}{\Nr^2\log^2(N)}+\frac{\Nr}{N\lambda^2}\bigg)\gtrsim \frac{\sigma^2}{\Nr^{1/2}\log(N)},
\end{align*}
where the first inequality follows from the fact that $\lambda\lesssim \Nr\log(N)/N^{1/2}$ and $\Nr\lesssim N\lambda$, and  the second inequality is by  Cauchy-Schwartz inequality. This further suggests that $\mathrm{RidgeRisk}\lesssim \sigma^2/N\lesssim \mathrm{SGDRisk}$ if $\Nr\le N^2/\log^2(N)$, which completes the proof.

\end{proof}

\section{Proof of Gaussian Least Squares}\label{sec:proof_main_full}

\subsection{Excess risk bounds of SGD and ridge regression}

We first recall the excess risk bounds for SGD (with tail averaging) and ridge regression as follows.

\noindent\textbf{SGD with tail averaging} 
\begin{theorem}[Extension of Theorem 5.1 in \citet{zou2021benign}]\label{thm:generalization_error_tail}
Consider SGD with tail-averaging with initialization $\wb_0=\bm{0}$.
Suppose Assumption~\ref{assump:data_distribution} holds
and the stepsize satisfies $\gamma\lesssim 1/\tr(\Hb)$.
Then the excess risk can be upper bounded as follows,
\begin{align*}
\EE [L(\wb_{\mathrm{sgd}}(N;\gamma))] - L(\wb^*)
&\le  \sgdbias + \sgdvar,
\end{align*}
where 
\begin{align*}
 \sgdbias & \lesssim \frac{1}{\gamma^2N^2}\cdot\big\|(\Ib-\gamma\Hb)^N\wb^*\big\|_{\Hb_{0:k_1}^{-1}}^2 + \big\|(\Ib-\gamma\Hb)^N\wb^*\big\|_{\Hb_{k_1:\infty}}^2 \\
 \sgdvar & \lesssim \frac{\sigma^2 + \|\wb^*\|_\Hb^2}{N}\cdot\bigg(k_2 + N^2\gamma^2 \sum_{i> k_2}\lambda_i^2\bigg).
\end{align*}
where $k_1,k_2\in[d]$ are arbitrary.
\end{theorem}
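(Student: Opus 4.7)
The plan is to follow the operator-theoretic framework used in the proof of the one-hot excess risk bound (as carried out in Lemma~\ref{lemma:bias_var_decomposition_sgd_onehot} and Lemma~\ref{lemma:excessrisk_sgd_onehot} earlier in the appendix) but with the crucial modification that the fourth-moment identity for Gaussian data replaces the coordinate-wise simplification available in the one-hot case. First I would split each centered iterate $\betab_t := \wb_t - \wb^*$ into its bias part $\betab_t^{\bias}$ and variance part $\betab_t^{\var}$ exactly as in \eqref{eq:update_rule_eta_t}, and introduce the covariance operators $\Bb_t = \EE[\betab_t^{\bias} \otimes \betab_t^{\bias}]$ and $\Cb_t = \EE[\betab_t^{\var} \otimes \betab_t^{\var}]$. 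Applying Lemma~\ref{lemma:bias_var_decomposition_sgd_onehot} (which does not use the one-hot structure) reduces the excess risk of the tail-averaged estimator to bounding weighted traces of the form $\la (\Ib - \gamma\Hb)^{k-t}\Hb, \Bb_{N+t}\ra$ and similarly for $\Cb_{N+t}$.

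The technical core is the operator recursion for $\Bb_t$ and $\Cb_t$, which under Gaussianity uses $\EE[\xb\xb^\top\Ab\xb\xb^\top] = 2\Hb\Ab\Hb + \tr(\Ab\Hb)\Hb$. This yields
\begin{align*}
\Bb_t &= (\Ib - \gamma\Hb)\Bb_{t-1}(\Ib - \gamma\Hb) + \gamma^2 \bigl( \Hb\Bb_{t-1}\Hb + \tr(\Bb_{t-1}\Hb)\Hb \bigr),\\
\Cb_t &= (\Ib - \gamma\Hb)\Cb_{t-1}(\Ib - \gamma\Hb) + \gamma^2 \bigl( \Hb\Cb_{t-1}\Hb + \tr(\Cb_{t-1}\Hb)\Hb \bigr) + \gamma^2\sigma^2\Hb.
\end{align*}
Under the stepsize condition $\gamma \lesssim 1/\tr(\Hb)$, the linear map $\cT(\Ab) := \Hb\Ab + \Ab\Hb - \gamma(2\Hb\Ab\Hb + \tr(\Ab\Hb)\Hb)$ is PSD-preserving, so that $\Bb_t \preceq (\Ib - \gamma\Hb)\Bb_{t-1}(\Ib - \gamma\Hb) + \gamma^2 \tr(\Bb_{t-1}\Hb)\Hb$ up to constants, and an analogous bound holds for $\Cb_t$. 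Iterating the bias recursion from $\Bb_0 = \wb^* \wb^{*\top}$ and plugging into the tail-averaging identity produces the $\|(\Ib - \gamma\Hb)^N \wb^*\|_{\Hb_{0:k_1}^{-1}}^2$ term after using $[\Ib - (\Ib - \gamma\Hb)^N]^2\Hb^{-1} \preceq \Hb_{0:k_1}^{-1} + N^2\gamma^2\Hb_{k_1:\infty}$ for arbitrary $k_1$, exactly as in \eqref{eq:bound_bias_onehot_step3}.

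For the variance contribution the noise-injected recursion gives $\Cb_t \preceq c\gamma\sigma^2 \Ib$-type envelopes, but the Gaussian fourth-moment creates the extra $\tr(\Bb\Hb)\Hb$ self-excitation, which upon expansion contributes an extra $\|\wb^*\|_\Hb^2$ factor in the variance — this is precisely why the variance bound is multiplied by $(\sigma^2 + \|\wb^*\|_\Hb^2)$ rather than $\sigma^2$ alone. Applying the same $\min\{1/\lambda_i, N^2\gamma^2\lambda_i\}$-type split at threshold $k_2$ on $\Ib - (\Ib - \gamma\Hb)^{2N}$ yields the stated $k_2 + N^2\gamma^2\sum_{i>k_2}\lambda_i^2$ structure.

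The main obstacle is handling the $\tr(\Bb_t\Hb)\Hb$ self-coupling cleanly: unlike the one-hot case where $\mathrm{diag}(\Hb\Bb) = \Hb\bar\Bb$ decouples the recursion component-wise, under Gaussian data the trace term couples all eigencomponents, so one must carefully unroll the recursion using monotonicity of $\cI - \gamma\cT$ and then resum (essentially a discrete-time Grönwall argument) to show the aggregate contribution is controlled by $\|\wb^*\|_\Hb^2 \cdot \gamma N \cdot \Hb$; this is what both inflates the variance by the signal term and requires the stepsize restriction $\gamma \lesssim 1/\tr(\Hb)$ to keep the self-coupled recursion stable.
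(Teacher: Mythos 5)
Your sketch is correct, but note that the paper does not actually reprove this statement: its entire ``proof'' is the remark that Theorem 5.1 of \citet{zou2021benign} already establishes the bound and that, by inspection of that proof, the cutoffs $k^*, k^\dagger$ there can be replaced by arbitrary $k_1, k_2$. What you have written is a faithful reconstruction of that reference's argument --- the bias/variance operator recursions, the Gaussian fourth-moment identity $\EE[\xb\xb^\top\Ab\xb\xb^\top]=2\Hb\Ab\Hb+\tr(\Ab\Hb)\Hb$, the domination $\Hb\Ab\Hb\preceq\tr(\Ab\Hb)\Hb$, and the Gr\"onwall-type resummation of the trace self-coupling under $\gamma\lesssim 1/\tr(\Hb)$ that produces the $(\sigma^2+\|\wb^*\|_{\Hb}^2)$ factor --- which is also exactly the template the paper itself follows in its one-hot analysis (Lemmas \ref{lemma:bound_diagonal_covariance}--\ref{lemma:excessrisk_sgd_onehot}), so I would count this as the same approach, merely spelled out rather than cited; the one small imprecision is that PSD-preservation of the exact update $\Ab\mapsto\EE[(\Ib-\gamma\xb\xb^\top)\Ab(\Ib-\gamma\xb\xb^\top)]$ holds for any $\gamma$, and the stepsize restriction is really needed only for the convergence of the resummed self-excitation.
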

This theorem is a simple extension of Theorem 5.1 in \citet{zou2021benign}. In particular, we observe that though the original theorem is stated for some particular $k^*$ and $k^\dagger$, based on the proof, their results hold for arbitrary $k_1$ and $k_2$, as stated in Theorem \ref{thm:generalization_error_tail}.

\noindent\textbf{Ridge regression.}
See Appendix \ref{appendix:proof_ridge} for a proof of the following theorem.

\begin{theorem}[Extension of Lemmas 2 \& 3 in \citet{tsigler2020benign}]\label{thm:lowerbound_ridge}
 Suppose Assumption \ref{assump:data_distribution} holds. Let $\lambda\ge 0$ be the regularization parameter, $n$ be the training sample size and $\hat\wb_{\mathrm{ridge}}(N;\lambda)$ be the output of ridge regression. 
 Then 
\begin{align*}
\EE\big[L(\wb_{\mathrm{ridge}}(N;\lambda))\big]-L(\wb^*) = \ridgebias +  \ridgevar ,
\end{align*}
and there is some absolute constant $b > 1$, such that for
\[k^*_{\mathrm{ridge}} := \min\left\{k: b  \lambda_{k+1} \le  \frac{\lambda+\sum_{i>k}\lambda_i}{n }\right\}, \]
the following holds:
\begin{align*}
\ridgebias &\gtrsim\bigg(\frac{\lambda+\sum_{i>\kr}\lambda_i}{N}\bigg)^2\cdot\|\wb^*\|_{\Hb_{0:\kr}^{-1}}^2+\|\wb^*\|_{\Hb_{\kr:\infty}}^2,\notag\\
\ridgevar &\gtrsim\sigma^2\cdot\bigg\{\frac{\kr}{N}+\frac{N\sum_{i>\kr}\lambda_i^2}{\big(\lambda+\sum_{i>\kr}\lambda_i\big)^2}\Bigg\}.
\end{align*}
\end{theorem}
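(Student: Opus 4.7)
The plan is to follow the structure of Lemmas~2 and 3 in \citet{tsigler2020benign} but sharpen their high-probability bounds into the expected-risk lower bounds needed here. The argument decomposes naturally into four stages: an exact bias/variance identity, a head/tail split of the design, Wishart-type concentration on each block, and assembly.

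First, I would establish the exact bias-variance decomposition. Using the identity $\Ib_d - \Xb^\top(\Xb\Xb^\top + \lambda\Ib_N)^{-1}\Xb = \lambda\,(\Xb^\top\Xb+\lambda\Ib_d)^{-1}$ together with Assumption~\ref{assump:model_noise} (so that $\bxi$ is mean-zero and independent of $\Xb$), the excess risk splits exactly as in Lemma~\ref{lemma:bias_var_decomposition_ridge}:
\begin{align*}
\ridgebias &= \lambda^2\cdot \EE\big[\wb^{*\top}(\Xb^\top\Xb+\lambda\Ib)^{-1}\Hb(\Xb^\top\Xb+\lambda\Ib)^{-1}\wb^*\big],\\
\ridgevar &= \sigma^2\cdot \EE\big[\tr\big((\Xb^\top\Xb+\lambda\Ib)^{-1}\Xb^\top\Xb(\Xb^\top\Xb+\lambda\Ib)^{-1}\Hb\big)\big].
\end{align*}
Both terms are nonnegative, so no cross-term cancellation can weaken the lower bound. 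After rotating into the eigenbasis of $\Hb$, split $\Xb=[\Xb_{0:k^*}\ \Xb_{k^*:\infty}]$ at $k^*:=k^*_{\mathrm{ridge}}$, and define the \emph{effective regularizer}
\[
\Mb := \Xb_{k^*:\infty}\Xb_{k^*:\infty}^\top + \lambda \Ib_N, \qquad \mu := \lambda+\sum_{i>k^*}\lambda_i.
\]
An application of the Woodbury/Schur identity to $(\Xb\Xb^\top+\lambda\Ib_N)^{-1}$ lets me write the quadratic forms above in terms of resolvents of $\Xb_{0:k^*}^\top \Mb^{-1} \Xb_{0:k^*}$ on the head block and of $\Mb^{-1}$ on the tail block. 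This is the standard reformulation used in \citet{bartlett2020benign,tsigler2020benign}.

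Next I would establish the key concentration step: the definition of $k^*$ via $b\lambda_{k^*+1}\le \mu/N$ forces the tail covariance $\Hb_{k^*:\infty}$ to have effective rank $\ge N$ with a large enough constant. For Gaussian rows, a Hanson--Wright / Bai--Yin style bound then yields, with probability at least a positive absolute constant,
\[
c_1\mu\,\Ib_N\ \preceq\ \Mb\ \preceq\ c_2\mu\,\Ib_N,
\]
for suitable $c_1,c_2>0$ depending only on $b$. Conditional on $\Mb$, the head block $\Xb_{0:k^*}$ has i.i.d.\ Gaussian rows with covariance $\Hb_{0:k^*}$, and Wishart concentration (generalizability implies $N\gg k^*$, since otherwise $\ridgevar\gtrsim \sigma^2 k^*/N \ge \sigma^2$ would already contradict generalizability) gives $\Xb_{0:k^*}^\top\Xb_{0:k^*}\asymp N\Hb_{0:k^*}$ up to absolute constants on a constant-probability event.

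On this good event, the head block of the resolvent is, up to constants, $(N\Hb_{0:k^*}+\mu\Ib_{k^*})^{-1}$; by the definition of $k^*$ we have $N\lambda_i \ge \mu$ for $i\le k^*$ (up to the factor $b$), so the shrinkage factor on coordinate $i\le k^*$ is $\Theta(\mu/(N\lambda_i))$. Substituting into the bias formula yields $\lambda^2\cdot\sum_{i\le k^*}(\wb^*[i])^2 \cdot \mu^2/(N\lambda_i\cdot\mu)^2\cdot\lambda_i=\Theta((\mu/N)^2\|\wb^*\|^2_{\Hb_{0:k^*}^{-1}})$, while the tail block contributes the unshrunk $\Theta(\|\wb^*\|^2_{\Hb_{k^*:\infty}})$ from the $\Mb^{-1}$-weighted term (since $\lambda/\mu\asymp 1$ only if $\lambda$ dominates, otherwise the tail itself supplies the lower bound via the $\Hb_{k^*:\infty}$ factor directly). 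Summing gives the claimed $\ridgebias$ bound. The same resolvent estimates plug into the trace formula for $\ridgevar$: the head contributes $\sigma^2 k^*/N$ (trace of a rank-$k^*$ identity scaled by $1/N$), and the tail contributes $\sigma^2\sum_{i>k^*}\lambda_i^2\cdot N/\mu^2$ (the factor $N$ coming from the $N$-dimensional trace of $\Mb^{-1}$). Finally, since all concentration events are positive-probability and the integrand is nonnegative, these high-probability lower bounds transfer to the expected-risk lower bounds by truncation.

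The main obstacle will be Step~3: obtaining two-sided control $c_1\mu\Ib_N\preceq \Mb\preceq c_2\mu\Ib_N$ with absolute constants $c_1,c_2$ independent of the problem instance. This is exactly where the constant $b>1$ in the definition of $k^*_{\mathrm{ridge}}$ enters: choosing $b$ large enough prevents any single tail eigendirection from contributing a non-constant fraction of $\sum_{i>k^*}\lambda_i$, which is the condition ensuring the tail Wishart matrix is isotropic up to constants. Once $b$ is fixed appropriately, the remainder of the proof is a careful but routine Gaussian resolvent calculation.
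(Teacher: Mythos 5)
Your high-level architecture (exact bias/variance decomposition, head/tail split at $k^*_{\mathrm{ridge}}$, two-sided control $c_1\mu\Ib_N\preceq\Mb\preceq c_2\mu\Ib_N$ from $\rho_{k^*}\gtrsim n$, then transfer from a constant-probability event to the expectation by nonnegativity) is the right Bartlett--Tsigler machinery, and the variance half could plausibly be completed along your lines. But there are two genuine gaps in the bias half. First, you start from $\ridgebias=\lambda^2\,\EE\big[\wb^{*\top}(\Xb^\top\Xb+\lambda\Ib)^{-1}\Hb(\Xb^\top\Xb+\lambda\Ib)^{-1}\wb^*\big]$. In the overparameterized regime with $\lambda=0$ (allowed by the theorem) this representation degenerates, and even for small $\lambda>0$ the explicit $\lambda^2$ prefactor in your assembly step produces a bound that vanishes as $\lambda\to0$, whereas the target is $(\mu/N)^2\|\wb^*\|^2_{\Hb_{0:k^*}^{-1}}$ with $\mu=\lambda+\sum_{i>k^*}\lambda_i$: the shrinkage is driven by the tail features acting as a self-induced ridge, not by $\lambda$ alone. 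The paper instead works with $\Bb=(\Ib_d-\Xb^\top\Ab^{-1}\Xb)\Hb(\Ib_d-\Xb^\top\Ab^{-1}\Xb)$, which is well defined for all $\lambda\ge0$ and makes the effective regularization appear through $\mu_n(\Ab_{-i})\gtrsim\lambda_{k^*+1}\rho_{k^*}\eqsim\mu$.

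Second, and more fundamentally, the bias is a quadratic form $\wb^{*\top}\Bb\wb^*$ in a random, non-diagonal matrix, and your argument only controls what amounts to its diagonal (``the shrinkage factor on coordinate $i$ is $\Theta(\mu/(N\lambda_i))$''). A lower bound on a quadratic form cannot be obtained from diagonal control alone: the off-diagonal entries $(\Bb)_{ij}$ could have signs aligned with $\wb^*_i\wb^*_j$ so as to cancel the diagonal contribution for particular $\wb^*$, and two-sided PSD control of the resolvent does not fix this because $A\mapsto A^{-1}\Hb A^{-1}$ is not operator-monotone. The paper's proof resolves this by passing to the expectation first and proving $\EE_{\Xb}[(\Bb)_{ij}]=0$ for $i\ne j$ via the sign-symmetry of the Gaussian components (Lemma \ref{lemma:ridge-bias-lb-crossing-item}), then lower-bounding each $\EE_{\Xb}[(\Bb)_{ii}]$ through the leave-one-out identity $1-\lambda_i\zb_i^\top\Ab^{-1}\zb_i=(1+\lambda_i\zb_i^\top\Ab_{-i}^{-1}\zb_i)^{-1}$ and the eigenvalue concentration of $\Ab_{-i}$. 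Some such cross-term cancellation (or an argument uniform over the direction of $\Mb\wb^*$) is an essential missing ingredient in your proposal. A smaller point: your appeal to generalizability to get $N\gg k^*$ is not available here, since the theorem is an unconditional lower bound; the paper handles $k^*\gtrsim n$ as a separate (trivial) case where the variance alone is $\Omega(\sigma^2)$.
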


\subsection{Proof of Theorem \ref{thm:SGD<ridge}}
\begin{proof}
For simplicity, let us fix $N := N_{\mathrm{ridge}}$ and  $k := k_{\mathrm{ridge}}$, we will next locate $\gamma$ such that the risk of SGD competes with that of Ridge.
Denote $\tilde{\lambda} := \lambda + \sum_{i>k} \lambda_i$. Then 
\begin{align*}
    \mathrm{RidgeRisk}&= \ridgebias+\ridgevar \notag\\
    &\gtrsim \rbr{\frac{\tilde{\lambda}}{N}}^2 \nbr{\wb^*}^2_{\Hb^{-1}_{0:k}} + \nbr{\wb^*}^2_{\Hb_{k:\infty}}
    + \frac{\sigma^2}{N} \rbr{k + \rbr{\frac{N}{\tilde{\lambda}}}^2 \sum_{i>k}\lambda_i^2 }.
\end{align*}

Then for SGD we can set
\[
N_{\mathrm{sgd}} = (1+R^2) \cdot N \cdot (1\lor \kappa\log a),
\]
where 
\[
\kappa := \frac{\tr(\Hb)}{N \lambda_N},\qquad a = \frac{\tr(\Hb)}{\lambda + \sum_{i>N}\lambda_i } \land (\kappa R \sqrt{N}) = \frac{\tr(\Hb)}{\lambda + \sum_{i>N}\lambda_i } \land \frac{\tr(\Hb) R}{ \sqrt{N} \lambda_N }.
\]
Next we discuss two cases:

\paragraph{Case I, $\tilde{\lambda}\cdot (1 \lor \kappa \log a) \ge \tr (\Hb)$.}
For SGD, let us set $k_{\mathrm{sgd}} = k $ and that
\begin{align*}
    \gamma = \frac{1}{(1+R^2)\cdot \tilde{\lambda}\cdot (1\lor \kappa \log a) } \le \frac{1}{\tr(\Hb)},
\end{align*}
then 
\[ N_{\mathrm{sgd}} \cdot \gamma = \frac{N}{ \tilde{\lambda} }.
\]
Thus we obtain that 
\begin{align*}
    \mathrm{SGDRisk}
    & \lesssim \frac{(1-\gamma \lambda_k)^{2N_{\mathrm{sgd}}}}{\rbr{\gamma N_{\mathrm{sgd}}}^2} \nbr{\wb^*}^2_{\Hb^{-1}_{0:k}} + \nbr{\wb^*}^2_{\Hb_{k:\infty}}
    + \frac{(1+R^2)\sigma^2}{N_{\mathrm{sgd}}} \rbr{k + \rbr{\gamma N_{ \mathrm{sgd}}}^2 \sum_{i>k}\lambda_i^2 } \\
    &= \frac{(1-\gamma \lambda_k)^{2N_{\mathrm{sgd}}}}{\rbr{N / \tilde{\lambda}}^2} \nbr{\wb^*}^2_{\Hb^{-1}_{0:k}} + \nbr{\wb^*}^2_{\Hb_{k:\infty}}
    + \frac{\sigma^2}{N (1 \lor \kappa \log a) } \rbr{k + \rbr{N / \tilde{\lambda}}^2 \sum_{i>k}\lambda_i^2 } \\
    &\le \rbr{\frac{\tilde{\lambda}}{N}}^2 \nbr{\wb^*}^2_{\Hb^{-1}_{0:k}} + \nbr{\wb^*}^2_{\Hb_{k:\infty}}
    + \frac{\sigma^2}{N} \rbr{k + \rbr{\frac{N}{\tilde{\lambda}}}^2 \sum_{i>k}\lambda_i^2 } \\
    &\lesssim \mathrm{RidgeRisk} .
\end{align*}

\paragraph{Case II, $\tilde{\lambda} \cdot(1\lor \kappa \log a) < \tr (\Hb)$.}
For SGD, 
let us set $k_{\mathrm{sgd}} = k$ and that
\begin{align*}
    \gamma = \frac{1}{ (1+R^2)\cdot\tr(\Hb) } \le \frac{1}{\tr(\Hb)},
\end{align*}
then 
\[ N_{\mathrm{sgd}} \cdot \gamma = \frac{N \cdot (1\lor \kappa \log a)  }{\tr(\Hb)} \le \frac{N} { \tilde{\lambda} }. \]
We obtain that 
\begin{align*}
    \mathrm{SGDRisk}
&\le  \sgdbias + \sgdvar\notag\\
    & \lesssim \frac{(1-\gamma \lambda_k)^{2N_{\mathrm{sgd}}}}{\rbr{\gamma N_{\mathrm{sgd}}}^2} \nbr{\wb^*}^2_{\Hb^{-1}_{0:k}} + \nbr{\wb^*}^2_{\Hb_{k:\infty}}
    + \frac{(1+R^2)\sigma^2}{N_{\mathrm{sgd}}} \rbr{k + \rbr{\gamma N_{ \mathrm{sgd}}}^2 \sum_{i>k}\lambda_i^2 } \\
    &\le \frac{(1-\gamma \lambda_k)^{2N_{\mathrm{sgd}}}}{\rbr{\gamma N_{\mathrm{sgd}}}^2} \nbr{\wb^*}^2_{\Hb^{-1}_{0:k}} + \nbr{\wb^*}^2_{\Hb_{k:\infty}}
    + \frac{\sigma^2}{N (1 \lor \kappa \log a) } \rbr{k + \rbr{N / \tilde{\lambda}}^2 \sum_{i>k}\lambda_i^2 } \\
    &\le \frac{(1-\gamma \lambda_k)^{2N_{\mathrm{sgd}}}}{\rbr{\gamma N_{\mathrm{sgd}}}^2} \nbr{\wb^*}^2_{\Hb^{-1}_{0:k}} + \nbr{\wb^*}^2_{\Hb_{k:\infty}}
    + \frac{\sigma^2}{N} \rbr{k + \rbr{\frac{N}{\tilde{\lambda}}}^2 \sum_{i>k}\lambda_i^2 }.
\end{align*}
The second and the third terms match those of ridge error. As for the first term, notice that by the choice of $\gamma$ and that $\lambda_k \ge \lambda_N$, we have that
\begin{align*}
    \frac{(1-\gamma \lambda_k)^{N_{\mathrm{sgd}}}}{\gamma N_{\mathrm{sgd}}}
    &\le \rbr{ 1-\frac{\lambda_N}{(1+R^2)\cdot \tr(\Hb)}}^{N_{\mathrm{sgd}} }\cdot \frac{ 1 }{\gamma N_{\mathrm{sgd}}} \\
    &=  \rbr{ 1-\frac{1}{(1+R^2) \cdot N\cdot \kappa }}^{ (1+R^2) \cdot N \cdot  (1 \lor \kappa\log a) } \cdot  \frac{\tr(\Hb) }{N \cdot (1\lor \kappa \log a)} \\
    &\le \rbr{ 1-\frac{1}{(1+R^2) \cdot N\cdot \kappa }}^{ (1+R^2)\cdot N\cdot\kappa    \log a } \cdot  \frac{\tr(\Hb) }{N } \\
    &\le \frac{1}{a} \cdot \frac{\tr(\Hb)}{ N } 
    = \frac{ ( \lambda + \sum_{i>N}\lambda_i) \lor (\sqrt{N}\lambda_N / R) }{\tr(\Hb)} \cdot \frac{\tr(\Hb)}{ N } \\
    &\le \frac{\lambda + \sum_{i>k}\lambda_i}{N} \lor \frac{\lambda_k}{R\cdot\sqrt{N}}
    = \frac{\tilde{\lambda}}{N}\lor \frac{\lambda_k}{R \cdot \sqrt{N}}.
\end{align*}
If $\frac{(1-\gamma \lambda_k)^{N_{\mathrm{sgd}}}}{\gamma N_{\mathrm{sgd}}} \le \frac{\tilde{\lambda}}{N}$, then
\begin{align*}
    \mathrm{SGDRisk}
    & \lesssim  \frac{(1-\gamma \lambda_k)^{2N_{\mathrm{sgd}}}}{\rbr{\gamma N_{\mathrm{sgd}}}^2} \nbr{\wb^*}^2_{\Hb^{-1}_{0:k}} + \nbr{\wb^*}^2_{\Hb_{k:\infty}}
    + \frac{\sigma^2}{N} \rbr{k + \rbr{\frac{N}{\tilde{\lambda}}}^2 \sum_{i>k}\lambda_i^2 }\\
    &\le  \rbr{\frac{\tilde{\lambda}}{N}}^2 \nbr{\wb^*}^2_{\Hb^{-1}_{0:k}} + \nbr{\wb^*}^2_{\Hb_{k:\infty}}
    + \frac{\sigma^2}{N} \rbr{k + \rbr{\frac{N}{\tilde{\lambda}}}^2 \sum_{i>k}\lambda_i^2 } \\
    &\lesssim \mathrm{RidgeRisk} .
\end{align*}
If  $\frac{(1-\gamma \lambda_k)^{N_{\mathrm{sgd}}}}{\gamma N_{\mathrm{sgd}}} \le \frac{\lambda_k}{R\cdot\sqrt{N}}$, then
\[
\frac{(1-\gamma \lambda_k)^{2N_{\mathrm{sgd}}}}{\rbr{\gamma N_{\mathrm{sgd}}}^2} \nbr{\wb^*}^2_{\Hb^{-1}_{0:k}}
\le \frac{\lambda_k^2}{R^2\cdot N} \nbr{\wb^*}^2_{\Hb^{-1}_{0:k}}
\le  \frac{\nbr{\wb^*}^2_{\Hb} }{R^2 \cdot N} \le \frac{\sigma^2}{N},
\]
and
\begin{align*}
    \mathrm{SGDRisk}
    & \lesssim  \frac{(1-\gamma \lambda_k)^{2N_{\mathrm{sgd}}}}{\rbr{\gamma N_{\mathrm{sgd}}}^2} \nbr{\wb^*}^2_{\Hb^{-1}_{0:k}} + \nbr{\wb^*}^2_{\Hb_{k:\infty}}
    + \frac{\sigma^2}{N} \rbr{k + \rbr{\frac{N}{\tilde{\lambda}}}^2 \sum_{i>k}\lambda_i^2 }\\
    &\le \frac{\sigma^2}{N} + \nbr{\wb^*}^2_{\Hb_{k:\infty}}
    + \frac{\sigma^2}{N} \rbr{k + \rbr{\frac{N}{\tilde{\lambda}}}^2 \sum_{i>k}\lambda_i^2 } \\
    &\lesssim 2\cdot\mathrm{RidgeRisk} .
\end{align*}
These complete the proof.
\end{proof}

\subsection{Proof of Corollary \ref{cor:SGD<ridge}}
\begin{proof}
By Theorem \ref{thm:SGD<ridge}, we only need to verify that $\kappa(\Nr) \lesssim \log (\Nr)$.
Recall that $\lambda_i = 1/i^\alpha$ for $0 < \alpha \le 1$, and $d \lesssim \Nr$.
For $\alpha = 1$, then \[\tr(\Hb) = \sum_{i=1}^d i^{-\alpha} \lesssim \log d \lesssim \log (\Nr),\] 
thus 
\[\kappa(\Nr)
= \frac{\tr(\Hb)}{\Nr \lambda_{\min\{d, \Nr\}}} \lesssim \frac{\log (\Nr)}{ \Nr \cdot \Nr^{-1}} = \log (\Nr).
\]
For $\alpha < 1$, then \[\tr(\Hb) = \sum_{i=1}^d i^{-\alpha} \lesssim d^{1-\alpha} \lesssim  \Nr^{1-\alpha},\] 
thus 
\[\kappa(\Nr)
= \frac{\tr(\Hb)}{\Nr \lambda_{\{\Nr,d\}}} \lesssim \frac{\Nr^{1-\alpha}}{ \Nr \cdot \Nr^{-\alpha}} = 1.
\]

\end{proof}

\subsection{Proof of Corollary \ref{cor:SGD<ridge_random}}
\begin{proof}
Note that given random $\wb^*$, the expected risk considered in our paper will be including the expectation over both random data $\xb$ and random ground-truth $\wb^*$. Since the distribution of $\wb^*$ is rotation invariant, the expectation of $\wb^*[i]$ will be the same for all $i\in[d]$. Therefore, let $B = \EE[(\wb^*[i])^2]$, the following holds according to \eqref{eq:ridgelowerbound}
\begin{align*}
\mathrm{RidgeRisk}&\gtrsim \bigg(\frac{\tilde\lambda}{\Nr}\bigg)^2\cdot \EE\big[\|\wb^*\|_{\Hb_{0:k^*}^{-1}}^2\big] + \EE\big[\|\wb^*\|_{\Hb_{k^*:\infty}}^2\big] + \sigma^2\cdot\bigg(\frac{k^*}{\Nr}+\frac{\Nr}{\tilde\lambda^2}\sum_{i>k^*}\lambda_i^2\bigg)\notag\\
&=B\bigg(\frac{\tilde\lambda}{\Nr}\bigg)^2\cdot\sum_{i=1}^{k^*}i^\alpha + B\cdot\sum_{i=k^*+1}i^{-\alpha} + \sigma^2\cdot\bigg(\frac{k^*}{\Nr}+\frac{\Nr}{\tilde\lambda^2}\sum_{i>k^*}\lambda_i^2\bigg)
\end{align*}
where $k^*=\min\{k:\Nr\lambda_k\le \tilde\lambda\}$. Then note that $\lambda_i = i^{-\alpha}$, we have $k^* = (\Nr/\tilde\lambda)^{1/\alpha}$, which implies that  
\begin{align*}
\mathrm{RidgeRisk}&\gtrsim B\bigg(\frac{\tilde\lambda}{\Nr}\bigg)^2\cdot (k^*)^{1+\alpha} + B\cdot \big[d^{1-\alpha} - (k^*)^{1-\alpha}\big] + \sigma^2\cdot\bigg(\frac{k^*}{\Nr}+\frac{\Nr}{\tilde\lambda^2}\sum_{i>k^*}\lambda_i^2\bigg)\notag\\
&\gtrsim \Nr^{1-\alpha}\cdot B
\end{align*}
where we use the fact that $d=\Theta(N)$. Note that constant SNR $R=\Theta(1)$ implies that
\begin{align*}
\sigma^2 \eqsim B \sum_{i=1}^d\lambda_i \eqsim \Nr^{1-\alpha} B.  
\end{align*} 
Then by \eqref{eq:sgdupperbound} and set $\Ns = \Nr=N$ and $k_1=k_2=\Nr$, we have
\begin{align*}
\mathrm{SGDRisk}
&\lesssim \frac{1}{\gamma^2\Ns^2}\cdot\EE\big[\big\|\exp(-\Ns\gamma\Hb )\wb^*\big\|_{\Hb_{0:k_1}^{-1}}^2\big] + \EE\big[\|\wb^*\big\|_{\Hb_{k_1:\infty}}^2\big] \notag\\
&\qquad + (1+R^2)\sigma^2\cdot\bigg(\frac{k_2}{\Ns}+\Ns\gamma^2\sum_{i>k_2}\lambda_i^2\bigg)\notag\\
& = \frac{1}{\gamma^2N^2}\cdot\EE\big[\big\|\wb^*\big\|_{\Hb_{0:N}^{-1}}^2\big] + \EE\big[\|\wb^*\big\|_{\Hb_{N:d}}^2\big]  +BN^{1-\alpha}\cdot\bigg(1+N\gamma^2\sum_{i>N}\lambda_i^2\bigg).
\end{align*}
Note that we have
\begin{align*}
\EE\big[\big\|\wb^*\big\|_{\Hb_{0:N}^{-1}}^2\big] = BN^{1+\alpha},\  \EE\big[\|\wb^*\big\|_{\Hb_{N:d}}^2\big]=BN^{1-\alpha}.
\end{align*}
Then we can set $\gamma \eqsim 1/\tr(\Hb)\eqsim N^{\alpha -1}$ and get
\begin{align*}
\mathrm{SGDRisk}
& \lesssim \frac{B}{N^{2\alpha}}\cdot N^{1+\alpha} + BN^{1-\alpha} +BN^{1-\alpha}\cdot\bigg(1+N\gamma^2\sum_{i>N}\lambda_i^2\bigg)\notag\\
&\lesssim BN^{1-\alpha}\\
&\lesssim \mathrm{RidgeRisk}.
\end{align*}
This implies that SGD can be no worse than ridge regression as long as provided same or larger sample size, which completes the proof.

\end{proof}

\subsection{Proof of Theorem~\ref{thm:bestcase_gaussian}}
\begin{proof}
For simplicity we fix $N := \Ns$.
Let us consider the following problem instance:
\begin{itemize}
    \item The data covariance matrix $\Hb$ has the following spectrum
\begin{align*}
\lambda_i = 
\begin{cases}
    1 &  i=1,\\
    \frac{1}{N\log N} & 1< i \le N^2, \\
    0 & N^2 < i \le d
\end{cases}
\end{align*}
where we require the dimension $d \ge N^2$. We note that $\tr(\Hb) = 1 + N/
\log N \eqsim N/\log N$.
\item The true parameter $\wb^*$ is given by 
\begin{align*}
\wb^*[i] = 
\begin{cases}
    \sigma &  i=1,\\
    0 & 1<i\le d.
\end{cases}
\end{align*}
\end{itemize}
Then for SGD, we choose stepsize as $\gamma = \log(N)/(2N) \le 1/\tr(\Hb)$. 
By Lemma \ref{thm:generalization_error_tail}, we have the following excess risk bound for $\wb_{\mathrm{sgd}}(N; \gamma^*)$,
\begin{align*}
L\big[\wb_{\mathrm{sgd}}(N; \gamma)\big] - L(\wb^*) \le \sgdbias + \sgdvar,
\end{align*}
where 
\begin{align*}
\sgdbias
&\lesssim \sigma^2 \cdot \frac{(1-\gamma)^{N}}{(\gamma N)^2}
\lesssim \sigma^2\cdot \log^2 N \cdot \bigg(1-\frac{\log N }{2N}\bigg)^{N} 
\lesssim \frac{\sigma^2 \log^2 N}{N^2} \lesssim \frac{\sigma^2}{N},\notag\\
\sgdvar&\lesssim \frac{\sigma^2 }{N}\cdot\bigg(1 + (N\gamma)^2\sum_{i>1}\lambda_i^2\bigg)\eqsim  \frac{\sigma^2 }{N},
\end{align*}
where we use the fact that $\sum_{i>1}\lambda_i^2 = \frac{1}{\log^2 N}$. This implies that SGD with sample size $N$ achieves at most $\cO(\sigma^2/N)$ excess risk on this example.

Then we calculate the excess risk lower bound of ridge regression. By Lemma \ref{thm:lowerbound_ridge} and let $\tilde\lambda = \lambda + \sum_{i>\kr}\lambda_i$, we have
\begin{align*}
L\big[\wb_{\mathrm{ridge}}(N; \lambda)\big] - L(\wb^*) 
&= \ridgebias + \ridgevar \\
&\gtrsim \sigma^2 \cdot \rbr{\frac{\tilde\lambda^2}{N_{\mathrm{ridge}}^2} + \frac{\kr}{\Nr} + \frac{\Nr\sum_{i>\kr}\lambda_i^2}{\tilde\lambda^2} }.
\end{align*}
If $\kr > N$, then 
\[
L\big[\wb_{\mathrm{ridge}}(N; \lambda)\big] - L(\wb^*) 
\gtrsim  \frac{\sigma^2 \kr}{\Nr} \ge \frac{\sigma^2 N}{\Nr} \ge \frac{\sigma^2 }{ N},\quad \text{ for } \Nr < \frac{N^2}{\log^2 N}.
\]
If $\kr \le N$, then $\sum_{i> \kr} \lambda_i^2 \ge \sum_{N < i \le N^2 } \frac{1}{N^2 \log^2 N} \eqsim \frac{1}{\log^2 N} $, which implies that 
\begin{align*}
L\big[\wb_{\mathrm{ridge}}(N; \lambda)\big] - L(\wb^*) 
&\gtrsim \sigma^2 \cdot \rbr{ \frac{\tilde\lambda^2}{N_{\mathrm{ridge}}^2} + \frac{\Nr}{\tilde\lambda^2}\cdot \frac{1}{\log^2 N} } \\
& \ge \frac{\sigma^2}{\Nr^{1/2} \log N} \\
&\ge \frac{\sigma^2}{N}, \quad \text{ for } \Nr < \frac{N^2}{\log^2 N}.
\end{align*}
To sum up, we have show that 
\begin{align*}
L\big[\wb_{\mathrm{ridge}}(\Nr; \lambda)\big] - L(\wb^*)\gtrsim \frac{\sigma^2 }{N} \gtrsim L\big[\wb_{\mathrm{sgd}}(N; \lambda)\big] - L(\wb^*), \quad \text{ for } \Nr < \frac{N^2}{\log^2 N}.
\end{align*}
This completes the proof.
\end{proof}

\subsection{Proof of Theorem \ref{thm:general_good_case}}
\begin{proof}
The proof of Theorem \ref{thm:general_good_case} is similar to that of Theorem \ref{thm:SGD<ridge}. In particular, we still consider two cases: (1) $\lambda\gtrsim \tr(\Hb)$ and (2) $\lambda\lesssim \tr(\Hb)$. For the first case, we can use the identical proof in Theorem \ref{thm:SGD<ridge} and get that SGD with sample size $N_{\mathrm{sgd}}\eqsim (1+R^2)\cdot N_{\mathrm{ridge}}$ to achieve better excess risk than ridge regression. Note that we have assumed $R^2=\Theta(1)$, therefore, we can claim that SGD outperforms ridge regression, as long as the sample size is at least in the same order of $N_{\mathrm{ridge}}$.

For the second case that $\lambda\lesssim \tr(\Hb)$, for simplicity we denote $N:=\Nr$ and 
we can directly set $\gamma =1/\tr(\Hb)$ and $\Ns= N$. Let $ k^*=\min\big\{k:\lambda_k\le\frac{\tr(\Hb)\log(N)}{N}\big\}$, then by the definition of $\kr$ in Lemma \ref{thm:lowerbound_ridge} and the assumption that ridge regression is in the generalizable regime, we have $k^*\le\kr\le \Nr$.
Therefore, applying Lemma \ref{thm:generalization_error_tail} with $k_1 = k^*$,
we have the following bound on the effective bias of SGD,
\begin{align*}
\sgdbias &\lesssim \sum_{i=1}^{k^*}\frac{(1-\gamma\lambda_i)^{N}(\wb^*[i])^2}{\lambda_i\gamma^2 N^2} + \sum_{i>k^*}\lambda_i(\wb[i])^2 \notag\\
&\lesssim \sum_{i=1}^{k^*}\frac{\big(1-\frac{\log(N)}{N}\big)^{N}(\wb^*[i])^2}{\lambda_i N^2} +  \sum_{i>k^*}\lambda_i(\wb[i])^2\notag\\
&\lesssim \frac{\|\wb^*\|_\Hb^2}{N} +  \sum_{i>k^*}\lambda_i(\wb[i])^2.
\end{align*}
Then by our assumption that 
\begin{align*}
\sum_{i=k^*}^{\Nr}\lambda_i\big(\wb[i]\big)^2\lesssim \frac{k^*\|\wb^*\|_\Hb^2}{N},
\end{align*}
we further have
\begin{align*}
\sgdbias&\lesssim \frac{\|\wb^*\|_\Hb^2}{N}+  \sum_{i>k^*}\lambda_i(\wb[i])^2\notag\\
&\lesssim  \sum_{i>\kr}\lambda_i(\wb[i])^2 + \frac{(\kr+1)\|\wb^*\|_\Hb^2}{N},
\end{align*}
where in the second inequality we use the fact that $k^*\le \kr\le \Nr$. Regarding the variance of SGD,  applying Lemma \ref{thm:generalization_error_tail} with $k_2=\kr$ gives
\begin{align*}
\sgdvar&\lesssim (\sigma^2+\|\wb^*\|_\Hb^2)\cdot\bigg(\frac{\kr}{N}+\frac{N }{(\tr(\Hb)^2)}\cdot\sum_{i\ge \kr}\lambda_i^2\bigg)\notag\\
&\lesssim (\sigma^2+\|\wb^*\|_\Hb^2)\cdot\bigg(\frac{\kr}{N}+\frac{N }{(\lambda + \sum_{i>\kr}\lambda_i)^2}\cdot\sum_{i\ge \kr}\lambda_i^2\bigg),
\end{align*}
where the last inequality is due to the fact that $\lambda\lesssim \tr(\Hb)$. Combining the above upper bounds for the  bias and variance of SGD, we have that the output of SGD, with sample size $\Ns=N$ and learning rate $\gamma=1/\tr(\Hb)$, satisfies
\begin{align}\label{eq:bound_SGD_goodcase}
\mathrm{SGDRisk}&\lesssim \sgdbias + \sgdvar\notag\\
&\lesssim \sum_{i>\kr}\lambda_i(\wb[i])^2 + \frac{(\kr+1)\|\wb^*\|_\Hb^2}{N}\notag\\
&\qquad+(\sigma^2+\|\wb^*\|_\Hb^2)\cdot\bigg(\frac{\kr}{\Nr}+\frac{\Nr \gamma^2}{(\lambda + \sum_{i>\Nr}\lambda_i)^2}\cdot\sum_{i\ge \kr}\lambda_i^2\bigg)\notag\\
&\eqsim \sum_{i>\kr}\lambda_i(\wb[i])^2 + \frac{(\kr+1)\|\wb^*\|_\Hb^2}{N}\notag\\
&\qquad+\sigma^2\cdot\bigg(\frac{\kr}{\Nr}+\frac{\Nr \gamma^2}{(\lambda + \sum_{i>\Nr}\lambda_i)^2}\cdot\sum_{i\ge \kr}\lambda_i^2\bigg)\notag\\
&\lesssim \ridgebias + \ridgevar,
\end{align}
where the last equality holds since we assume that $\|\wb\|_\Hb^2/\sigma^2=\Theta(1)$. Note that the R.H.S. of \eqref{eq:bound_SGD_goodcase} is exactly the lower bound of the excess risk of ridge regression. Therefore, we can conclude that as long as $\Ns=N$,  SGD with a tuned stepsize $\gamma$ will be no worse than ridge regression for all $\lambda$ (up to constant factors). This completes the proof.

\end{proof}

\section{Proof of Theorem \ref{thm:lowerbound_ridge}}\label{appendix:proof_ridge}
In this section we always make Assumption \ref{assump:data_distribution}.
The results and techniques are either explicitly or implicitly presented in \citep{bartlett2020benign,tsigler2020benign}. For self-completeness, we provide a formal proof here.

\paragraph{Notation.}
Following \citep{tsigler2020benign} and \citep{bartlett2020benign}, we define the following notations:
\begin{itemize}[leftmargin=*]
    \item $\vb := \Hb^{-\frac{1}{2}} \xb \in \RR^d$, then $\vb$ is sub-Gaussian and has independent components.
    \item Let $\Xb := (\xb_1, \dots, \xb_n )^\top \in \RR^{n\times d}$.
    Let $\Xb = \rbr{ \Xb_{0:k}\ \Xb_{k:\infty} }$ 
    \item Let $\Xb = \rbr{ \sqrt{\lambda_1} \zb_1, \dots, \sqrt{\lambda_d} \zb_d }\in \RR^{n\times d}$, then by Assumption \ref{assump:data_distribution}, $\zb_j$ is $1$-sub-Gaussian and has independent components.
    \item Let $\widetilde{\Ab} := \Xb\Xb^\top = \sum_{i=1}^d \lambda_i \zb_i \zb_i^\top \in \RR^{n \times n}$. Let $\Ab := \widetilde{\Ab} + \lambda_n \Ib_n = \Xb \Xb^\top + \lambda \Ib_n$.
    \item Let $\widetilde{\Ab}_k := \Xb_{k:\infty} \Xb_{k:\infty}^\top = \sum_{i\le k} \lambda_i \zb_i \zb_i^\top \in \RR^{n \times n}$. Let $\Ab_k :=\widetilde{\Ab}_k + \lambda \Ib_n = \Xb_{k:\infty} \Xb_{k:\infty}^\top + \lambda \Ib_n $.
    \item Let $\widetilde{\Ab}_{-j}:= \sum_{i \ne j} \lambda_i \zb_i \zb_i^\top \in \RR^{n \times n}$. Let ${\Ab}_{-j} := \widetilde{\Ab}_{-j} + \lambda \Ib_n$.
    \item Let $\rho_k := \frac{\lambda + \sum_{i>k} \lambda_i}{\lambda_{k+1}}$.
    \item Let $\Cb := \Ab^{-1} \Xb \Hb \Xb^\top \Ab^{-1}$.
    \item Let $\Bb := \rbr{\Ib_d - \Xb^\top  \Ab^{-1} \Xb} \Hb  \rbr{\Ib_d - \Xb^\top  \Ab^{-1} \Xb}$.
    \item We use $\EE_{\Xb}[\cdot]$ and $\EE_{\bepsilon}[\cdot]$ to denote the expectation with respect to the randomness of drawing $\Xb$ and the randomness of noise, respectively.
\end{itemize}

Under the above notations and from \citep{bartlett2020benign,tsigler2020benign}, we have 
\begin{equation*}
    \EE_{\Xb, \bepsilon} [ \text{ridge error} ] = \EE_{\Xb} [\ridgebias] + \EE_{\Xb,\bepsilon }[\ridgevar],
\end{equation*}
where 
\begin{equation*}
    \ridgebias := (\wb^*)^\top \Bb \wb^*,\qquad 
    \ridgevar := \bepsilon^\top \Cb \bepsilon.
\end{equation*}
We next provide lower bounds for each terms respectively.

\begin{lemma}[Variant of Lemma 10 in \citep{bartlett2020benign}]\label{lemma:ridge-eigenvalue-concentration}
There are constants $b, c \ge 1$ such that for every $k \ge 0$, with probability at least $0.1$, 
\begin{enumerate}
    \item for all $i \ge 1$, 
    \[
    \mu_{k+1}(\Ab_{-i}) \le \mu_{k+1} (\Ab) \le \mu_1(\Ab_k) \le c \rbr{ \lambda + \sum_{j > k} \lambda_j + \lambda_{k+1} n },
    \]
    \item for all $1 \le i \le k$, 
    \[
    \frac{1}{c} \rbr{\lambda + \sum_{j > k} \lambda_j} - c\lambda_{k+1} n \le \mu_n(\Ab_k) \le \mu_n(\Ab_{-i}) \le \mu_n(\Ab),
    \]
    \item if $\rho_k \ge b n$, then 
    \[
    \frac{1}{c} \lambda_{k+1} \rho_k \le \mu_n(\Ab_k) \le \mu_1(\Ab_k) \le c \lambda_{k+1} \rho_k.
    \]
    \item if $\rho_k \ge b n$, then for all $i > k$,
    \[
    \mu_n (\Ab_{-i}) \ge \frac{1}{c} \lambda_{k+1} \rho_k
    \]
\end{enumerate}
\end{lemma}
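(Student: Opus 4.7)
The proof naturally splits into two parts: deterministic eigenvalue inequalities obtained from Weyl / rank perturbation / interlacing, and a single quantitative random matrix concentration statement for the tail Wishart-type matrix $\widetilde{\Ab}_k = \sum_{j>k}\lambda_j\,\zb_j\zb_j^\top$. The plan is to first establish the concentration event, then derive each of parts~1--4 mechanically on that event.

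The key quantitative input is the following: there exist absolute constants $c_1, c_2 > 0$ such that with probability at least $0.1$,
\begin{align*}
c_1 \Big(\lambda + \sum_{j>k}\lambda_j\Big) - c_2 \lambda_{k+1} n \;\le\; \mu_n(\Ab_k) \;\le\; \mu_1(\Ab_k) \;\le\; c_2 \Big(\lambda + \sum_{j>k}\lambda_j + \lambda_{k+1} n\Big).
\end{align*}
This is essentially Lemma~10 of \citet{bartlett2020benign} and follows from a standard sub-Gaussian matrix deviation argument (covering-net together with Hanson--Wright / Bernstein tails for quadratic forms), exploiting that the rows of $\Xb_{k:\infty}$ are i.i.d.\ sub-Gaussian with independent components and row-covariance of operator norm at most $\lambda_{k+1}$.

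Granting this event, part~1 follows from three algebraic facts: (i) $\Ab = \Ab_{-i} + \lambda_i \zb_i\zb_i^\top$ is a rank-one PSD perturbation of $\Ab_{-i}$, so $\mu_{k+1}(\Ab) \ge \mu_{k+1}(\Ab_{-i})$ by Weyl; (ii) $\Ab = \Ab_k + \Xb_{0:k}\Xb_{0:k}^\top$ with the second summand of rank at most $k$, so by the rank-$k$ Weyl inequality $\mu_{k+1}(\Ab) \le \mu_1(\Ab_k) + \mu_{k+1}(\Xb_{0:k}\Xb_{0:k}^\top) = \mu_1(\Ab_k)$; and (iii) the upper bound on $\mu_1(\Ab_k)$ is immediate from the concentration event. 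Part~2 is analogous: for $i \le k$, $\Ab_{-i} - \Ab_k = \sum_{j \le k,\, j \ne i} \lambda_j\,\zb_j\zb_j^\top \succeq 0$, giving $\mu_n(\Ab_{-i}) \ge \mu_n(\Ab_k)$; the inequality $\mu_n(\Ab) \ge \mu_n(\Ab_{-i})$ is again rank-one PSD monotonicity; and the lower bound on $\mu_n(\Ab_k)$ is the lower tail in the concentration event.

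Parts~3 and~4 follow by choosing the constant $b$ large enough (e.g., $b = 4 c_2 / c_1$) so that $\rho_k \ge b n$ forces $c_2 \lambda_{k+1} n \le \tfrac{c_1}{4}(\lambda + \sum_{j>k}\lambda_j) = \tfrac{c_1}{4}\,\lambda_{k+1}\rho_k$. Both the upper and lower bounds from the concentration event then collapse to the clean scale $\mu_i(\Ab_k) \eqsim \lambda_{k+1}\rho_k$ for every $i \in [n]$, giving part~3. For part~4 with $i > k$, we apply the same concentration bound to the reduced tail sum $\sum_{j > k,\, j \ne i}\lambda_j\,\zb_j\zb_j^\top + \lambda \Ib_n \preceq \Ab_{-i}$; since $\lambda_i \le \lambda_{k+1}$, removing this single term shifts the lower bound by only $O(\lambda_{k+1})$, which is once again absorbed under $\rho_k \ge b n$. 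The only real obstacle throughout is the random matrix concentration step, which is entirely imported from the sub-Gaussian literature via \citet{bartlett2020benign,tsigler2020benign}; everything else is routine eigenvalue algebra.
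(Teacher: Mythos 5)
Your proposal is correct and follows essentially the same route as the paper: both reduce everything to the sub-Gaussian concentration of the tail matrix $\widetilde{\Ab}_k$ imported from Lemma 10 of \citet{bartlett2020benign} (shifted by $\lambda\Ib_n$), derive the eigenvalue chains by PSD monotonicity and rank-$k$ Weyl perturbation, absorb the $\lambda_{k+1}n$ term under $\rho_k\ge bn$ by taking $b$ large, and handle part 4 by applying the same concentration to the reduced sum $\sum_{j>k,\,j\ne i}\lambda_j\zb_j\zb_j^\top$ and noting that the removed term costs only $\lambda_i\le\lambda_{k+1}$. The only cosmetic difference is that you re-derive the deterministic interlacing inequalities explicitly where the paper cites them as part of the quoted lemma.
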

\begin{proof}
The first two claims are proved by noticing that $\Ab = \lambda \Ib + \widetilde{\Ab}$, $\Ab_k = \lambda \Ib + \widetilde{\Ab}_k$, $\Ab_{-i} = \lambda \Ib + \widetilde{\Ab}_{-i}$, and applying Lemma 10 in \citep{bartlett2020benign} to $\widetilde{\Ab}, \widetilde{\Ab}_k, \widetilde{\Ab}_{-j}$.

The third claim is proved by using the first two claims and that $\rho_k \ge bn$ to obtain that
\begin{align*}
    &\mu_1({\Ab}_k) \le c \rbr{\lambda +  \sum_{i>k} \lambda_i + \lambda_{k+1} n }
    \le \rbr{c + \frac{c}{b} } \cdot \rbr{\lambda + \sum_{i > k}\lambda_i }, \\
    & \mu_{n} ({\Ab}_k) \ge \frac{1}{c}\rbr{\lambda +  \sum_{i>k} \lambda_i} - c \lambda_{k+1} n \ge \rbr{\frac{1}{c} - \frac{c}{b}} \cdot \rbr{\lambda + \sum_{i > k}\lambda_i },
\end{align*}
and by re-scaling the constants.

The fourth claim is used in Lemma 3 in \citep{tsigler2020benign}, which can be proved under Assumption \ref{assump:data_distribution} as follows.
Let $i>k $ and $\widetilde{\Ab}_{k, -i} = \sum_{ j  > k, j\ne i} \lambda_j \zb_j \zb_j^\top$.
Then by Lemma 10 in \cite{bartlett2020benign} there is an absolute constant $c \ge 1$ such that 
\[
\mu_n(\widetilde{\Ab}_{-i} ) \ge \mu_n(\widetilde{\Ab}_{k, -i}) \ge \frac{1}{c} \sum_{j > k, j \ne i} \lambda_j - c \lambda_{k+1} n
\]
holds with probability at least $1-2e^{-n/c}$, which yields
\[
\mu_n({\Ab}_{-i} ) \ge \lambda + \frac{1}{c} \sum_{j > k, j \ne i} \lambda_i - c \lambda_{k+1} n
\ge \lambda + \frac{1}{2c} \sum_{j>k} \lambda_j - \rbr{ c + \frac{1}{c}} \lambda_{k+1} n,
\]
where the last inequality is because: 
(1) $\sum_{j>k, j\ne i} \lambda_j \ge \frac{1}{2}\sum_{j>k} \lambda_j$ if $i > k+1$,
and (2) $\sum_{j>k, j\ne i} \lambda_j = \sum_{j>k} \lambda_j - \lambda_{k+1}$ if $i=k+1$.
Finally, using the condition that $\rho_k \ge bn$ we obtain that for $i  > k$,
\[
\mu_n({\Ab}_{-i} ) \ge \lambda + \frac{1}{2c} \sum_{j>k} \lambda_j - (c + \frac{1}{c}) \lambda_{k+1} n
\ge\rbr{ \frac{1}{2c} - \frac{c}{b} - \frac{1}{cb} } \cdot \rbr{ \lambda + \sum_{j>k} \lambda_j },
\]
which completes the proof by letting $b > 4 c^2$ and $c \ge 1$

\end{proof}

\paragraph{Variance Lower Bounds.}
According to Lemma 7 in \citep{bartlett2020benign}, and note that $\bepsilon$ is independent of $\Xb$, has zero mean, and is $\sigma$-sub-Gaussian, we have that
\begin{equation}\label{eq:ridge-var-lb-trace}
    \EE_{\bepsilon} [\ridgevar] = \EE_{\bepsilon} [ \bepsilon^\top \Cb \bepsilon ] = \tr\rbr{\Cb \cdot \EE[\bepsilon \bepsilon^\top ]} 
    \ge \frac{1}{c} \sigma^2 \tr(\Cb)
\end{equation}
for some constant $c > 1$.
In the following we lower bound $\tr(\Cb)$.

\begin{lemma}[Variant of Lemma 8 in \citep{bartlett2020benign}]\label{lemma:ridge-var-lb-trace-decomp}
\begin{equation*}
    \tr(\Cb) = \sum_{i} \lambda_i^2 \zb_i^\top \Ab^{-2} \zb_i = \sum_{i} \frac{\lambda_i^2 \zb_i^\top \Ab_{-i}^{-2} \zb_i}{\rbr{1+\lambda_i \zb_i^\top \Ab_{-i}^{-1} \zb_i}^2}.
\end{equation*}
\end{lemma}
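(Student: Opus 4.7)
The plan is to establish the two claimed identities separately, the first by a direct trace manipulation and the second by a Sherman--Morrison / rank-one update calculation applied column-by-column to $\Xb$. Recall that $\Xb$ has $i$-th column $\sqrt{\lambda_i}\zb_i$, so $\widetilde{\Ab} = \Xb\Xb^\top = \sum_i \lambda_i \zb_i\zb_i^\top$ and $\Ab_{-i} = \Ab - \lambda_i \zb_i \zb_i^\top$.

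For the first equality, I would just expand $\Xb\Hb\Xb^\top$. Since $\Hb = \diag(\lambda_1,\dots,\lambda_d)$ in the eigenbasis and the $i$-th column of $\Xb$ is $\sqrt{\lambda_i}\zb_i$, we get
\begin{equation*}
\Xb\Hb\Xb^\top \;=\; \sum_{i} \lambda_i \cdot (\sqrt{\lambda_i}\zb_i)(\sqrt{\lambda_i}\zb_i)^\top \;=\; \sum_{i} \lambda_i^2\, \zb_i \zb_i^\top.
\end{equation*}
Then, using cyclicity of trace and symmetry of $\Ab^{-1}$,
\begin{equation*}
\tr(\Cb) \;=\; \tr\bigl(\Ab^{-2} \Xb \Hb \Xb^\top\bigr) \;=\; \sum_i \lambda_i^2\, \tr(\Ab^{-2}\zb_i\zb_i^\top) \;=\; \sum_i \lambda_i^2\, \zb_i^\top \Ab^{-2}\zb_i.
\end{equation*}

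For the second equality, I would apply the Sherman--Morrison formula to $\Ab = \Ab_{-i} + \lambda_i \zb_i\zb_i^\top$, which yields
\begin{equation*}
\Ab^{-1} \;=\; \Ab_{-i}^{-1} - \frac{\lambda_i\, \Ab_{-i}^{-1}\zb_i\zb_i^\top \Ab_{-i}^{-1}}{1+\lambda_i\, \zb_i^\top \Ab_{-i}^{-1}\zb_i}.
\end{equation*}
Multiplying both sides on the right by $\zb_i$ and collecting the $\Ab_{-i}^{-1}\zb_i$ terms gives the clean identity
\begin{equation*}
\Ab^{-1}\zb_i \;=\; \frac{\Ab_{-i}^{-1}\zb_i}{1+\lambda_i\, \zb_i^\top \Ab_{-i}^{-1}\zb_i}.
\end{equation*}
Taking squared norms and using $\zb_i^\top\Ab^{-2}\zb_i = \|\Ab^{-1}\zb_i\|_2^2$ (by symmetry of $\Ab^{-1}$), together with $\|\Ab_{-i}^{-1}\zb_i\|_2^2 = \zb_i^\top \Ab_{-i}^{-2}\zb_i$, yields
\begin{equation*}
\zb_i^\top \Ab^{-2}\zb_i \;=\; \frac{\zb_i^\top \Ab_{-i}^{-2}\zb_i}{\bigl(1+\lambda_i\, \zb_i^\top \Ab_{-i}^{-1}\zb_i\bigr)^2}.
\end{equation*}
Substituting into the first identity produces the claimed second equality.

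There is essentially no hard step here; the only thing to be careful about is (i) correctly identifying $\Xb\Hb\Xb^\top$ as $\sum_i \lambda_i^2 \zb_i\zb_i^\top$ rather than $\widetilde{\Ab}$, and (ii) verifying that $\Ab_{-i}$ (hence $\Ab$) is invertible so that Sherman--Morrison applies, which follows from $\lambda > 0$ or, more generally, from $\Ab_{-i} \succeq \lambda \Ib_n$ whenever we are in the regime where the lemma is invoked; the identity itself is an algebraic equality that holds deterministically on the event that all required inverses exist.
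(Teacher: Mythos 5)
Your proof is correct and is essentially the paper's intended argument: the paper simply delegates to Lemma 14 of Tsigler--Bartlett and Lemma 8 of Bartlett et al., whose proofs proceed exactly as you do, via $\Xb\Hb\Xb^\top=\sum_i\lambda_i^2\zb_i\zb_i^\top$, cyclicity of the trace, and the Sherman--Morrison identity $\Ab^{-1}\zb_i=\Ab_{-i}^{-1}\zb_i/(1+\lambda_i\zb_i^\top\Ab_{-i}^{-1}\zb_i)$. Your added remark on invertibility of $\Ab_{-i}$ is a reasonable (and correct) bit of care that the paper leaves implicit.
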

\begin{proof}
This is from the proof of Lemma 14 in \citep{tsigler2020benign}, and can be proved in the same way as Lemma 8 in \citep{bartlett2020benign}.
\end{proof}

\begin{lemma}[Variant of Lemma 14 in \citep{bartlett2020benign}]\label{lemma:ridge-var-lb-each-item}
There is a constant $c$ such that for any $i \ge 1$ with $\lambda_i > 0$, and any $0 \le k \le n/c$, with probability at least $0.1$, 
\begin{equation*}
    \frac{\lambda_i^2 \zb_i^\top \Ab_{-i}^{-2} \zb_i}{\rbr{1+\lambda_i \zb_i^\top \Ab_{-i}^{-1} \zb_i}^2} 
    \ge  \frac{1}{cn} \cdot \rbr{1 + \frac{\lambda_{k+1}}{\lambda_i} \cdot \rbr{1 + \frac{\rho_k}{n}}}^{-2}
\end{equation*}
\end{lemma}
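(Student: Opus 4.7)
The plan is to combine three ingredients: a Cauchy--Schwarz inequality that bounds the quadratic ratio from below in terms of a single quadratic form, a lower-tail anti-concentration estimate for $\zb_i^\top \Ab_{-i}^{-1}\zb_i$, and the upper bound $\mu_{k+1}(\Ab_{-i}) \le c(\lambda + \sum_{j>k}\lambda_j + \lambda_{k+1} n)$ from claim~(1) of Lemma~\ref{lemma:ridge-eigenvalue-concentration}. First, Cauchy--Schwarz applied to $\la \zb_i, \Ab_{-i}^{-1}\zb_i\ra$ gives
\[
\zb_i^\top \Ab_{-i}^{-2}\zb_i \;=\; \nbr{\Ab_{-i}^{-1}\zb_i}^2 \;\ge\; \frac{(\zb_i^\top \Ab_{-i}^{-1}\zb_i)^2}{\nbr{\zb_i}^2}.
\]
Writing $u := \lambda_i \zb_i^\top \Ab_{-i}^{-1}\zb_i$, this rearranges to
\[
\frac{\lambda_i^2 \zb_i^\top \Ab_{-i}^{-2}\zb_i}{(1 + \lambda_i \zb_i^\top \Ab_{-i}^{-1}\zb_i)^2} \;\ge\; \frac{1}{\nbr{\zb_i}^2}\cdot\frac{u^2}{(1+u)^2}.
\]
Since $u\mapsto u^2/(1+u)^2$ is monotone increasing, it will suffice to lower bound $u$ and upper bound $\nbr{\zb_i}^2$ on a common event of constant probability.

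Since $\zb_i$ has independent sub-Gaussian coordinates of unit variance, $\nbr{\zb_i}^2 \le C n$ with probability at least $1-e^{-c_0 n}$. For a lower bound on $u$, I would condition on $\Ab_{-i}$ (which is independent of $\zb_i$) and apply Paley--Zygmund to the non-negative quadratic form $X := \zb_i^\top \Ab_{-i}^{-1}\zb_i$, using the standard second-moment estimate $\EE[X^2 \mid \Ab_{-i}] \le C (\tr \Ab_{-i}^{-1})^2$ for quadratic forms of sub-Gaussian vectors with independent entries. This yields $X \ge c\,\tr(\Ab_{-i}^{-1})$ with some universal constant probability $p_0$. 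Combining with the deterministic bound $\tr(\Ab_{-i}^{-1}) \ge (n-k)/\mu_{k+1}(\Ab_{-i})$ and the hypothesis $k \le n/c$ gives $\tr(\Ab_{-i}^{-1}) \gtrsim n/\mu_{k+1}(\Ab_{-i})$. Plugging in the eigenvalue upper bound from Lemma~\ref{lemma:ridge-eigenvalue-concentration}(1), which holds with probability at least $0.9$, we obtain
\[
u \;\gtrsim\; \frac{\lambda_i\, n}{\lambda_{k+1}(n + \rho_k)} \;=\; \frac{\lambda_i}{\lambda_{k+1}(1 + \rho_k/n)} \;=:\; \frac{1}{\delta}.
\]
After a union bound (and rescaling the absolute constants), all three good events hold simultaneously with probability at least $0.1$.

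On this event, monotonicity gives $u/(1+u) \ge (1/\delta)/(1+1/\delta) = 1/(1+\delta)$, so
\[
\frac{u^2}{(1+u)^2} \;\ge\; \frac{1}{(1+\delta)^2} \;=\; \rbr{1 + \frac{\lambda_{k+1}}{\lambda_i}\rbr{1 + \frac{\rho_k}{n}}}^{-2},
\]
and combining with $\nbr{\zb_i}^2 \le C n$ completes the argument.

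The main obstacle is the anti-concentration step $X \gtrsim \tr(\Ab_{-i}^{-1})$ at constant probability: when the spectrum of $\Ab_{-i}^{-1}$ is highly unbalanced, the variance of $X$ can be comparable to its mean squared, so sharp Hanson--Wright-type bounds would fail. However, because we only require a constant-factor lower bound at constant probability, the crude second-moment estimate together with Paley--Zygmund is enough; an alternative is to directly invoke the small-ball argument underlying Lemma~14 of \citet{bartlett2020benign}, which works under the same sub-Gaussian, independent-coordinates assumption and is exactly the analog this lemma is a variant of.
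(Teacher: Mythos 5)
Your proof is correct, and its skeleton is the same as the paper's: both arguments start from the Cauchy--Schwarz inequality $(\zb_i^\top \Ab_{-i}^{-1}\zb_i)^2 \le \nbr{\zb_i}_2^2\cdot \zb_i^\top\Ab_{-i}^{-2}\zb_i$ to reduce the claim to (a) an upper bound $\nbr{\zb_i}_2^2\lesssim n$ and (b) a lower bound $\zb_i^\top\Ab_{-i}^{-1}\zb_i \gtrsim n/\big(\lambda_{k+1}(\rho_k+n)\big)$, after which the monotone map $u\mapsto u^2/(1+u)^2$ yields the stated bound. The one place you diverge is step (b): the paper lower-bounds the quadratic form by restricting to a codimension-$k$ subspace $\cL_i$ spanned by the bottom eigenvectors of $\Ab_{-i}$, writing $\zb_i^\top\Ab_{-i}^{-1}\zb_i \ge \nbr{\Pi_{\cL_i}\zb_i}_2^2/\mu_{k+1}(\Ab_{-i})$ and invoking Corollary 13 of \citep{bartlett2020benign} to get $\nbr{\Pi_{\cL_i}\zb_i}_2^2\gtrsim n$; you instead use $\EE[\zb_i^\top\Ab_{-i}^{-1}\zb_i\mid \Ab_{-i}]=\tr(\Ab_{-i}^{-1})\ge (n-k)/\mu_{k+1}(\Ab_{-i})$ together with Paley--Zygmund and the second-moment bound for nonnegative quadratic forms. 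Both routes then apply the same eigenvalue bound $\mu_{k+1}(\Ab_{-i})\le c\,\lambda_{k+1}(\rho_k+n)$ from Lemma \ref{lemma:ridge-eigenvalue-concentration}. Your variant is slightly more self-contained (it avoids the projection lemma), at the cost of having to choose the Paley--Zygmund threshold small enough that the success probability, intersected with the eigenvalue event and the $\nbr{\zb_i}_2^2\lesssim n$ event, still exceeds $0.1$ --- this works (for unit-variance independent coordinates the ratio $(\EE X)^2/\EE X^2$ is bounded below by an absolute constant close to $1/3$, and shrinking the threshold only inflates the constant $c$ in the conclusion), but it is worth stating explicitly rather than hiding in ``rescaling the constants.''
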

\begin{proof}
Let $\cL_i$ be a random subspace if $\RR^n$ of codimension $k$, then
\begin{align*}
    \zb_i^\top \Ab_{-i}^{-1} \zb_i 
    &\ge \frac{1}{c_1} \cdot \frac{ \nbr{\Pi_{\cL_i} \zb_i}_2^2 }{{ \lambda + \sum_{j > k} \lambda_j +\lambda_{k+1} n }} \qquad (\text{by Lemma \ref{lemma:ridge-eigenvalue-concentration}}) \\
    &\ge \frac{1}{c_2}\cdot \frac{ n }{{ \lambda + \sum_{j > k} \lambda_j +\lambda_{k+1} n }} \qquad (\text{by Corollary 13 in \citep{bartlett2020benign}}) \\
    &= \frac{1}{c_2}\cdot \frac{ n }{{ \lambda_{k+1} (\rho_k + n) }},
\end{align*}
where $c_1, c_2 > 1$ are constants.
The above implies that 
\begin{align*}
    \frac{\lambda_i^2 \zb_i^\top \Ab_{-i}^{-2} \zb_i}{\rbr{1+\lambda_i \zb_i^\top \Ab_{-i}^{-1} \zb_i}^2} 
    &= {\rbr{1 + \rbr{\lambda_i \zb_i^\top \Ab_{-i}^{-1} \zb_i}^{-1}}^{-2}} \cdot \frac{\nbr{\zb_i^\top \Ab_{-i}^{-1}}^2_2 }{ \rbr{\zb_i^\top \Ab_{-i}^{-1} \zb_i}^2 } \\
    &\ge {\rbr{1 + \rbr{\lambda_i \zb_i^\top \Ab_{-i}^{-1} \zb_i}^{-1}}^{-2}} \cdot \frac{1}{\nbr{\zb_i}_2^2} \quad (\text{by Cauchy-Schwarz's inequality}) \\
    &\ge \rbr{ 1 + c_2\cdot \frac{{ \lambda_{k+1} (\rho_k + n) }}{ n \lambda_i} }^{-2} \cdot \frac{1}{\nbr{\zb_i}_2^2}. \ (\text{by the lower bound for $\zb_i^\top \Ab_{-i}^{-1} \zb_i $})
\end{align*}
According to Corollary 13 in \citep{bartlett2020benign}, there is constant $c_3 > 1$ such that $\nbr{\zb_i}_2^2 \le \frac{1}{c_3} n$ holds with constant probability, inserting which into the above inequality and rescaling the constants complete the proof.
\end{proof}

\begin{lemma}[Variant of Lemma 16 in \citep{bartlett2020benign}]\label{lemma:ridge-var-lb-trace-lb}
There is constant $c$ such that for any $0 \le k \le n/c$ and any $b >1$ with probability at least $0.1$,
\begin{itemize}
    \item if $\rho_k < b n$, then $\tr(\Cb) \ge \frac{k+1}{c b^2 n}$;
    \item if $\rho_k \ge b n$, then $\tr(\Cb) \ge \frac{1}{c b^2} \min_{\ell \le k} \left\{ {\frac{\ell}{n}} + \frac{b^2 n\sum_{i>\ell} \lambda_i^2}{(\lambda_{k+1} \rho_k )^2} \right\}$.
\end{itemize}
\end{lemma}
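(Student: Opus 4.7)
The plan is to start from the decomposition
\[
\tr(\Cb) = \sum_i \frac{\lambda_i^2\,\zb_i^\top \Ab_{-i}^{-2} \zb_i}{\bigl(1+\lambda_i\,\zb_i^\top \Ab_{-i}^{-1} \zb_i\bigr)^2}
\]
in Lemma \ref{lemma:ridge-var-lb-trace-decomp} and apply the per-term lower bound of Lemma \ref{lemma:ridge-var-lb-each-item}, taken uniformly with parameter $k$ as in the statement. Writing $r_i := (\lambda_{k+1}/\lambda_i)(1+\rho_k/n)$, each term is at least $1/\bigl(cn(1+r_i)^2\bigr)$ with constant probability, and the whole proof amounts to choosing, in each case, a subset of indices whose aggregate contribution matches the claimed bound.

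In Case~1 ($\rho_k < bn$), for every $i \le k+1$ the inequalities $\lambda_i \ge \lambda_{k+1}$ and $\rho_k/n < b$ give $r_i \le 1+b$, hence $(1+r_i)^2 \lesssim b^2$. Summing the first $k+1$ per-term bounds yields $\tr(\Cb) \gtrsim (k+1)/(cnb^2)$, which is the claim.

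In Case~2 ($\rho_k \ge bn$), the key is to exhibit a lower bound at a specific truncation and then invoke the trivial inequality $\min_\ell Y(\ell) \le Y(\ell^*)$. Set
\[
\ell^* := \max\bigl\{ \ell \le k : \lambda_\ell \ge \lambda_{k+1}\rho_k/n \bigr\}
\]
(with $\ell^* := 0$ if the set is empty). For $i \le \ell^*$, $\lambda_i \ge \lambda_{k+1}\rho_k/n$ gives $r_i \le (n/\rho_k)(1+\rho_k/n) \le 2$ (using $\rho_k/n \ge b \ge 1$), so each term contributes at least $1/(cn)$, summing to $\ell^*/(cn)$. For $i > \ell^*$, $\lambda_i < \lambda_{k+1}\rho_k/n$ puts $r_i$ in the ``large'' regime; using $(1+r_i)^2 \le 4r_i^2$ together with $(1+\rho_k/n)^2 \le 4\rho_k^2/n^2$ gives the per-term bound $n\lambda_i^2/(\lambda_{k+1}\rho_k)^2$, which sums to $n\sum_{i>\ell^*}\lambda_i^2/(\lambda_{k+1}\rho_k)^2$ up to an absolute constant. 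The resulting combined bound at $\ell^*$ dominates the expression inside the $\min_\ell\{\cdot\}$ in the claim evaluated at $\ell^*$ (since $1/(cn) \ge 1/(cnb^2)$ for $b \ge 1$), so it is at least $\frac{1}{cb^2}\min_\ell\{\cdots\}$.

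The main obstacle is not the algebraic casework above but the probabilistic simultaneity of the per-index events: Lemma \ref{lemma:ridge-var-lb-each-item} only guarantees each individual lower bound with constant probability $0.1$, and we need the partial sums above to hold jointly with constant probability. I plan to handle this as in Bartlett et al.~(2020): condition on $\Ab_{-i}$ to decouple $\zb_i$ from the inverse matrix, exploit independence of the $\zb_i$'s across $i$, and convert the constant-probability per-index bound into a lower bound on $\EE[\tr(\Cb)]$ via a one-sided Markov step. A Paley--Zygmund/second-moment argument using that the summands are non-negative and have comparable second moments then transfers the expectation bound back to a $0.1$-probability lower bound on $\tr(\Cb)$ itself, completing the proof.
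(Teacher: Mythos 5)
Your deterministic casework is precisely the argument the paper intends: its own proof of this lemma is a one-line pointer to Lemma 16 of Bartlett et al.\ with Lemmas \ref{lemma:ridge-var-lb-trace-decomp} and \ref{lemma:ridge-var-lb-each-item} substituted in, and your choice of $\ell^*$, the head/tail split, and the constant bookkeeping (e.g.\ $r_i\le 1+b$ for $i\le k+1$ in Case 1, $r_i\le 2$ on the head and $(1+r_i)^2\lesssim \lambda_{k+1}^2\rho_k^2/(n^2\lambda_i^2)$ on the tail in Case 2) reproduce that argument correctly.

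The gap is in the probabilistic aggregation, which you rightly identify as the crux but do not resolve. The Paley--Zygmund route you propose needs $\EE[\tr(\Cb)^2]\lesssim(\EE[\tr(\Cb)])^2$, and nothing in the paper controls the \emph{upper} tail of $\tr(\Cb)$: all the lemmas here are one-sided lower bounds, and for $\lambda=0$ the size of $\tr(\Cb)$ is governed from above by $\mu_n(\Xb\Xb^\top)^{-1}$, whose second moment these tools do not bound. So "comparable second moments" is an assertion, not a step you can currently execute. The intended resolution is different and more elementary: the "probability at least $0.1$" in Lemmas \ref{lemma:ridge-eigenvalue-concentration} and \ref{lemma:ridge-var-lb-each-item} is a weakening of what their proofs actually deliver --- the underlying good events (the eigenvalue bounds on $\Ab_k$ and $\Ab_{-i}$ from Lemma 10 of Bartlett et al., and the norm concentration of $\zb_i$ and $\Pi_{\cL_i}\zb_i$ from their Corollary 13) each fail with probability $O(e^{-n/c})$. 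One then takes a union bound over the at most $k+1\le n/c+1$ head indices, and handles the tail sum by truncating to a finite set $\ell^*<i\le M$ with $\sum_{\ell^*<i\le M}\lambda_i^2\ge\tfrac12\sum_{i>\ell^*}\lambda_i^2$ before union bounding. If you restate the per-index lemma with its true exponential failure probability, your casework goes through verbatim; as written, with only the $0.1$-probability version in hand, the proof does not close.
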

\begin{proof}
This is proved by repeating the proof of Lemma 16 in \citep{bartlett2020benign}, where we replace Lemmas 8 and 14 in \citep{bartlett2020benign} with our Lemmas \ref{lemma:ridge-var-lb-trace-decomp} and \ref{lemma:ridge-var-lb-each-item} respectively.
\end{proof}

\begin{theorem}[Restatement of Theorem \ref{thm:lowerbound_ridge}, variance part]
There exist absolute constants $b, c, c_1 > 1$ for the following to hold:
let 
\[
k^* := \min \{k: \lambda + \sum_{i>k}\lambda_i \ge bn \lambda_{k+1} \},
\]
then with probability at least $0.1$:
\begin{itemize}
    \item if $k^* \ge n / c_1 $ then 
    \[
    \EE_{\bepsilon}[ \ridgevar] \ge \frac{\sigma^2}{c};\]
    \item if $k^* < n /c_1$ then 
    \[
    \EE_{\bepsilon}[ \ridgevar] \ge \frac{\sigma^2}{c} \rbr{\frac{k^*}{n} + \frac{n}{\lambda + \sum_{i > k^*} \lambda_i}\cdot  \sum_{i > k^*}  \lambda_i^2}.\]
\end{itemize}
As a direct consequence, the expected ridge variance is lower bounded by 
\[
\EE_{\Xb, \bepsilon}[\ridgevar] \ge 
\begin{cases}
    \frac{\sigma^2}{10c}, & k^* \ge n / c_1 \\
    \frac{\sigma^2}{10c} \rbr{\frac{k^*}{n} + \frac{n}{\lambda + \sum_{i > k^*} \lambda_i}\cdot  \sum_{i > k^*}  \lambda_i^2}, & k^* < n /c_1.
\end{cases}
\]

\end{theorem}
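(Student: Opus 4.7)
The plan is to combine \eqref{eq:ridge-var-lb-trace}, which gives $\EE_{\bepsilon}[\ridgevar] \ge \sigma^2\tr(\Cb)/c$, with two complementary lower bounds on $\tr(\Cb)$ extracted from Lemma~\ref{lemma:ridge-var-lb-trace-lb} applied at two different choices of its index $k$. The first application, at some index below $k^*$, produces the $k^*/n$ term; the second, at $k^*$ itself, produces the tail term $n\sum_{i>k^*}\lambda_i^2/(\lambda+\sum_{i>k^*}\lambda_i)^2$. No extra randomness is introduced in going from $\tr(\Cb)$ back to $\EE_{\bepsilon}[\ridgevar]$, so the ``probability at least $0.1$'' guarantee of the lemma transfers directly.

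The first step is to translate the definition of $k^*$ into a statement about $\rho_k = (\lambda+\sum_{i>k}\lambda_i)/\lambda_{k+1}$: by minimality, one has $\rho_k < bn$ for every $k < k^*$ and $\rho_{k^*} \ge bn$. I pick the constant $c_1$ in the theorem larger than the absolute constant $c$ from Lemma~\ref{lemma:ridge-var-lb-trace-lb}, so that the admissibility constraint $k \le n/c$ in that lemma is satisfied whenever $k \le n/c_1$.

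\emph{Case 1 ($k^* \ge n/c_1$).} Take $k := \lfloor n/c_1 \rfloor$; then $k < k^*$ (hence $\rho_k < bn$) and $k \le n/c$, so the first bullet of Lemma~\ref{lemma:ridge-var-lb-trace-lb} gives $\tr(\Cb) \gtrsim (k+1)/(b^2 n) \gtrsim 1/(b^2 c_1)$, a universal constant; multiplying by $\sigma^2/c$ yields the first conclusion.

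\emph{Case 2 ($k^* < n/c_1$).} Apply Lemma~\ref{lemma:ridge-var-lb-trace-lb} first at $k := k^*-1$ (admissible since $\rho_{k^*-1} < bn$ and $k \le n/c$), which gives $\tr(\Cb) \gtrsim k^*/(b^2 n)$, and then at $k := k^*$ (admissible since $\rho_{k^*} \ge bn$ and $k^* \le n/c$), which gives
\[
\tr(\Cb) \gtrsim \frac{1}{b^2}\min_{\ell \le k^*}\left\{\frac{\ell}{n} + \frac{b^2 n\sum_{i>\ell}\lambda_i^2}{(\lambda_{k^*+1}\rho_{k^*})^2}\right\}.
\]
For every $\ell \le k^*$ one has $\sum_{i>\ell}\lambda_i^2 \ge \sum_{i>k^*}\lambda_i^2$ and $\ell/n \ge 0$, so the minimum is bounded below by $b^2 n\sum_{i>k^*}\lambda_i^2/(\lambda_{k^*+1}\rho_{k^*})^2$; using the identity $\lambda_{k^*+1}\rho_{k^*} = \lambda+\sum_{i>k^*}\lambda_i$ this simplifies to $n\sum_{i>k^*}\lambda_i^2/(\lambda+\sum_{i>k^*}\lambda_i)^2$. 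Adding the two lower bounds on $\tr(\Cb)$ (losing only a factor of two) and feeding back through \eqref{eq:ridge-var-lb-trace} yields the second conclusion. The ``direct consequence'' for $\EE_{\Xb,\bepsilon}[\ridgevar]$ then follows because $\ridgevar \ge 0$ deterministically, so the outer expectation over $\Xb$ loses at most a factor of ten relative to the conditional bound on the $0.1$-probability event.

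The main obstacle I anticipate is ensuring that both applications of Lemma~\ref{lemma:ridge-var-lb-trace-lb} in Case~2 (at $k^*-1$ and at $k^*$) succeed on a \emph{common} event of probability at least $0.1$, rather than on two separate $0.1$-events whose intersection could be empty. This reduces to inspecting the proof of the lemma and verifying that the underlying randomness (spectral concentration of $\Ab,\Ab_k,\Ab_{-i}$ from Lemma~\ref{lemma:ridge-eigenvalue-concentration} together with the anti-concentration of $\nbr{\Pi_{\cL_i}\zb_i}_2^2$ from Corollary~13 of \citep{bartlett2020benign}) constitutes a single event that controls all admissible $k$ at once; if not, a mild strengthening of the constant-probability estimate (e.g.\ from $0.1$ to $0.55$) would be needed to push an intersection argument through. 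Every other step in the plan is a routine rearrangement of the definitions of $\rho_k$ and $k^*$.
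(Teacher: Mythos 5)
Your overall route is the one the paper takes: its proof of this statement is literally ``combine \eqref{eq:ridge-var-lb-trace} with Lemma \ref{lemma:ridge-var-lb-trace-lb} and Lemma 17 of \citep{bartlett2020benign}, then use nonnegativity of $\ridgevar$ for the expectation bound,'' and your Case 1, your reduction to $\tr(\Cb)$, and your handling of the ``direct consequence'' all match. The one substantive divergence is in Case 2, and it sits exactly where the obstacle you flag lives: you invoke the probabilistic Lemma \ref{lemma:ridge-var-lb-trace-lb} twice, at $k^*-1$ and at $k^*$, and add the resulting bounds, but each invocation is only guaranteed on its own event of probability $0.1$, so the intersection carries no probability guarantee and the Case 2 conclusion is not established on a single $0.1$-event as stated. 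This is a real gap as written, not merely a formality.

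The fix is the paper's cited Lemma 17 of \citep{bartlett2020benign}, which resolves the minimum \emph{deterministically} so that a single application at $k=k^*$ suffices. Concretely, for any $\ell\le k^*$ one has $\sum_{i>\ell}\lambda_i^2\ge (k^*-\ell)\lambda_{k^*}^2+\sum_{i>k^*}\lambda_i^2$, and minimality of $k^*$ gives $\rho_{k^*-1}<bn$, i.e.\ $\lambda_{k^*}>(\lambda+\sum_{i\ge k^*}\lambda_i)/(bn)\ge \lambda_{k^*+1}\rho_{k^*}/(bn)$, whence $b^2n(k^*-\ell)\lambda_{k^*}^2/(\lambda_{k^*+1}\rho_{k^*})^2\ge (k^*-\ell)/n$. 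Hence every competitor in the minimum is at least $k^*/n+b^2n\sum_{i>k^*}\lambda_i^2/(\lambda_{k^*+1}\rho_{k^*})^2$, which is exactly the two-term bound you want, obtained on the single $0.1$-probability event attached to $k=k^*$ (the $k^*=0$ edge case is trivial since the $k^*/n$ term vanishes). Your crude lower bound that drops $\ell/n$ and replaces $\sum_{i>\ell}$ by $\sum_{i>k^*}$ discards precisely the piece of the minimum that supplies $k^*/n$, which is what forced you into the second application. Two minor additional points: in Case 1, $k=\lfloor n/c_1\rfloor$ can coincide with $k^*$ when $n/c_1$ is an integer, so take $k=\lceil n/c_1\rceil-1<k^*$ instead; and your reading of the definition of $k^*$ (namely $\rho_k<bn$ for $k<k^*$ and $\rho_{k^*}\ge bn$) and the identity $\lambda_{k^*+1}\rho_{k^*}=\lambda+\sum_{i>k^*}\lambda_i$ are both correct and are all that is needed beyond the above.
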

\begin{proof}
The high probability lower bound is proved by \eqref{eq:ridge-var-lb-trace}, our Lemma \ref{lemma:ridge-var-lb-trace-lb}, and Lemma 17 in \citep{bartlett2020benign}.
The expectation lower bound follows immediately from the high probability lower bound by noticing the ridge variance error is non-negative. 
\end{proof}

\paragraph{Bias Lower Bound.}

Recall the ridge bias error is \citep{tsigler2020benign}
\begin{align}
   \ridgebias  = (\wb^*)^\top \Bb \wb^* = \sum_{i} \rbr{\Bb}_{ii} (\wb^*_i)^2 + 2\sum_{i > j} \rbr{\Bb}_{ij} \wb^*_i \wb^*_j. \label{eq:ridge-bias-diag}
\end{align}
The following lemma shows the crossing terms are zero in expectation.
\begin{lemma}\label{lemma:ridge-bias-lb-crossing-item}
For $i \ne j$, 
\[\EE_{\Xb} [(\Bb)_{ij}] = 0.\]
\end{lemma}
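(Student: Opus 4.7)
The proof I have in mind is a sign-symmetry argument exploiting the fact that under Assumption \ref{assump:data_distribution}, the columns $\zb_1,\dots,\zb_d$ of $\Xb$ (rescaled by $\lambda_i^{-1/2}$) are independent and, being Gaussian (or more generally symmetrically distributed by the footnote), each individually satisfies $\zb_i \stackrel{d}{=} -\zb_i$ jointly with the other columns held fixed. The key observation is that the Gram-type matrix $\Ab = \sum_{k} \lambda_k \zb_k\zb_k^\top + \lambda \Ib_n$ is quadratic in each $\zb_k$, hence invariant under $\zb_i \mapsto -\zb_i$, while the off-diagonal entries $\sqrt{\lambda_i\lambda_j}\zb_i^\top \Ab^{-1}\zb_j$ are linear in $\zb_i$.

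\textbf{Step 1: rewrite $(\Bb)_{ij}$.} Working in the eigenbasis of $\Hb$ so that $\Hb$ is diagonal, let $\Mb := \Ib_d - \Xb^\top \Ab^{-1}\Xb$, which is symmetric. Then $\Bb = \Mb\Hb\Mb$ and, for $i\ne j$,
\[
(\Bb)_{ij} \;=\; \sum_{k=1}^{d} \lambda_k \Mb_{ik}\Mb_{jk}, \qquad \Mb_{kl} \;=\; \delta_{kl} - \sqrt{\lambda_k\lambda_l}\,\zb_k^\top \Ab^{-1}\zb_l.
\]

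\textbf{Step 2: apply the sign flip $\zb_i \mapsto -\zb_i$.} Consider the map $T_i$ that negates $\zb_i$ and fixes all other columns. Under this map $\Ab$ is unchanged (hence $\Ab^{-1}$ too), $\Mb_{kl}$ with $k,l \ne i$ is unchanged, $\Mb_{ii}$ is unchanged, but any $\Mb_{ik}$ or $\Mb_{ki}$ with $k\ne i$ picks up a sign. Splitting the sum over $k$ into the cases $k=i$, $k=j$, and $k\notin\{i,j\}$, one checks that in every case exactly one factor among $\Mb_{ik}\Mb_{jk}$ contains a single $\zb_i$ and flips sign, so each term of $(\Bb)_{ij}$ flips sign. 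Thus $(\Bb)_{ij}\circ T_i = -(\Bb)_{ij}$ as random variables.

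\textbf{Step 3: conclude.} Under Assumption \ref{assump:data_distribution} the joint law of $(\zb_1,\dots,\zb_d)$ is invariant under $T_i$ (for Gaussian $\xb$ this is obvious; the footnote in the paper generalizes to any distribution where $\Hb^{-1/2}\xb$ has symmetric independent components). Therefore $(\Bb)_{ij}$ and $-(\Bb)_{ij}$ have the same distribution, giving $\EE_{\Xb}[(\Bb)_{ij}] = -\EE_{\Xb}[(\Bb)_{ij}]$ and hence $\EE_{\Xb}[(\Bb)_{ij}]=0$. There is no real obstacle here; the only thing to be careful about is the bookkeeping in Step 2 to make sure every term (including the $k=i$ and $k=j$ terms, where a $\Mb_{ii}$ or $\Mb_{jj}$ factor appears and is unchanged) indeed picks up exactly one sign flip.
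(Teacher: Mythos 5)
Your proof is correct and rests on the same idea as the paper's own argument: $(\Bb)_{ij}$ is an odd function of $\zb_i$ (since $\Ab=\sum_k\lambda_k\zb_k\zb_k^\top+\lambda\Ib_n$ is even in $\zb_i$), and $\zb_i$ is symmetrically distributed and independent of the other columns, so the expectation vanishes. The only difference is bookkeeping: you check parity entrywise through $\Ib_d-\Xb^\top\Ab^{-1}\Xb$, while the paper expands $(\Bb)_{ij}$ into three explicit terms and verifies each nonzero one is odd in $\zb_i$ using $\Ab=\Ab_{-i}+\lambda_i\zb_i\zb_i^\top$.
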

\begin{proof}
Recall that 
\[ \Bb := \rbr{\Ib_d - \Xb^\top  \Ab^{-1} \Xb} \Hb  \rbr{\Ib_d - \Xb^\top  \Ab^{-1} \Xb}.\]
Recall that 
$\Xb = \rbr{ \sqrt{\lambda_1} \zb_1,  \dots \sqrt{\lambda_d} \zb_d }$, thus the $i$-th column of $\rbr{\Ib_d - \Xb^\top  \Ab^{-1} \Xb}$ is 
\[
\rbr{\Ib_d - \Xb^\top  \Ab^{-1} \Xb}_i
= \eb_i - \sqrt{\lambda_i} \Xb^\top \Ab^{-1} \zb_i.
\]
Moreover recall $\Hb = \diag (\lambda_1, \dots, \lambda_d)$, therefore 
\begin{align*}
    (\Bb)_{ij} &= \eb_i^\top \Bb \eb_j 
    = \rbr{ \eb_i - \sqrt{\lambda_i} \Xb^\top \Ab^{-1} \zb_i}^\top  \Hb \rbr{ \eb_j - \sqrt{\lambda_j}  \Xb^\top \Ab^{-1} \zb_j}  \\
    &= \eb_i^\top \Hb \eb_j - \sqrt{\lambda_i} \eb_j^\top \Hb \Xb^\top \Ab^{-1} \zb_i
    - \sqrt{\lambda_j} \eb_i^\top \Hb  \Xb^\top \Ab^{-1} \zb_j 
    + \sqrt{\lambda_i\lambda_j} \zb_i^\top \Ab^{-1} \Xb \Hb  \Xb^\top \Ab^{-1} \zb_j \\
    &= \eb_i^\top \Hb \eb_j - \rbr{\sqrt{\lambda_i\lambda_j}\lambda_j + \sqrt{\lambda_i\lambda_j}\lambda_i} \zb_i^\top \Ab^{-1} \zb_j
    + \sqrt{\lambda_i\lambda_j} \zb_i^\top \Ab^{-1} \Xb \Hb  \Xb^\top \Ab^{-1} \zb_j.
\end{align*}
The first term is zero since $\Hb$ is diagonal and $i \ne j$.
We next show the second term is zero in expectation.
Indeed, let
\[
F(\zb_i) :=
\zb_i^\top \Ab^{-1} \zb_j = \zb_i^\top \rbr{ \Ab_{-i} + \lambda_i \zb_i \zb_i^\top }^{-1} \zb_j,
\]
where $\Ab_{-i}$ is independent of $\zb_i$, then $F(\zb_i) = - F(-\zb_i)$.
Also note that $\zb_i$ follows a standard Gaussian which is symmetric, therefore $\EE_{\zb_i} F(\zb_i) = 0$.
In a similar manner, the third term is also zero in expectation. The proof is then completed.

\end{proof}

\begin{lemma}[Part of the proof of Lemma 15 in \citep{tsigler2020benign}]\label{lemma:ridge-bias-lb-diagonal-item}
There exists absolute constant $c>1$, such that with probability at least $0.1$,
\[
\rbr{\Bb}_{ii} \ge \frac{1}{c}\cdot \frac{\lambda_i}{\rbr{1 + \frac{\lambda_i}{\lambda_{k+1}} \cdot \frac{n}{\rho_k}}^2}.
\]
As a direct consequence, 
\[
\EE_{\Xb} [ \rbr{\Bb}_{ii} ] \ge \frac{1}{10c}\cdot \frac{\lambda_i}{\rbr{1 + \frac{\lambda_i}{\lambda_{k+1}} \cdot \frac{n}{\rho_k}}^2}.
\]
\end{lemma}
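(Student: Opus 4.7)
The plan is to isolate the $i$-th diagonal contribution of the PSD matrix $\Bb$ and use a rank-one update to decouple $\zb_i$ from the remaining randomness. Set $M := \Ib_d - \Xb^\top\Ab^{-1}\Xb$, which is symmetric, so $\Bb = M\Hb M$ gives
\[
(\Bb)_{ii} \;=\; \sum_{j} \lambda_j (M)_{ij}^2 \;\ge\; \lambda_i (M)_{ii}^2 \;=\; \lambda_i (1 - \lambda_i\zb_i^\top\Ab^{-1}\zb_i)^2,
\]
where $(M)_{ii} = 1 - \lambda_i\zb_i^\top\Ab^{-1}\zb_i$ follows directly from $\Xb = (\sqrt{\lambda_1}\zb_1,\dots,\sqrt{\lambda_d}\zb_d)$ and the fact that the $(i,i)$ entry of $\Xb^\top\Ab^{-1}\Xb$ is $\lambda_i \zb_i^\top\Ab^{-1}\zb_i$.

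Next I would apply the Sherman--Morrison identity to $\Ab = \Ab_{-i} + \lambda_i \zb_i\zb_i^\top$. Letting $u := \lambda_i \zb_i^\top\Ab_{-i}^{-1}\zb_i$, a short calculation yields $\lambda_i\zb_i^\top\Ab^{-1}\zb_i = u/(1+u)$, hence $1 - \lambda_i\zb_i^\top\Ab^{-1}\zb_i = 1/(1+u)$. Substituting,
\[
(\Bb)_{ii} \;\ge\; \frac{\lambda_i}{(1 + \lambda_i\zb_i^\top\Ab_{-i}^{-1}\zb_i)^2}.
\]
The key payoff is that $\Ab_{-i}$ is now independent of $\zb_i$, so the quadratic form $\zb_i^\top\Ab_{-i}^{-1}\zb_i$ concentrates cleanly.

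It then remains to show $\lambda_i\zb_i^\top\Ab_{-i}^{-1}\zb_i \lesssim \lambda_i n/(\lambda_{k+1}\rho_k)$ on an event of probability at least $0.1$, which matches the denominator in the claimed bound. I would use the operator inequality $\zb_i^\top\Ab_{-i}^{-1}\zb_i \le \|\zb_i\|_2^2 / \mu_n(\Ab_{-i})$, then invoke Lemma~\ref{lemma:ridge-eigenvalue-concentration} parts (2) and (4), which yield $\mu_n(\Ab_{-i}) \gtrsim \lambda_{k+1}\rho_k$ in the operative regime $\rho_k \gtrsim n$ (outside this regime the target denominator already saturates and the claim holds up to constants), combined with the standard Gaussian concentration $\|\zb_i\|_2^2 \lesssim n$ with constant probability. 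Chaining these gives the desired upper bound on $u$.

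The main obstacle is coupling the eigenvalue lower bound with the norm upper bound on a single event of probability $\ge 0.1$: Lemma~\ref{lemma:ridge-eigenvalue-concentration} already guarantees its event with probability $0.1$, while $\|\zb_i\|_2^2 \lesssim n$ holds with probability close to $1$, so their intersection retains probability at least $0.1$ after a mild adjustment of constants. Finally, the expectation bound follows because $\Bb$ is PSD, so $(\Bb)_{ii}\ge 0$ deterministically; multiplying the high-probability lower bound by the event probability $0.1$ therefore yields the stated lower bound on $\EE_\Xb[(\Bb)_{ii}]$.
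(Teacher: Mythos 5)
Your proposal is correct and follows essentially the same route as the paper's proof: lower-bound $(\Bb)_{ii}$ by the diagonal term $\lambda_i(1-\lambda_i\zb_i^\top\Ab^{-1}\zb_i)^2$, convert it to $\lambda_i/(1+\lambda_i\zb_i^\top\Ab_{-i}^{-1}\zb_i)^2$ via Sherman--Morrison, then control the quadratic form by $\|\zb_i\|_2^2/\mu_n(\Ab_{-i})$ using the eigenvalue-concentration lemma together with $\|\zb_i\|_2^2\lesssim n$, and pass to expectation using nonnegativity of the PSD diagonal. The only minor difference is cosmetic: your inequality $(\Bb)_{ii}=\sum_j\lambda_j(M)_{ij}^2\ge\lambda_i(M)_{ii}^2$ is a cleaner rendering of the paper's ``Pythagorean theorem'' step, and like the paper you implicitly work in the regime $\rho_k\gtrsim n$ where part (4) of the eigenvalue lemma applies.
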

\begin{proof}
This lemma summarizes part of the proof of Lemma 15 in \citep{tsigler2020benign}.
Recall that $\Hb$ is diagonal and 
$\Bb := \rbr{\Ib_d - \Xb^\top  \Ab^{-1} \Xb} \Hb  \rbr{\Ib_d - \Xb^\top  \Ab^{-1} \Xb}$, thus 
\begin{align*}
    \rbr{\Bb}_{ii} 
    &= \lambda_i \nbr{ \rbr{\Ib_d - \Xb^\top  \Ab^{-1} \Xb}_i }_2^2 \qquad (\text{since $\Hb$ is diagonal}) \\
    &= \lambda_i \nbr{ e_i^\top - \sqrt{\lambda_i} \zb_i^\top \Ab^{-1} \Xb }_2^2 \qquad (\text{$\Xb = \rbr{ \sqrt{\lambda_1} \zb_1, \dots, \sqrt{\lambda_j} \zb_j, \dots \sqrt{\lambda_d} \zb_d }$}) \\
    &= \lambda_i \nbr{ e_i^\top - \rbr{ \sqrt{\lambda_i \lambda_1} \zb_i^\top  \Ab^{-1} \zb_1 , \dots, \sqrt{\lambda_i \lambda_j} \zb_i^\top  \Ab^{-1} \zb_j, \dots \sqrt{\lambda_i \lambda_d} \zb_i^\top  \Ab^{-1} \zb_d  } }_2^2 \\
    &\ge \lambda_i \rbr{ 1- \lambda_i \zb_i^\top \Ab^{-1} \zb_i}^2 \qquad (\text{use Pythagorean theorem}) \\
    &= \frac{\lambda_i}{\rbr{1 + \lambda_i \zb_i^\top \Ab_{-i}^{-1} \zb_i}^2},
\end{align*}
where in the last step we use $\Ab = \Ab_{-i} + \lambda_i \zb_i \zb_i^\top$ and that
\begin{align*}
     1- \lambda_i \zb_i^\top \Ab^{-1} \zb_i
    &= 1- \lambda_i \zb_i^\top \rbr{\Ab_{-i} + \lambda_i \zb_i \zb_i^\top}^{-1} \zb_i \\
    &= 1- \lambda_i \zb_i^\top \rbr{\Ab_{-i}^{-1} - \lambda_i \Ab_{-i}^{-1} \zb_i (1+\zb_i^\top \Ab_{-i}^{-1}\zb_i)^{-1} \zb_i^\top \Ab_{-i}^{-1} } \zb_i \\
    &= \frac{1}{1 + \lambda_i \zb_i^\top \Ab_{-i}^{-1} \zb_i}.
\end{align*}

Now according to Corollary 13 in \citep{bartlett2020benign}, there exists constant $c_1 > 1$ such that 
\[
\nbr{\zb_i}_2^2 \le c_1 n
\]
holds with constant probability;
and according to Lemma \ref{lemma:ridge-eigenvalue-concentration}, there exists constant $c_2 > 1$ such that for any $i \ge 1$, 
\[
\mu_n(\Ab_{-i}) \ge \frac{1}{c_2} \lambda_{k+1} \rho_k
\]
holds with constant probability.
These two facts imply that 
\begin{align*}
    \zb_i^\top \Ab^{-1}_{-i} \zb_i
    &\le \mu_n(\Ab_{-i})^{-1} \nbr{\zb_i}_2^2 \le c_1c_2 \frac{n}{\lambda_{k+1} \rho_k},
\end{align*}
inserting which into the bound of $(\Bb)_{ii}$, we conclude that with constant probability,
\begin{align*}
    (\Bb)_{ii} &\ge \frac{\lambda_i}{\rbr{1 + \lambda_i \zb_i^\top \Ab_{-i}^{-1} \zb_i}^2} 
    \ge \frac{\lambda_i}{\rbr{1 + c_1 c_2 \cdot \frac{\lambda_i}{\lambda_{k+1}} \cdot \frac{n}{\rho_k}}^2}.
\end{align*}
Finally a rescaling of the constants completes the proof.
\end{proof}

\begin{theorem}[Restatement of Theorem \ref{thm:lowerbound_ridge}, bias part]
There exist absolute constants $b, c > 1$ for the following to hold:
let 
\[
k^* := \min \{k: \lambda + \sum_{i>k}\lambda_i \ge bn \lambda_{k+1} \},
\]
then
\[
\EE_{\Xb} [\ridgebias] \ge \frac{1}{c}\rbr{\frac{\lambda+\sum_{i> k^*} \lambda_i}{n^2}\cdot \nbr{\wb^*}_{\Hb_{0:k^*}^{-1}}^2 + \nbr{\wb^*}_{\Hb_{k^*:\infty}}^2   }.
\]
\end{theorem}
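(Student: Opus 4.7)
The plan is to combine the three lemmas already proved in this subsection (the decomposition \eqref{eq:ridge-bias-diag}, the vanishing of off-diagonal expectations in Lemma~\ref{lemma:ridge-bias-lb-crossing-item}, and the per-coordinate lower bound in Lemma~\ref{lemma:ridge-bias-lb-diagonal-item}) and then carefully split the resulting sum into a head part ($i\le k^*$) and a tail part ($i>k^*$) using the definition of $k^*$.

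First I would start from
\[
\EE_{\Xb}[\ridgebias] \;=\; \sum_{i}\EE_{\Xb}[(\Bb)_{ii}]\,(w^*_i)^2 + 2\sum_{i>j}\EE_{\Xb}[(\Bb)_{ij}]\,w^*_i w^*_j,
\]
apply Lemma~\ref{lemma:ridge-bias-lb-crossing-item} to discard the crossing terms, and then apply Lemma~\ref{lemma:ridge-bias-lb-diagonal-item} with $k=k^*$ to obtain
\[
\EE_{\Xb}[\ridgebias] \;\gtrsim\; \sum_i \frac{\lambda_i\,(w^*_i)^2}{\bigl(1+\tfrac{\lambda_i}{\lambda_{k^*+1}}\cdot \tfrac{n}{\rho_{k^*}}\bigr)^2}.
\]
Next, by the very definition of $k^*$ we have $\rho_{k^*}\ge bn$, so $n/\rho_{k^*}\le 1/b$, which will be the key quantitative fact used in both regimes.

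For the tail $i>k^*$, note that $\lambda_i\le \lambda_{k^*+1}$, hence $\tfrac{\lambda_i}{\lambda_{k^*+1}}\cdot\tfrac{n}{\rho_{k^*}}\le 1/b$, so the denominator is at most $(1+1/b)^2$, which is a constant. This produces
\[
\sum_{i>k^*}\frac{\lambda_i\,(w^*_i)^2}{(1+1/b)^2} \;\gtrsim\; \|\wb^*\|_{\Hb_{k^*:\infty}}^2.
\]
For the head $i\le k^*$, we have $\lambda_i\ge \lambda_{k^*+1}$ and $\rho_{k^*}\ge bn$, so $\tfrac{\lambda_i}{\lambda_{k^*+1}}\cdot\tfrac{n}{\rho_{k^*}}\ge \tfrac{1}{b}$, and choosing $b$ larger than an absolute constant lets the ``$1+$'' in the denominator be absorbed into the other term up to constants. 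Thus
\[
\frac{\lambda_i}{\bigl(1+\tfrac{\lambda_i}{\lambda_{k^*+1}}\cdot \tfrac{n}{\rho_{k^*}}\bigr)^2} \;\gtrsim\; \frac{\lambda_i\,\lambda_{k^*+1}^2\rho_{k^*}^2}{\lambda_i^2 n^2} \;=\; \frac{(\lambda+\sum_{j>k^*}\lambda_j)^2}{\lambda_i\,n^2},
\]
so summing over $i\le k^*$ gives
\[
\sum_{i\le k^*}\frac{(\lambda+\sum_{j>k^*}\lambda_j)^2}{n^2}\cdot \frac{(w^*_i)^2}{\lambda_i} \;=\; \Bigl(\tfrac{\lambda+\sum_{j>k^*}\lambda_j}{n}\Bigr)^2\|\wb^*\|_{\Hb_{0:k^*}^{-1}}^2.
\]
Adding the two parts yields the claimed bound (with the squared factor, correcting what appears to be a typo in the displayed statement, consistent with the informal version in Theorem~\ref{thm:lowerbound_ridge}).

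The main obstacle I anticipate is not analytic but bookkeeping: one must verify that the constant $b$ in the definition of $k^*$ can be chosen large enough simultaneously (i)~for the variance lemma (Lemma~\ref{lemma:ridge-eigenvalue-concentration}, the $\rho_k\ge bn$ regime) and (ii)~for the denominator in the head regime to be absorbed up to constants. Since $b$ only needs to exceed a universal constant, and all the probabilistic lemmas (\ref{lemma:ridge-eigenvalue-concentration},~\ref{lemma:ridge-bias-lb-diagonal-item}) only need constant-probability events, the constants can be rescaled without trouble. The symmetry argument in Lemma~\ref{lemma:ridge-bias-lb-crossing-item} (where the Gaussianity of $\zb_i$ is used through $F(\zb_i)=-F(-\zb_i)$) is the one place where the Gaussian assumption is genuinely invoked in this proof, and I would flag this as the reason the lower bound in our paper is specialized to Gaussians while \citet{tsigler2020benign} work with sub-Gaussian high-probability bounds.
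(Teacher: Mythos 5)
Your overall route is the same as the paper's: decompose $\EE_{\Xb}[\ridgebias]$ via \eqref{eq:ridge-bias-diag}, discard the cross terms with Lemma~\ref{lemma:ridge-bias-lb-crossing-item}, lower bound the diagonal terms with Lemma~\ref{lemma:ridge-bias-lb-diagonal-item} applied at $k=k^*$, and split the sum at $k^*$. Your tail argument ($i>k^*$) and your observation that the displayed statement is missing a square on $(\lambda+\sum_{i>k^*}\lambda_i)/n$ both match what the paper actually proves.

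However, your justification of the head regime contains a genuine logical gap. You claim that for $i\le k^*$ the inequality $\frac{\lambda_i}{\lambda_{k^*+1}}\cdot\frac{n}{\rho_{k^*}}\ge\frac{1}{b}$ follows from ``$\lambda_i\ge\lambda_{k^*+1}$ and $\rho_{k^*}\ge bn$.'' But those two facts give $\frac{\lambda_i}{\lambda_{k^*+1}}\ge 1$ and $\frac{n}{\rho_{k^*}}\le\frac{1}{b}$ --- a lower bound on one factor and an \emph{upper} bound on the other --- which implies nothing about a lower bound on the product. If $\rho_{k^*}$ greatly exceeds $bn$ while $\lambda_i$ is close to $\lambda_{k^*+1}$, the bound you wrote would fail; what actually rules this out is the \emph{minimality} of $k^*$, which your cited premises do not use. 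The correct argument: for every $i\le k^*$, the index $k=i-1$ fails the defining condition of $k^*$, so $\lambda+\sum_{j>i-1}\lambda_j< bn\lambda_i$; since $\lambda+\sum_{j>k^*}\lambda_j\le\lambda+\sum_{j>i-1}\lambda_j$, this yields
\[
\frac{\lambda_i}{\lambda_{k^*+1}}\cdot\frac{n}{\rho_{k^*}}=\frac{\lambda_i n}{\lambda+\sum_{j>k^*}\lambda_j}\ge\frac{\lambda_i n}{\lambda+\sum_{j>i-1}\lambda_j}>\frac{1}{b}.
\]
With this substitution, your absorption of the additive $1$ in the denominator (at the cost of a factor $b^2$ in the constant) goes through exactly as in the paper, and the remainder of your argument is correct.
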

\begin{proof}
By \eqref{eq:ridge-bias-diag}, 
Lemmas \ref{lemma:ridge-bias-lb-crossing-item} and \ref{lemma:ridge-bias-lb-diagonal-item}, we have that, 
\begin{align*}
     \EE_{\Xb} [\ridgebias] & = 
     \sum_{i} \rbr{\Bb}_{ii} (\wb^*_i)^2 \\
     &\ge \frac{1}{c_1} \sum_{i} \frac{1}{\rbr{1 + \frac{\lambda_i}{\lambda_{k^*+1}} \cdot \frac{n}{\rho_{k^*}}}^2 } \cdot \lambda_i (\wb^*_i)^2 \qquad (\text{choose $k=k^*$})\\
      &\ge \frac{1}{c_1b^2} \sum_{i} \frac{1}{ \rbr{ \frac{1}{b} + \frac{\lambda_i}{\lambda_{k^*+1}} \cdot \frac{n}{\rho_{k^*}}}^2 } \cdot \lambda_i (\wb^*_i)^2,
\end{align*}
where $c_1, b > 1$ are all absolute constants.
Note that for all $i \le k^*$, we must have $\lambda + \sum_{j > i-1} \lambda_j < bn \lambda_i$, 
\begin{align*}
    \frac{\lambda_i}{\lambda_{k^*+1}} \cdot \frac{n}{\rho_{k^*}}
    = \frac{\lambda_i n}{\lambda + \sum_{j > k^*} \lambda_j} 
    \ge \frac{\lambda_i n}{\lambda + \sum_{j > i-1} \lambda_j}
    \ge  \frac{1}{b},
\end{align*}
and for all $i \ge k^* + 1$, we have 
\begin{align*}
    \frac{\lambda_i}{\lambda_{k^*+1}} \cdot \frac{n}{\rho_{k^*}}
    \le \frac{n}{\rho_{k^*}} \le \frac{1}{b},
\end{align*}
then 
\begin{align*}
    \EE_{\Xb} [\ridgebias] 
    &\ge \frac{1}{c_1 b^2} \sum_{i} \frac{1}{\rbr{ \frac{1}{b} + \frac{\lambda_i}{\lambda_{k^*+1}} \cdot \frac{n}{\rho_{k^*}}}^2 } \cdot \lambda_i (\wb^*_i)^2 \\
     &\ge \frac{1}{2c_1 b^2} \cdot \rbr{ \sum_{i\le k^*} \frac{1}{ \rbr{ \frac{\lambda_i}{\lambda_{k+1}} \cdot \frac{n}{\rho_k}}^2 } \cdot \lambda_i (\wb^*_i)^2 
     + \sum_{i> k^*} \frac{1}{\rbr{ 1/b}^2 } \cdot \lambda_i (\wb^*_i)^2  } \\
     &\ge \frac{1}{c }\rbr{ \sum_{i\le k^*}  \frac{\rbr{ \lambda_{k+1} \rho_k}^2 }{n^2} \cdot \lambda_i^{-1} (\wb^*_i)^2 
     + \sum_{i> k^*}  \lambda_i (\wb^*_i)^2  } \\
     &= \frac{1}{c }\rbr{ \frac{\lambda+\sum_{i> k^*} \lambda_i}{n^2}\cdot \nbr{\wb^*}_{\Hb_{0:k^*}^{-1}}^2 + \nbr{\wb^*}_{\Hb_{k^*:\infty}}^2 }, 
\end{align*}
where $c>1$ is an absolute constant.
\end{proof}

\end{document}